\title{Exploring Landscapes for Better Minima along Valleys}
\author{
  \vspace{-25pt}\\
  \textbf{Tong Zhao$^{1,2}$ \quad Jiacheng Li$^{1,2}$ \quad Yuanchang Zhou$^{1,2}$ \quad Guangming Tan$^{1,2,*}$ \quad Weile Jia$^{1,2,}$\thanks{Corresponding authors}}\vspace{3pt} \\
  $^1$State Key Lab of Processors, Institute of Computing Technology, Chinese Academy of Sciences \\ $^2$University of Chinese Academy of Sciences.\vspace{3pt} \\
  \texttt{\small \{zhaotong, lijiacheng22s, zhouyuanchang23s, tgm, jiaweile\}@ict.ac.cn}\vspace{8pt}  \\
{\footnotesize 
  \href{https://github.com/zhaotong94/E}{\faGithub~Code}
  \quad
  \href{https://pypi.org/project/e-optimizer-adaptor/}{\faPython~PyPI }
  }
%   \vspace{-4pt}
  }
\begin{document}
\maketitle
\begin{abstract}
Finding lower and better-generalizing minima is crucial for deep learning. However, most existing optimizers stop searching the parameter space once they reach a local minimum. Given the complex geometric properties of the loss landscape, it is difficult to guarantee that such a point is the lowest or provides the best generalization. To address this, we propose an adaptor "E" for gradient-based optimizers. The adapted optimizer tends to continue exploring along landscape valleys (areas with low and nearly identical losses) in order to search for potentially better local minima even after reaching a local minimum. This approach increases the likelihood of finding a lower and flatter local minimum, which is often associated with better generalization. We also provide a proof of convergence for the adapted optimizers in both convex and non-convex scenarios for completeness. Finally, we demonstrate their effectiveness in an important but notoriously difficult training scenario, large-batch training, where Lamb is the benchmark optimizer. Our testing results show that the adapted Lamb, ALTO, increases the test accuracy (generalization) of the current state-of-the-art optimizer by an average of 2.5\% across a variety of large-batch training tasks. This work potentially opens a new research direction in the design of optimization algorithms. 
\end{abstract}
\begin{wrapfigure}[17]{r}{0.6\textwidth}
\centering
\vspace{-0.5cm}%
\hspace{-3mm}%
\begin{tabular}{cc}
\setlength{\tabcolsep}{0pt}
% \hspace{2mm} \small{Square of cardioid.} & \hspace{2mm} \small{Beale function.} \\
\includegraphics[width=0.27\textwidth]{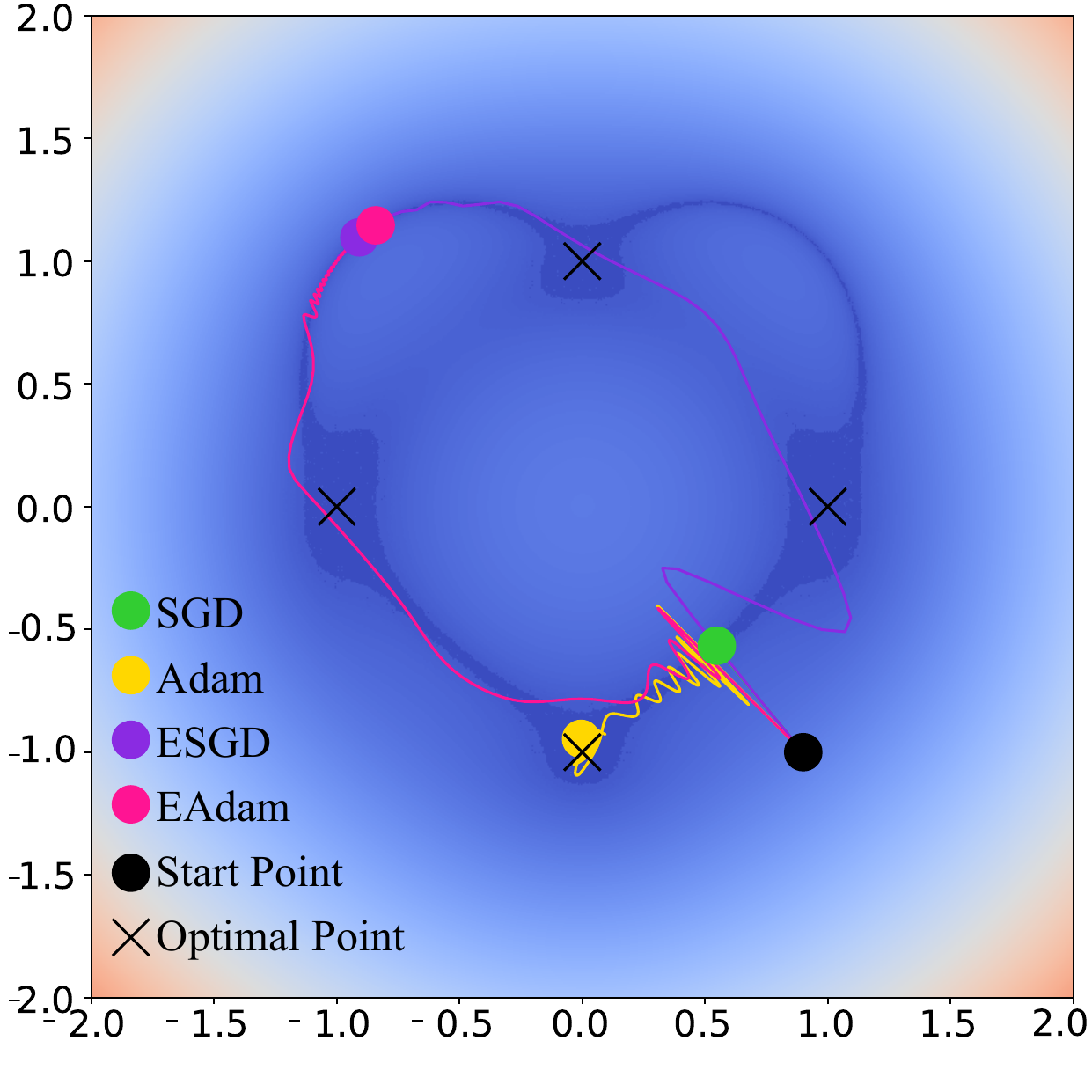} &
\includegraphics[width=0.27\textwidth]{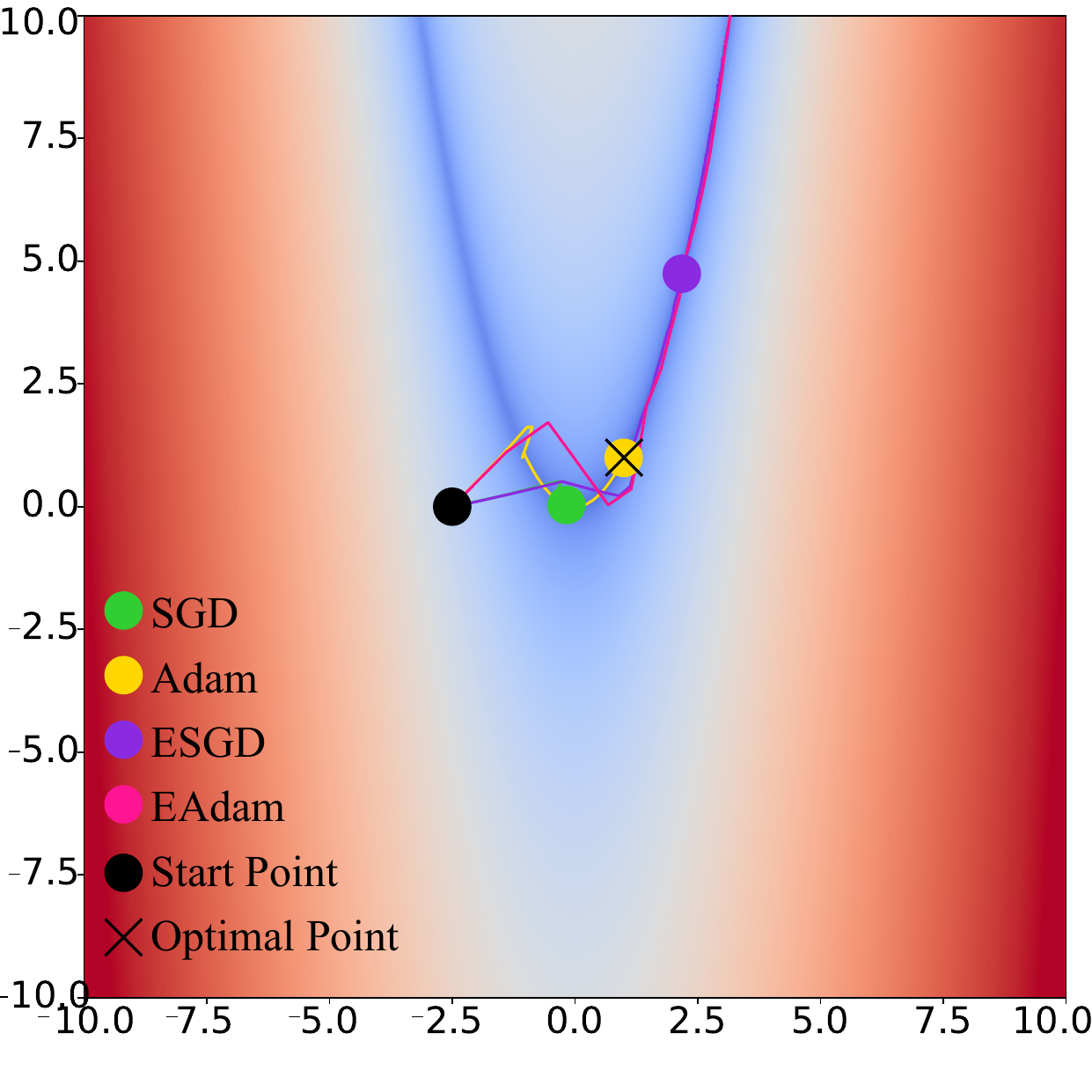}
\end{tabular}
\caption{Comparison between SGD/Adam and ESGD/EAdam on 2D polynomial test functions, which are typically representative of landscapes.
\emph{Left (the square of the cardioid):} The valley forms a cardioid shape, where loss is 0, and special points are marked by cross. Some of these points have flat neighborhoods, which means better generalization. 
\emph{Right (Rosenbrock function):} The valley is parabolic, and the optimum is marked with a cross.}
\label{display}
\end{wrapfigure}

\section{Introduction}
Almost all gradient-based optimizers aim to converge to a local minimum~\cite{locminima1,locminima2} after being trapped by a certain local minimum. \cref{display} illustrates this phenomenon of two most widely used optimizers (e.g. SGD~\cite{sgd} and Adam~\cite{adam}) on typical optimization test functions. However, if an optimizer only relies on local information (e.g. loss function values and their gradients), it can not ensure that the point it finds is the lowest or the one with the best generalization. 

What happens if we modify the optimizer to ensure continued exploration along the valley for potentially lower and flatter minima? We propose a gradient-based optimizer adaptor, "E" (exploration and exploitation), for this purpose. As \cref{display} shows, the adapted SGD (ESGD) and Adam (Eadam) explore along the valley (points with loss near 0). The longer the valley an optimizer explores, the flatter and lower the best minimum in the explored valley is, which is chosen as the solution to optimization.

Before implementing the aforementioned ideas, Section 2 introduces necessary preliminaries and notations and then identifies that modifying traditional gradient-based optimizers for persistent exploration requires simultaneously addressing two fundamental issues: (i) being trapped by large-scale local minima (to ensure valley-following behavior), while (ii) escaping sharp small-scale minima (to maintain exploration capability). Beyond this exploration property, the adapted optimizer demonstrates two additional advantages: accelerated training (fast loss decay) and preferential convergence to flatter local minima. For theoretical completeness, Section 3 provides convergence analysis for both convex and non-convex scenarios. The remaining sections apply the adapted optimizer ALTO to resolve an important and challenging training problem, for which the adapted optimizer is very well-suited.
\begin{figure*}[t] 
\centering
\includegraphics[width=0.85\linewidth]{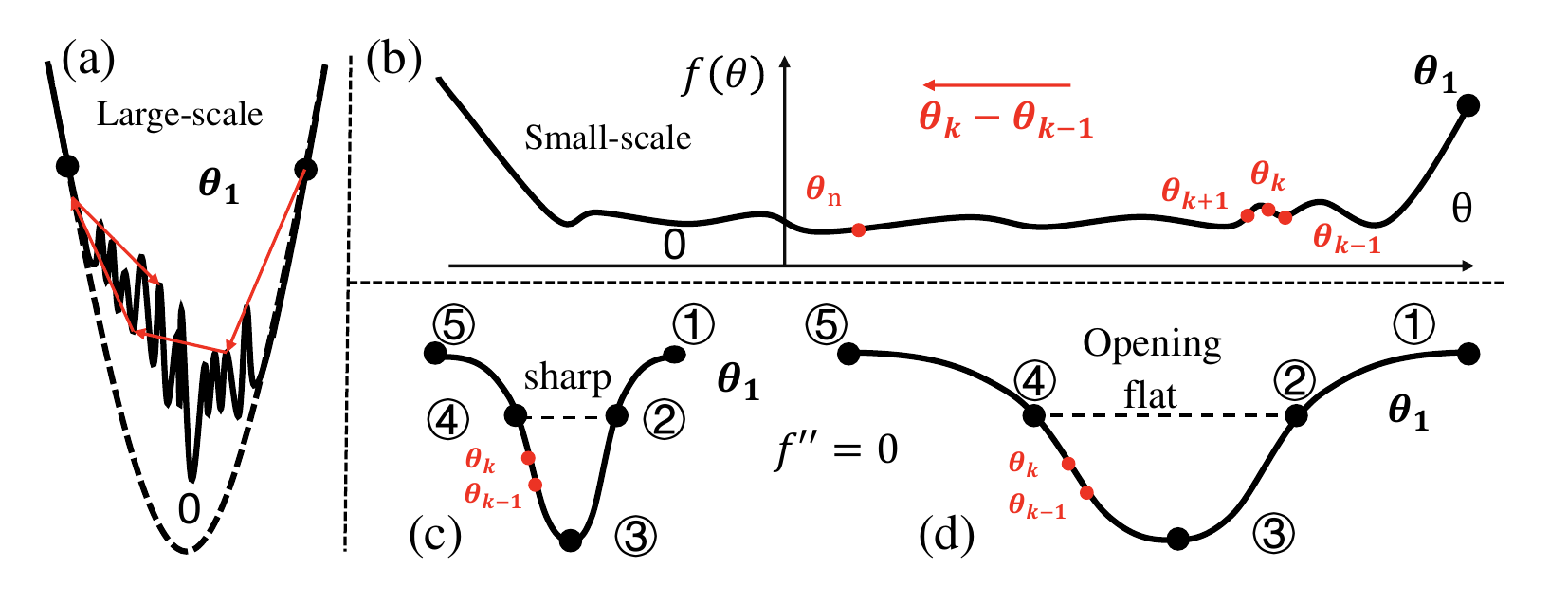}
\caption{(a) is an intersection of valley (large-scale minimum), which captures optimizers. (b) is the enlarged solid line in (a) between two dots and shows the optimizer escaping from small-scale sharp minimum.  $\mathbf{a}_k$ accelerates the training and remembers the direction of the right arrow. Zooming in on point $\boldsymbol{\theta}_k$, we obtain (c) and (d), which show how ALTO handles minimum by analyzing of directions of $\boldsymbol{\theta}_k-\boldsymbol{\theta}_{k-1}$ (or $-\mathbf{\Bar{g}}_k$) and $\mathbf{\Bar{g}}_k-\mathbf{\Bar{g}}_{k-1}$.}
\label{landscape}
\end{figure*}

Our experimental results demonstrate the superior performance of ALTO across various datasets and tasks, such as CV~\cite{lb1,lb2,lb3} and NLP~\cite{lb4,lb5} training, with 3-5 times hyperparameter tuning per task for all optimizers in large batch training. Compared to the current state-of-the-art, ALTO achieves better accuracy in all our 17 CV and NLP experimental tasks and can save $29.68\%$ of computation time on a typical CV task while reaching the same accuracy. In particular, ALTO can achieve better accuracy ($70.83\%$) on ImageNet with batch size 4086 compared to SGD with a batch size of 256 ($70.64\%$), achieve better test perplexity (78.37) than that of Lamb~\cite{lamb} (83.13, SOTA) in GPT-2~\cite{gpt} with batch size 4096, and outperform Lamb in image classication task  (ResNet50, ImageNet with the same setting as its original paper~\cite{lamb}) with batch sizes (1K, 2K, 4K, 8K, 16K, 32K).

\section{Method}
\noindent \textbf{Preliminaries and notations.} Consider the non-convex stochastic optimization problem
\begin{equation*}
f^*:=\min_{\boldsymbol{\theta} \in \mathcal{D}} f(\boldsymbol{\theta}), \quad f(\boldsymbol{\theta}):=\mathbb{E}_{\boldsymbol{\zeta} \sim \mathbb{P}}[\ell(\boldsymbol{\theta}, \boldsymbol{\zeta})],
\end{equation*}
% where we name $f(\boldsymbol{\theta})$ as landscape or the whole landscape.
where we define $f(\boldsymbol{\theta})$ as the landscape~\cite{lsp} or the entire landscape.
To find the best parameter $\boldsymbol{\theta}^*=\argmin_{\boldsymbol{\theta}\in\mathcal{D}}f(\boldsymbol{\theta})$ within a domain $\mathcal{D}\subset\mathbb{R}^d$, optimizers aim to minimize the expectation of the loss function $\ell(\boldsymbol{\theta},\boldsymbol{\zeta})$. This function measures the suitability of the parameter $\boldsymbol{\theta}$ for a sample $\boldsymbol{\zeta}$ drawn from a dataset $\mathcal{Z} \subset \mathbb{R}^s$, subject to a probability distribution $\mathbb{P}$. To navigate the parameter $\boldsymbol{\theta}_k$ towards $\boldsymbol{\theta}^*$ at time step $k$, optimizers usually iteratively update it. For simplicity, consider SGD, which contains the core idea of all gradient-based optimizer for neural network. It guides the parameter to move along the negative gradient direction $\mathbf{g}_{k}:=\frac{1}{\left|\mathcal{Z}_k\right|} \sum_{\boldsymbol{\zeta}_i \in \mathcal{Z}_k} \nabla \ell\left(\boldsymbol{\theta}_k, \boldsymbol{\zeta}_i\right)$ of the landscape $\ell_k(\boldsymbol{\theta}):=\frac{1}{\left|\mathcal{Z}_k\right|} \sum_{\boldsymbol{\zeta}_i \in \mathcal{Z}_k} \ell\left(\boldsymbol{\theta}_k, \boldsymbol{\zeta}_i\right)$ of the batch $\mathcal{Z}_k$ a $\eta_{k}$-length step
\begin{equation}\label{sgd}
\boldsymbol{\theta}_{k}=\boldsymbol{\theta}_{k-1}-\eta_{k}\mathbf{g}_{k},
\end{equation}
where $\mathcal{Z}_k$ is a batch sampled from $\mathcal{Z}$, $\mathbf{g}_{k}$ is an empirical estimator of $\nabla f\left(\boldsymbol{\theta}\right)$, and $\nabla$ denotes computing the gradient with respect to $\theta$. In the rest of the paper, operations like $\mathbf{x}^2$ and $\mathbf{x} / \mathbf{y}$ involving any vectors $\mathbf{x}$ and $\mathbf{y}$ are elementwise, while we denote inner product, $l_2$-norm, and $l_{\infty}$-norm as $\left\langle\cdot,\cdot\right\rangle$, $\|\cdot\|$, and $\|\cdot\|_{\infty}$, respectively. If the activation function is Lipschitz, the value of $f(\boldsymbol{\theta})$ at infinity is bound by a power function. Further, if the activation function is ReLU (due to its positive homogeneity), we obtain the slice of landscape $\tilde{f}(x):=f(\boldsymbol{\theta}+x\boldsymbol{\theta}^{\prime}) \to Cx^c$, where $\boldsymbol{\theta}\in\mathcal{D}$ and $\boldsymbol{\theta}^{\prime}\in\mathbb{S}^{d-1}$ as $x\to \infty$ for some $C,c\geq 0$. For some directions, $c=0$ and $\tilde{f}(x)$ tends to $C$. Therefore, we assume that $\tilde{f}(x)$ is a power function at infinity.

\begin{wraptable}[9]{r}{0.5\linewidth}
\caption{Comparison of the directions of $-\mathbf{g}_{k}$, $-\nabla\|\nabla f(\boldsymbol{\theta}_k)\|^2$, and $ \mathbf{\Bar{g}}_k-\mathbf{\Bar{g}}_{k-1}$ in different stages of \cref{landscape} (c) and (d). ``+'' and ``-'' represent positive and negative, respectively. The sign of $\left\langle\boldsymbol{\theta}_k-\boldsymbol{\theta}_{k-1},\mathbf{\Bar{g}}_k-\mathbf{\Bar{g}}_{k-1}\right\rangle$ indicates whether the optimizer is accelerated (+) or decelerated (-).}
\label{curve}
\centering
\scalebox{0.88}{
\noindent
\begin{tabular}{c|c|c|c|c}
\hline
\rowcolor{tableblue}   \bfseries Direction/Sign   &  \ding{172}\ding{173} &  \ding{173}\ding{174} &  \ding{174}\ding{175}&  \ding{175}\ding{176}\\ \hline
\rowcolor{tablered} $ \boldsymbol{\theta}_k-\boldsymbol{\theta}_{k-1}$               &  -  & -   &     -&-               \\ \hline
$-\mathbf{\Bar{g}}_{k}$                           &  -     &  -       & +&+         \\ \hline
$-\nabla\|\nabla f(\boldsymbol{\theta}_k)\|^2$                     & +       & -       & +&-        \\ \hline
\rowcolor{tablered} $ \mathbf{\Bar{g}}_k-\mathbf{\Bar{g}}_{k-1}$& +&-&-&+      \\ \hline
$\left\langle\boldsymbol{\theta}_k-\boldsymbol{\theta}_{k-1},\mathbf{\Bar{g}}_k-\mathbf{\Bar{g}}_{k-1}\right\rangle$ & -&+&+&-     \\ \hline
\end{tabular}
}
\end{wraptable}

\vspace{-0.1cm}
\noindent \textbf{Design.} Designing an optimizer capable of continuing exploration along valleys rather than stagnating in local minima requires simultaneously addressing two key aspects:
\vspace{-0.15cm}
\begin{enumerate}
\item Macroscopically, the optimizer should be captured by large-scale local minima (to ensure valley-following behavior) as shown in \cref{landscape} (a).\\
\vspace{-0.35cm}
\item Microscopically, it must be able to escape from small-scale local minima (to maintain exploration capability) \cref{landscape} (b).
\end{enumerate}
Almost all traditional gradient-based optimizers address the first aspect, but none of them address the second. Considering $-\nabla\|\nabla f(\boldsymbol{\theta}_k)\|^2$, we find it is similar to $-\mathbf{\Bar{g}}_k:=-\mathbb{E}\mathbf{g}_k$ for optimization but with repulsion from sharp minima. Moving along $-\nabla\|\nabla f(\boldsymbol{\theta}_k)\|^2$ converges to a stable point $\mathbf{\Bar{g}}_k=\boldsymbol{0}$ and tends to escape the sharp local minimum. The stable point may not be a local minimum, but it is usually flat. The reason hides in 
\begin{align*}
    -\nabla\|\nabla f(\boldsymbol{\theta}_k)\|^2=-2\mathbf{H}_k\mathbf{\Bar{g}}_k,
\end{align*}
where $\mathbf{H}_k$ is the Hessian matrix of $f$ at parameter $\boldsymbol{\theta}_k$. The only difference between $-\mathbf{\Bar{g}}_k$ and $-\nabla\|\nabla f(\boldsymbol{\theta}_k)\|^2$ is the Hessian matrix $\mathbf{H}_k$. The larger $\|\mathbf{H}_k\|$ is, the sharper the minimum becomes. $-\mathbf{H}_k\mathbf{\Bar{g}}_k$ enlarges the stepsize around a sharp minimum, which is helpful for escaping sharp minimum. In contrast, small $-\mathbf{H}_k$ means flat minimum, and $-\mathbf{H}_k\mathbf{\Bar{g}}_k$ is helpful for convergence to the flat minimum. Therefore, combining $-\mathbf{\Bar{g}}_k$ with $-\mathbf{H}_k\mathbf{\Bar{g}}_k$ in an optimizer should converge to a flatter minimum than using only $-\mathbf{\Bar{g}}_k$. However, directly calculating this direction is not affordable.
Fortunately, we have
\begin{align*}
    -\mathbf{\Bar{g}}_k\approx\boldsymbol{\theta}_k-\boldsymbol{\theta}_{k-1},
\end{align*}
which holds with the positive constant omitted, and then we have
\begin{align}\label{eq:approx}
    -\nabla\|\nabla f(\boldsymbol{\theta}_k)\|^2\approx\mathbf{H}_k(\boldsymbol{\theta}_k-\boldsymbol{\theta}_{k-1})\approx\mathbf{\Bar{g}}_k-\mathbf{\Bar{g}}_{k-1}.
\end{align}
In fact, $\mathbf{\Bar{g}}_k-\mathbf{\Bar{g}}_{k-1}$ is a better choice for escaping sharp minima than $-\nabla\|\nabla f(\boldsymbol{\theta}_k)\|^2\approx\mathbf{H}_k(\boldsymbol{\theta}_k-\boldsymbol{\theta}_{k-1})$. \cref{landscape} (c) and (d) shows the 4 different stages when an optimizer approximates a sharp minimum and flat minimum, respectively. Without loss of generality, we assume that all directions of $\boldsymbol{\theta}_k-\boldsymbol{\theta}_{k-1}$ are negative in the different stages. From \cref{curve}, we find that the inner product between $ \boldsymbol{\theta}_k-\boldsymbol{\theta}_{k-1}$ and some direction should be positive in stage \ding{173}\ding{174} and \ding{174}\ding{175} for accelerating the optimizer to escape the sharp minimum. $\mathbf{\Bar{g}}_k-\mathbf{\Bar{g}}_{k-1}$ is the only direction satisfying the condition. Considering gradient-based optimizers updating parameters along $-\mathbf{g}_k$, we should correct this to $-\mathbf{g}_k+\alpha^{\prime}(\mathbf{g}_{k}-\mathbf{g}_{k-1})$ for escaping local minima and then exploring landscape where $\alpha^{\prime}>0$. The adapted optimizer is more likely to be captured by a flat minimum. If the minimum is sharp, the optimizer only needs to pass through a small opening with large $\mathbf{\Bar{g}}_k$ and $\mathbf{\Bar{g}}_k-\mathbf{\Bar{g}}_{k-1}$ which means a large step size and thus makes it easier easy to escape. Conversely, if the minimum is flat, meaning a large opening but small $\mathbf{\Bar{g}}_k$ and $\mathbf{\Bar{g}}_k-\mathbf{\Bar{g}}_{k-1}$, the optimizer will be captured.
For stability and noticing more informative gradients at early stage of training, we replace $\mathbf{g}_{k}-\mathbf{g}_{k-1}$ with their exponential moving averages (EMA)~\cite{ema}.

\begin{onebox}{Main Idea (just for emphasis, connected to the context)}
Therefore, we propose a gradient-based optimizer adaptor that can adapt any gradient-based optimizer for exploring better minima along valleys via a simple replacement: 
\begin{align}\label{eq:acc}
    \color{MyDarkRed}\mathbf{g}_k+\alpha\mathbf{a}_k\rightarrow\mathbf{g}_{k}, \text{where} \; \mathbf{a}_k=\beta_1\mathbf{a}_{k-1}+\left(1-\beta_1\right)(\mathbf{g}_{k}-\mathbf{g}_{k-1}), \; \alpha=-\alpha^{\prime}.
\end{align} The term $\mathbf{g}_{k}$ decides whether the optimizer is captured by large-scale local minima. For $\alpha\mathbf{a}_k$, if $\alpha<0$, it helps with escaping small-scale local minima and thus exploring the landscape; if $\alpha>0$, it helps with exploiting (accelerating convergence to) the aforementioned large-scale local minima. $|\alpha|$ represents the intensity of exploration and exploitation. $\beta_1$ reflects the persistence of the memory of the gradients-decaying-most directions. Based on \eqref{eq:approx}, \eqref{eq:acc}, and the preceding analysis, the advantages and limitations of the two cases can be inferred as follows (\cref{flat}):
\begin{enumerate}
    \item $\alpha<0$ converges slowly (exploring large parameter space) but tends to flat minima. 
    \item $\alpha>0$ converges fast (exploring small parameter space) but tends to sharp minima.
\end{enumerate}
Since our motivation is to adapt an optimizer for exploring large parameter spaces and finding flat minima, we mainly focus on $\alpha<0$, which is very suitable for large-batch training. In fact, choosing $\alpha>0$ usually, but not consistently, leads to some marginal advantages (accuracy, fast convergence) in the small-batch training case, but the corresponding minimum is consistently sharp.
% \end{mdframed}
\end{onebox}
Taking SGD, Adam, biased-Lamb, and Lamb~\cite{lamb} as examples, we adapt them into ESGD (\cref{esgd}), EAdam (\cref{eadam}), ALTO Vanilla (\cref{altov}), and ALTO (\cref{alto}), respectively. Similar to Lamb, we also employ layerwise regularization~\cite{lamb} based on the integration with Adam, and obtain \cref{alto}, where $\boldsymbol{\theta}^{(i)}\in \mathbb{R}^{d_i}$ is the parameter corresponding to the $i$-th layer, with $i\in[h]:=\{1,2, \cdots, h\}$, $\sum_{i=1}^{h}d_i=d$. $\phi$ is a zero-proof function such that $\phi(x)\sim x \geq \varepsilon_3$, usually taken as $\phi(x):=x+\varepsilon_3$. All $\varepsilon$s are very small zero-proof term, and $\lambda_k$ is a weight decay term for parameter regularization. 

\begin{minipage}[t]{0.5\textwidth} 
\vspace{-2em}
\begin{algorithm}[H]
\caption{ALTO Vanilla}
\label{altov}
\begin{algorithmic}[1]
\Require initialize $\boldsymbol{\theta}_1$, learning rate $\eta_k$, EMA factors $\left(\beta_1, \beta_2, \beta_3\right) \in[0,1)^3$, stable parameter $\varepsilon_1,\varepsilon_2>0$, weight decay $\lambda_k>0$, acceleration factor $|\alpha|<1/(1-\beta_1)$, $\mathbf{a}_0=\mathbf{0}$, $\mathbf{m}_0=\mathbf{0}$, $\mathbf{v}_0=0$, and $\mathbf{g}_0=\mathbf{0}$.
\Ensure $\left\{\boldsymbol{\theta}_k\right\}_{k=1}^T$.
\While{$k<T$}
\State $\mathbf{g}_k=\frac{1}{\left|\mathcal{Z}_k\right|} \sum_{\boldsymbol{\zeta}_i \in \mathcal{Z}_k} \nabla \ell\left(\boldsymbol{\theta}_k, \boldsymbol{\zeta}_i\right)$;
\State$\color{MyDarkRed}\mathbf{a}_k=\beta_1\mathbf{a}_{k-1}+\left(1-\beta_1\right)(\mathbf{g}_{k}-\mathbf{g}_{k-1})$
\State $\mathbf{m}_k= \beta_2\mathbf{m}_{k-1}+\left(1-\beta_2\right)({\color{MyDarkRed}\mathbf{g}_k+\alpha\mathbf{a}_k});$
\State$\mathbf{v}_k= \beta_3\mathbf{v}_{k-1}+\left(1-\beta_3\right)\left[{\color{MyDarkRed}\mathbf{g}_k+\alpha\mathbf{a}_k}\right]^2$;
\State$\boldsymbol{r}_k=\mathbf{m}_k/\left(\sqrt{\mathbf{v}_k}+\varepsilon_1\right)+\lambda\boldsymbol{\theta}_k$
\State$\boldsymbol{\theta}^{(i)}_{k+1}=\boldsymbol{\theta}^{(i)}_k-\frac{\eta_k\mathbf{r}^{(i)}_k\phi\left(\|\boldsymbol{\theta}^{(i)}_k\|\right)}{\left(\|\mathbf{r}^{(i)}_k\|+\varepsilon_2\phi\left(\|\boldsymbol{\theta}^{(i)}_k\|\right)\right)}$
\EndWhile
\end{algorithmic}
\end{algorithm}
\end{minipage}
\begin{minipage}[t]{0.5\textwidth} 
\vspace{-2em}
\begin{algorithm}[H]
\caption{ALTO}\label{alto}
\begin{algorithmic}[1]
\Require the same as that of \cref{altov}
\Ensure $\left\{\boldsymbol{\theta}_k\right\}_{k=1}^T$.
\While{$k<T$}
\State $\mathbf{g}_k=\frac{1}{\left|\mathcal{Z}_k\right|} \sum_{\boldsymbol{\zeta}_i \in \mathcal{Z}_k} \nabla \ell\left(\boldsymbol{\theta}_k, \boldsymbol{\zeta}_i\right)$;
\State$\color{MyDarkRed}\mathbf{a}_k=\beta_1\mathbf{a}_{k-1}+\left(1-\beta_1\right)(\mathbf{g}_{k}-\mathbf{g}_{k-1});$\label{sign}
\State$\mathbf{m}_k= \beta_2\mathbf{m}_{k-1}+\left(1-\beta_2\right)({\color{MyDarkRed}\mathbf{g}_k+\alpha\mathbf{a}_k});$
\State$\mathbf{v}_k= \beta_3\mathbf{v}_{k-1}+\left(1-\beta_3\right)\left[{\color{MyDarkRed}\mathbf{g}_k+\alpha\mathbf{a}_k}\right]^2$;
\State$\hat{\mathbf{m}}_k= \mathbf{m}_{k}/\left(1-\beta_2^k\right);$
\State$\hat{\mathbf{v}}_k= \mathbf{v}_{k}/\left(1-\beta_3^k\right);$
\State$\boldsymbol{r}_k=\hat{\mathbf{m}}_k/\left(\sqrt{\hat{\mathbf{v}}_k}+\varepsilon_1\right)+\lambda_k\boldsymbol{\theta}_k$
\State$\boldsymbol{\theta}^{(i)}_{k+1}=\boldsymbol{\theta}^{(i)}_k-\frac{\eta_k\mathbf{r}^{(i)}_k\phi\left(\|\boldsymbol{\theta}^{(i)}_k\|\right)}{\left(\|\mathbf{r}^{(i)}_k\|+\varepsilon_2\phi\left(\|\boldsymbol{\theta}^{(i)}_k\|\right)\right)}$
\EndWhile
\end{algorithmic}
\end{algorithm}
\end{minipage}

\begin{figure}[htb!]
    \centering
    \includegraphics[width=0.325\linewidth]{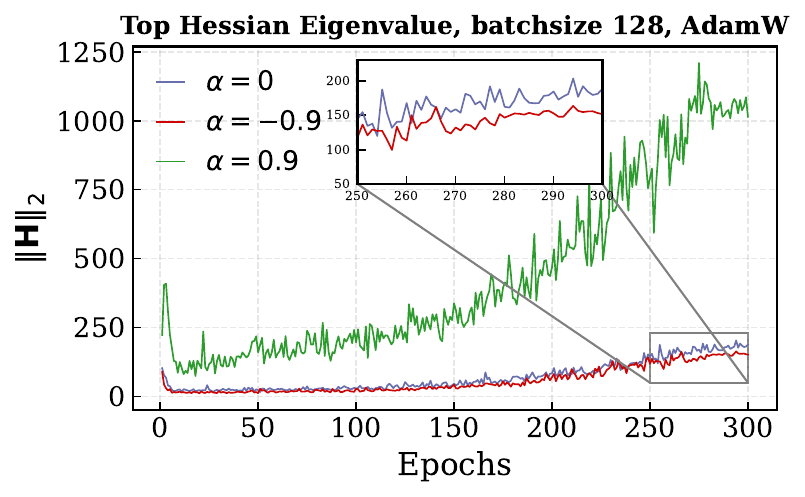}\hfill
    \includegraphics[width=0.325\linewidth]{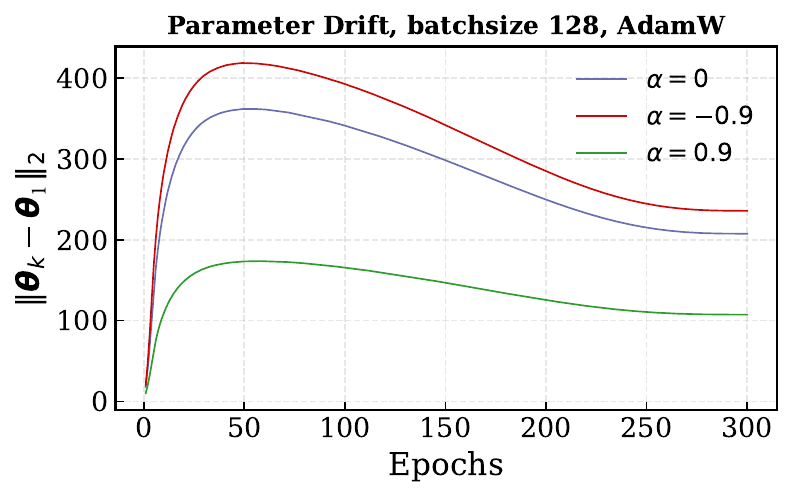}\hfill
    \includegraphics[width=0.325\linewidth]{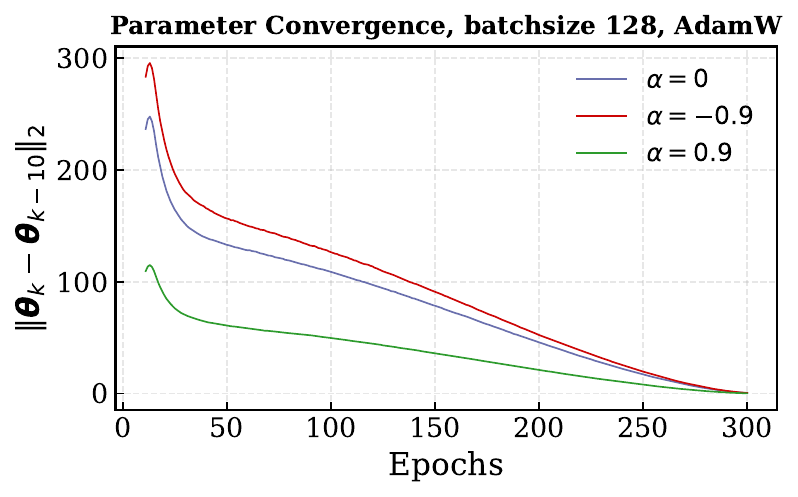}
    \includegraphics[width=0.325\linewidth]{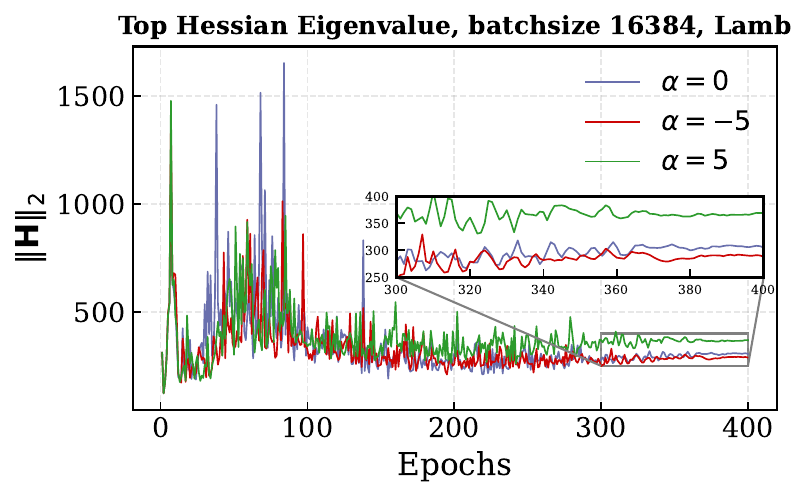}\hfill
    \includegraphics[width=0.325\linewidth]{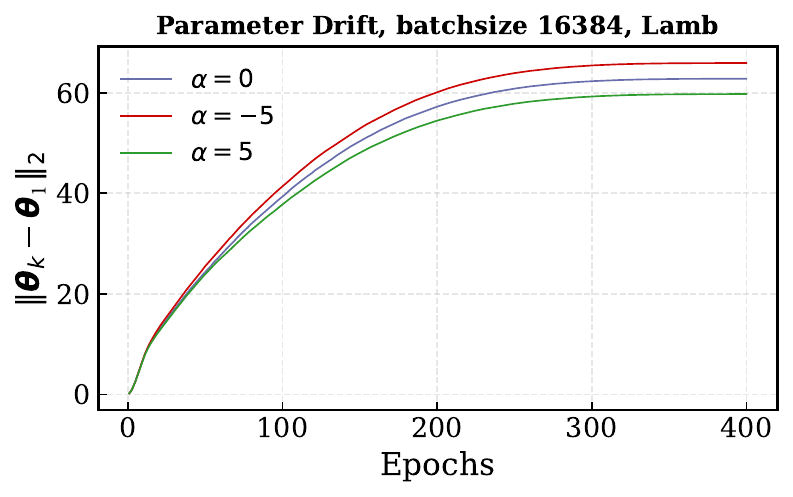}\hfill
    \includegraphics[width=0.325\linewidth]{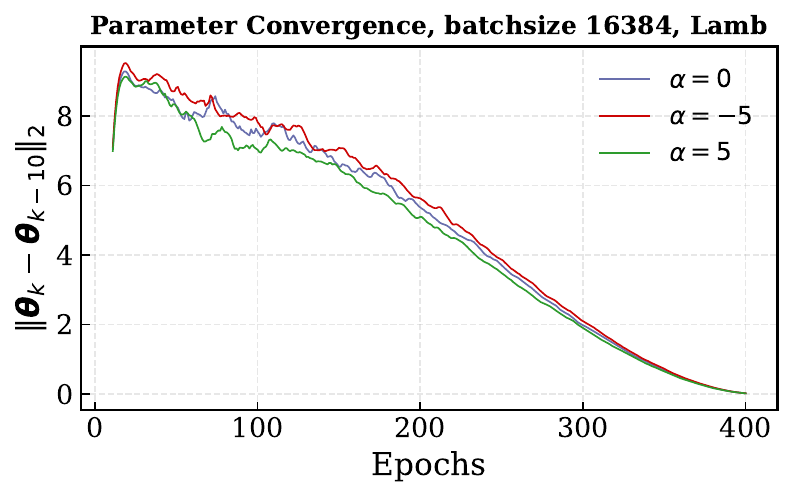}
    \caption{The variances of $\|\mathbf{H}\|_2$ (the top eigenvalue of the Hessian matrix), $\|\boldsymbol{\theta}_k-\boldsymbol{\theta}_0\|$ (parameter drift), and $\|\boldsymbol{\theta}_k-\boldsymbol{\theta}_{k-10}\|$ (parameter convergence) during the pretraining ResNet20 on CIFAR100 with AdamW ($\left|\mathcal{Z}_k\right|=128$), Lamb ($\left|\mathcal{Z}_k\right|=16384$) and their corresponding adapted optimizers with different values of $\alpha$.}
    \label{hessian}
    \vspace{-1em}
\end{figure}

\noindent \textbf{Why is large-batch training important, what challenges does it pose, and how can our adaptor help?} As data scales up and GPU computing power increases, enlarging the batch size is the most direct way to fully utilize as many GPUs as possible (data parallelism) and then to accelerate pretraining. However, this acceleration occurs under the condition that the total number of training epochs remains almost unchanged. This condition means large-batch training involves far fewer parameter updates than small-batch training but expects a nearly the same test accuracy(for more details, see~\cref{clbt}). To overcome this challenge, \citet{lsl1, lsl2} proposed the linear scaling rule ($\eta_k\propto\left|\mathcal{Z}_k\right|$) and the square root scaling rule ($\eta_k\propto\sqrt{\left|\mathcal{Z}_k\right|}$). These rules are effective when batch-size is not large enough. As the batch size increases to infinity, learning rate becomes bounded~\cite{lbtopenai} by a task-specific critical value determined by the geometry of the landscape. Otherwise, the training will explode. With the same learning rate, the exploratory nature of the adapted optimizer provides it a larger effective learning rate (faster training). Its underlying principle is using remembered informative gradient information at early training stage (the larger the $\beta_1$, the longer the memory) as a guidance during later training stage where gradient information is flooded with noise. Meanwhile, the adapted optimizer is more likely to be captured by flatter minima than the original optimizer. Since the fluctuations at the bottom of the landscape are relatively small, the improvement in generalization may not be substantial, but it is stable.

\noindent \textbf{Why is $\mathbf{a}_k$ incorporated into $\mathbf{g}_k$ rather than $\mathbf{m}_k$?} Incorporating $\mathbf{a}_k$ into $\mathbf{m}_k$ suggests that $\mathbf{g}_k-\mathbf{g}_{k-1}$ is on the same scale as $-\mathbf{g}_k$, which causes the optimizer to either vibrate violently or have no effect. Incorporating it into $\mathbf{g}_k$ means the EMA of the EMA of $\mathbf{g}_k-\mathbf{g}_{k-1}$ in the final momentum, whereas incorporating into $\mathbf{g}_k$ means a EMA of $\mathbf{g}_k-\mathbf{g}_{k-1}$ in final momentum. These two are intrinsically different, we can not identify the EMA with the EMA of EMA just by adjusting $\beta$. In fact, if we define EMA$_{k}$ as the EMA of EMA$_{k-1}$ with common coefficient $\beta$ for any $k$, the weight of $\mathbf{g}_i-\mathbf{g}_{i-1}$ in EMA$_{n}$ at time step $k$ is 
\begin{align*}
    (1-\beta)^n\beta^{k-i}\binom{k-i+n-1}{n-1}.
\end{align*}
The larger the n is, the more stable the EMA$_{n}$ becomes. We use $n=2$ for affordable computational cost. This means \cref{alto} moves along the EMA$_{2}$ of $\mathbf{g}_k-\mathbf{g}_{k-1}$ near the bottom of valley. The EMA$_{2}$ of $\mathbf{g}_k-\mathbf{g}_{k-1}$ heavily depends on the early stage of training where gradients decay very fast.
 
\begin{remark}\label{flat}
In order to verify the advantages and limitations of positive and negative $\alpha$, we adapt AdamW and Lamb ($\alpha=0$ Lamb, $\alpha=-5$ ALTO) as examples for the small and large batch cases respectively. \cref{hessian} illustrates that, compared to adapted optimizers with $\alpha>0$, those with $\alpha<0$ are more likely to find flatter minima in larger areas but converge slowly. 
\end{remark}

\begin{remark} \label{rmk2}
On the constraint $|\alpha|<1/(1-\beta_1)$. If we replace $k$ with $t$, set $\mathbb{D}=\frac{d}{dt}, \eta_t=\eta\approx\Delta t$, and treat $\boldsymbol{\theta}$ as a 1-dimensional dynamical system updating via $\mathbf{m}$ without considering $\mathbf{v}$ and layer regularization, we have
\begin{align}
\dot{\theta}=\frac{\theta_t-\theta_{t-1}}{\Delta t}=-m, \quad a=\frac{1}{\alpha}(\frac{\eta}{1-\beta_2}\dot{m}+m-g), \quad \dot{g}=\frac{\dot{a}}{1-\beta_1}+\frac{a}{\eta}. \label{dynmc1}
\end{align}
According to $g=f^{\prime}, \dot{g}=f^{\prime\prime}\dot{\theta}$ and \cref{dynmc1}, we get the differential equation that $\theta$ satisfies:
\begin{align}\label{dynmc}
&\left[(\eta \mathbb{D}\!+\!1\!-\!\beta_1)(\eta \mathbb{D}\!+\!1\!-\!\beta_2)+(1\!-\!\beta_2)\eta f^{\prime\prime}(1\!+\!(1\!-\!\beta_1)\alpha)\right]\dot{\theta}\notag\\&+(1-\beta_1)(1-\beta_2)f^{\prime}=0.
\end{align}
If we regard $f^{\prime}$ and $f^{\prime\prime}$ as constant in \cref{dynmc}, it becomes a linear ordinary differential equation. For stability and convergence, the real parts of the system's eigenvalues must be negative. Therefore, 
\begin{align*}
-(\frac{1}{1-\beta_1}+\frac{1}{\eta f_1})<\alpha<-(\frac{1}{1-\beta_1}+\frac{1}{\eta f_2}),
\end{align*}
where $f_1= \max\{\max_{\theta}f^{\prime\prime}, 0_{+}\}, f_2= \min\{\min_{\theta}f^{\prime\prime}, 0_{-}\}$. For simplicity, we use $|\alpha|<1/(1-\beta_1)$.
\end{remark}

\section{Algorithm Convergence Analysis}
Prior to conducting convergence analysis for both non-convex~\cite{cvgncv} and convex~\cite{cvgadam} cases, we require some common assumptions that are widely used in related works~\cite{adabelief, minsgd, dissect}.

\begin{minipage}{0.5\textwidth}
\begin{assumption}[$L$-smoothness]\label{ls}
The function $\ell(\boldsymbol{\theta},\boldsymbol{\zeta})$ is $L$-smooth if and only if there exists a constant $L$ such that
\begin{equation*}
\left\|\nabla \ell(\boldsymbol{\theta}, \boldsymbol{\zeta})-\nabla \ell\left(\boldsymbol{\theta}+\boldsymbol{\delta} , \boldsymbol{\zeta}\right)\right\| \leq L\|\boldsymbol{\delta} \|
\end{equation*}
for all $\boldsymbol{\theta} \in \mathbb{R}^d, \boldsymbol{\delta} \in \mathbb{R}^{d}$ and $\boldsymbol{\zeta} \in \mathcal{Z}$. 
\end{assumption}
\begin{assumption}[Unbiased, independent, and variance-bounded stochastic gradient]\label{uiv} We assume:
\begin{enumerate}[leftmargin=15pt]
\item$\forall k\in\mathbb{N},\quad \mathbf{\Bar{g}}_k:=\mathbb{E}\mathbf{g}_k=\nabla\mathbb{E}\left[ \ell\left(\boldsymbol{\theta}_k, \boldsymbol{\zeta}\right)\right]$
\item $\forall k\neq t\in\mathbb{N}, \mathbf{g}_k$ and $\mathbf{g}_t$ are independent
\item $\mathbb{E}\left\|\nabla\ell(\boldsymbol{\theta}, \boldsymbol{\zeta})-\nabla\mathbb{E}\ell(\boldsymbol{\theta},\boldsymbol{\zeta})\right\|^2 \leq \sigma^2, \forall \boldsymbol{\theta} \in \mathbb{R}^d$.
\end{enumerate}
\end{assumption}
\begin{assumption}[Bounded gradient]\label{bg}We suppose that at any time step $k$, gradients $\mathbf{g}_k$ are bounded i.e., $\|\mathbf{g}_k\|_{\infty}\leq G_{\infty}/16$. 
\end{assumption}    
\end{minipage}
\begin{minipage}{0.5\textwidth}
\begin{assumption}[Bounded parameter]\label{bp}
We suppose that at any time step $k$, parameter $\boldsymbol{\theta}_k$ is bounded i.e., $\|\boldsymbol{\theta}_k\|\leq D$ and $\|\boldsymbol{\theta}_k\|_{\infty}\leq D_{\infty}$. 
\end{assumption}
\begin{assumption}[Monotonicity]\label{mono} We assume $\forall i\in[d],$
\begin{equation*}
\frac{\sqrt{k}\left(\sqrt{\mathbf{v}_{k,i}}+\varepsilon_1\right)\left(\left\|\mathbf{r}_k^{(i)}\right\|+\varepsilon_2\phi\left(\left\|\boldsymbol{\theta}_k^{(i)}\right\|\right)\right)}{\phi\left(\left\|\boldsymbol{\theta}_k^{(i)}\right\|\right)}
\end{equation*}
increases monotonically with respect to $k$.
\end{assumption}

\begin{assumption}[Convexity]\label{convex}Assuming $\ell(\cdot,\boldsymbol{\zeta})$ is convex meaning $\forall \boldsymbol{\zeta}\in\mathbb{R}^s, \boldsymbol{\theta},\boldsymbol{\theta}^{\prime}\in \mathbb{R}^d,\gamma \in [0,1]$, we have
\begin{equation*}
\gamma\ell\left(\boldsymbol{\theta}\!,\boldsymbol{\zeta}\right)+\left(1-\gamma\right)\!\ell\left(\boldsymbol{\theta}^{\prime}\!,\boldsymbol{\zeta}\right)\kern-0.5em\geq\!\ell\left(\gamma\boldsymbol{\theta}\!+\!\left(1-\gamma\right)\boldsymbol{\theta}^{\prime}\!,\boldsymbol{\zeta}\right).
\end{equation*} 
\end{assumption} 
\end{minipage}

\begin{remark}
\cref{ls,uiv,bg} are common for the analysis of stochastic first-order methods in non-convex~\cite{ncvc1,ncvc2,ncvc3}. \cref{bg,bp,mono} (or its analogue) and \cref{convex} usually appear in the analysis of convex optimization convergence~\cite{cvgadam,adabelief}. Except for a bounded factor, \cref{mono} and its analogue are common for convex case convergence analysis~\cite{adabelief, bc2}. If without it, the proof can not be finished. In fact, the optimizers in above literature and ours, also violating this kind of condition, can eventually converge during training.
\end{remark}
\begin{theorem}
If \cref{ls,uiv,bg}, $\mu=\frac{\sqrt{1-\beta_3} G_{\infty}}{\varepsilon_1}\leq 1, \mathcal{Z}_k=b=\mathcal{O}\left(G_{\infty}\epsilon^{-2}\right), \lambda_k=\lambda(1-\mu)^k, \eta_k^2=\eta^2\leq \frac{ \varepsilon_1^3\hat{\varepsilon}_2^4\varepsilon_3^2\left(1-\beta_2\right)^2}{6 d L^2 G_{\infty}}$ and $T\geq\mathcal{O}\left(G_{\infty}^{1.5} \epsilon^{-2}\right)$ hold, \cref{altov} satisfies:
\begin{equation*}
\frac{1}{T+1} \sum_{k=0}^T\mathbb{E}\left(\left\|\nabla f_k\left(\boldsymbol{\theta}_k\right)\right\|^2\right) \leq 4 \epsilon^2,
\end{equation*}
where 
\begin{equation*}
f_k(\boldsymbol{\theta}):=\mathbb{E}_{\boldsymbol{\boldsymbol{\zeta}}}[\ell(\boldsymbol{\theta}, \boldsymbol{\boldsymbol{\zeta}})]+\frac{\lambda_k}{2}\|\boldsymbol{\theta}\|_{\sqrt{\mathbf{v}_k}}^2,\quad\|\boldsymbol{\theta}\|_{\sqrt{\mathbf{v}_k}}^2:=\left\langle\boldsymbol{\theta},\left(\sqrt{\mathbf{v}_k}+\varepsilon_1\right) \boldsymbol{\theta}\right\rangle.
\end{equation*}
If the above conditions, \cref{bp}, and $T\geq\mathcal{O}\left(G_{\infty}^{2}D\epsilon^{-2}\right)$ holds, \cref{altov} satisfies:
\begin{equation*}
\frac{1}{T+1} \sum_{k=0}^T\mathbb{E}\left(\left\|\nabla f\left(\boldsymbol{\theta}_k\right)\right\|^2\right) \leq 6 \epsilon^2.
\end{equation*}
\end{theorem}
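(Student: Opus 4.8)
The plan is to run the standard descent-plus-telescoping scheme for non-convex Adam-type analysis, while absorbing the three features that make \cref{altov} non-standard: the adaptor increment $\alpha\mathbf{a}_k$, the adaptive preconditioner $\sqrt{\mathbf{v}_k}+\varepsilon_1$ (which appears both in the update and inside the regularized objective $f_k$), and the layerwise trust-ratio normalization. Throughout I would treat $\tilde{\mathbf{g}}_k:=\mathbf{g}_k+\alpha\mathbf{a}_k$ as an effective stochastic gradient. The key preliminary observation is that $\mathbf{a}_k$, being an EMA of the differences $\mathbf{g}_k-\mathbf{g}_{k-1}$ started from $\mathbf{a}_0=\mathbf{0}$, inherits the $\ell_\infty$ bound $\|\mathbf{a}_k\|_\infty\le G_\infty/8$ from \cref{bg}, so that $\|\tilde{\mathbf{g}}_k\|_\infty=\mathcal{O}(G_\infty)$ uniformly. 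More importantly, unrolling the EMA and swapping the order of summation gives $\sum_{k=1}^{T}\mathbf{a}_k=(\mathbf{g}_T-\mathbf{g}_0)-\sum_{j=1}^{T}\beta_1^{T-j+1}(\mathbf{g}_j-\mathbf{g}_{j-1})$, both pieces of which are $\mathcal{O}(G_\infty/(1-\beta_1))$ in $\ell_\infty$. Hence the accumulated adaptor contribution is $\mathcal{O}(1)$ rather than $\mathcal{O}(T)$, and its average bias vanishes as $\mathcal{O}(1/T)$; this is precisely what prevents the adaptor from corrupting the stationarity guarantee.

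\textbf{First bound (regularized objective).} I would start from the $L$-smooth descent inequality for $f_k$, writing the per-layer update as $\boldsymbol{\theta}^{(i)}_{k+1}-\boldsymbol{\theta}^{(i)}_k=-\eta_k^{(i)}\mathbf{r}^{(i)}_k$ with effective stepsize $\eta_k^{(i)}=\eta_k\phi(\|\boldsymbol{\theta}^{(i)}_k\|)/(\|\mathbf{r}^{(i)}_k\|+\varepsilon_2\phi(\|\boldsymbol{\theta}^{(i)}_k\|))$. The trust ratio is sandwiched between $0$ and $1/\varepsilon_2$, which lets me bound the normalization from above and below by constants depending only on $\varepsilon_1,\varepsilon_2,\varepsilon_3$ and the uniform bounds. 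Taking conditional expectations and using \cref{uiv}, the leading term $\langle\nabla f_k(\boldsymbol{\theta}_k),\boldsymbol{\theta}_{k+1}-\boldsymbol{\theta}_k\rangle$ reduces to a negative multiple of $\|\nabla f_k(\boldsymbol{\theta}_k)\|^2$ plus error terms coming from (i) the momentum lag between $\mathbf{m}_k$ and $\tilde{\mathbf{g}}_k$, controlled by the $\beta_2$-EMA geometry; (ii) the stochastic variance, controlled by $\sigma^2/b$ under the batch choice $b=\mathcal{O}(G_\infty\epsilon^{-2})$; (iii) the adaptor bias $\alpha\,\mathbb{E}[\mathbf{a}_k]$, handled by the telescoping remark above; and (iv) the quadratic remainder $\tfrac{L}{2}\|\boldsymbol{\theta}_{k+1}-\boldsymbol{\theta}_k\|^2=\mathcal{O}(\eta^2)$, absorbed by the stepsize condition $\eta^2\le \varepsilon_1^3\hat{\varepsilon}_2^4\varepsilon_3^2(1-\beta_2)^2/(6dL^2G_\infty)$. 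I would also track the cross-step change $f_{k+1}-f_k$, which only involves $\lambda_{k+1}\|\boldsymbol{\theta}_{k+1}\|^2_{\sqrt{\mathbf{v}_{k+1}}}-\lambda_k\|\boldsymbol{\theta}_k\|^2_{\sqrt{\mathbf{v}_k}}$; the geometric schedule $\lambda_k=\lambda(1-\mu)^k$ makes this summable. Summing over $k=0,\dots,T$, telescoping the $f_k$ values, dividing by $T+1$, and invoking $T\ge\mathcal{O}(G_\infty^{1.5}\epsilon^{-2})$ then collects every error term under $4\epsilon^2$.

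\textbf{Second bound (original objective).} Here I would use the exact identity $\nabla f_k(\boldsymbol{\theta})=\nabla f(\boldsymbol{\theta})+\lambda_k(\sqrt{\mathbf{v}_k}+\varepsilon_1)\boldsymbol{\theta}$, so $\nabla f(\boldsymbol{\theta}_k)$ differs from $\nabla f_k(\boldsymbol{\theta}_k)$ only by the regularizer gradient $\lambda_k(\sqrt{\mathbf{v}_k}+\varepsilon_1)\boldsymbol{\theta}_k$. By \cref{bg} the entries of $\sqrt{\mathbf{v}_k}$ are $\mathcal{O}(G_\infty)$, and by the added \cref{bp} we have $\|\boldsymbol{\theta}_k\|\le D$, so this correction has squared norm $\mathcal{O}(\lambda_k^2 G_\infty^2 D^2)$. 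Applying a weighted Young inequality $\|\nabla f(\boldsymbol{\theta}_k)\|^2\le(1+\rho)\|\nabla f_k(\boldsymbol{\theta}_k)\|^2+(1+\rho^{-1})\lambda_k^2\|(\sqrt{\mathbf{v}_k}+\varepsilon_1)\boldsymbol{\theta}_k\|^2$ with a small splitting parameter $\rho$ keeps the first term near $4\epsilon^2$ (e.g. $\rho=\tfrac14$ gives $5\epsilon^2$); because $\lambda_k$ decays geometrically, $\sum_{k=0}^{T}\lambda_k^2$ is bounded independently of $T$, so the averaged correction is $\mathcal{O}(G_\infty^2 D\,\epsilon^{-2}/(T+1))$, which the hypothesis $T\ge\mathcal{O}(G_\infty^2 D\epsilon^{-2})$ pushes below the remaining $\epsilon^2$ budget, totalling $6\epsilon^2$.

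\textbf{Main obstacle.} I expect the delicate step to be the interaction of the layerwise trust ratio with the adaptive preconditioner and the adaptor term: unlike plain SGD, the update direction is neither $-\nabla f_k$ nor a fixed positive-definite transform of it, so $\langle\nabla f_k(\boldsymbol{\theta}_k),\boldsymbol{\theta}_{k+1}-\boldsymbol{\theta}_k\rangle$ cannot be made negative by inspection. The crux is showing that, after sandwiching the trust ratio and subtracting the $\alpha\mathbf{a}_k$ correction via the telescoping bound, the surviving first-order term still dominates the combined momentum lag, preconditioner drift and stochastic variance — which is exactly why the stated stepsize and horizon thresholds take their particular form in $\varepsilon_1,\varepsilon_2,\varepsilon_3,(1-\beta_2)$ and $G_\infty$.
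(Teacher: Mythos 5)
Your outer skeleton --- smoothness descent on $f_k$, two-sided control of the trust ratio, variance killed by $b=\mathcal{O}(G_\infty\epsilon^{-2})$, telescoping in $k$, and then deducing the second bound from the first via $\nabla f(\boldsymbol{\theta}_k)=\nabla f_k(\boldsymbol{\theta}_k)-\lambda_k(\sqrt{\mathbf{v}_k}+\varepsilon_1)\boldsymbol{\theta}_k$, Young's inequality, and the geometric decay of $\lambda_k$ --- is indeed the paper's skeleton, and your second paragraph is essentially the paper's own argument for the $6\epsilon^2$ bound. The gap lies in the two mechanisms you propose for the hard error terms. For the momentum lag you say it is ``controlled by the $\beta_2$-EMA geometry,'' but no pointwise control of that kind exists: writing $\mathbf{m}_k=\mathbf{u}_k+\alpha\mathbf{n}_k$, where $\mathbf{u}_k$ is the EMA of $\mathbf{g}_k$ and $\mathbf{n}_k$ the EMA of $\mathbf{a}_k$, the lag $\|\mathbf{u}_k-\bar{\mathbf{g}}_k\|$ is only $\mathcal{O}(G_\infty)$ pointwise, which is useless. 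The paper controls it \emph{on average}, through the recursion of \cref{u-g}, whose driving term is $L^2\|\boldsymbol{\theta}_k-\boldsymbol{\theta}_{k-1}\|^2$, and this recursion must be carried alongside the descent inequality by forming a joint potential $\Phi_k=f_k(\boldsymbol{\theta}_k)-f_k^*+c_1\eta\|\mathbf{u}_k-\bar{\mathbf{g}}_k\|^2+c_2\eta\alpha^2\|\mathbf{n}_k\|^2+c_3\eta\alpha^2\|\mathbf{a}_k\|^2$; the stepsize cap $\eta^2\le\varepsilon_1^3\hat{\varepsilon}_2^4\varepsilon_3^2(1-\beta_2)^2/(6dL^2G_\infty)$ exists precisely so that the $\|\boldsymbol{\theta}_{k+1}-\boldsymbol{\theta}_k\|^2$ terms produced by these recursions can be reabsorbed into half of the descent term. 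Relatedly, the descent one actually obtains is in $\|\mathbf{m}_k+\lambda_k\tilde{\boldsymbol{\theta}}_k\|^2$ (the update direction), not in $\|\nabla f_k(\boldsymbol{\theta}_k)\|^2$; the gradient norm is recovered only at the end from the identity $\nabla f_k(\boldsymbol{\theta}_k)=(\mathbf{m}_k+\lambda_k\tilde{\boldsymbol{\theta}}_k)+(\bar{\mathbf{g}}_k-\mathbf{u}_k)-\alpha\mathbf{n}_k$, once the averages of all three pieces are shown to be $\mathcal{O}(\epsilon^2)$. Your sketch names the lag as error term (i) but supplies no device making its sum over $k$ of order $T\epsilon^2$; that device --- the Lyapunov recursion --- is the heart of the proof.

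The second gap is the adaptor. Your identity $\sum_{k=1}^{T}\mathbf{a}_k=(\mathbf{g}_T-\mathbf{g}_0)-\sum_{j=1}^{T}\beta_1^{T-j+1}(\mathbf{g}_j-\mathbf{g}_{j-1})$ is correct algebra, but it is the wrong tool, because the adaptor never enters the analysis as $\langle w,\sum_k\mathbf{a}_k\rangle$ with a fixed weight $w$. It enters through $\mathbf{m}_k$ as $\alpha\mathbf{n}_k$, filtered by the step-varying preconditioner $1/(\sqrt{\mathbf{v}_k}+\varepsilon_1)$ and trust ratio, and paired with the step-varying vector $\nabla f_k(\boldsymbol{\theta}_k)$; after the Young step needed to split those inner products, what must be summed is $\sum_k\alpha^2\|\mathbf{n}_k\|^2$ --- a sum of \emph{squares}, where linear telescoping cancellation gives nothing. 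The paper's mechanism is different and does not rely on cancellation at all: by \cref{a,n}, $\mathbb{E}\|\mathbf{a}_k\|^2$ (hence $\mathbb{E}\|\mathbf{n}_k\|^2$) is small on average because $\|\bar{\mathbf{g}}_k-\bar{\mathbf{g}}_{k-1}\|\le L\|\boldsymbol{\theta}_k-\boldsymbol{\theta}_{k-1}\|=\mathcal{O}(L\eta)$ by smoothness while the variance contribution is $\mathcal{O}(\beta_1^2\sigma^2/b)$; these two recursions are exactly the remaining components of $\Phi_k$, and they come with an explicit ceiling on $\alpha^2$ (stated in \cref{ncv}) that your sketch omits entirely. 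Until items (i) and (iii) of your first bound are replaced by this recursion-plus-potential argument, the telescoping step you describe cannot produce the claimed $4\epsilon^2$, and consequently the $6\epsilon^2$ bound does not follow either.
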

\begin{remark}
The proof and detailed version considering more hyperparameters is given at \cref{NCV}. There, we change $\beta_i$ to $1-\beta_i$ for all $i\in [3]$ for brevity. Compared with Lamb~\cite{lamb}, we use a different analysis method and lead to a similar result under weaker constraints in more hyperparameter cases which are omitted in Lamb for simplicity (\cref{cvgac}). 
\begin{table}[ht]
    \centering
    \caption{Convergence analysis comparison (Lamb vs. ALTO) }
    \begin{tabular}{c|c|c|c|c}
            &\bfseries $T\geq$ &\bfseries $b\geq$ &\bfseries $\eta \leq$&\bfseries some $\beta$s = \\
            \hline
            Lamb & $\mathcal{O}\left(\epsilon^{-4}\right)$ &$\mathcal{O}\left(\epsilon^{-4}\right)$&$\mathcal{O}\left(\epsilon^{-2}\right)$& $0$ or $1$\\
        ALTO & $\mathcal{O}\left(\epsilon^{-2}\right)$&$\mathcal{O}\left(\epsilon^{-2}\right)$&$ \mathcal{O}\left(1\right) $& more general
    \end{tabular}
    \label{cvgac}
\end{table}
\end{remark}
\begin{theorem}
Suppose that \cref{bg},~\ref{bp},~\ref{mono}, and~\ref{convex} hold. Let $\eta_k=\frac{\eta}{\sqrt{k}}$, $\alpha_k\leq\frac{\alpha}{\sqrt{k}}$, $\lambda_k\leq\frac{\lambda}{\sqrt{k}}$, and $\frac{\beta_2^2}{\beta_3}<1$. \cref{altov} achieves the following guarantee
\begin{equation*}
R(T) \leq\mathcal{O}\left(T^{0.5}G_{\infty}d^{1.5}D_{\infty}^{2}\right)\sim \mathcal{O}\left(\sqrt{T}\right),
\end{equation*}
where $R(T):=\sum_{k=1}^T \left(\ell_k\left(\boldsymbol{\theta}_k\right)-\ell_k\left(\boldsymbol{\theta}^*\right)\right)$.
\end{theorem}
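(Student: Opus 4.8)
The plan is to adapt the standard online-convex-optimization regret template for adaptive methods (as in the Adam/AMSGrad/AdaBelief analyses) to the layerwise trust-ratio normalization of \cref{altov}. First I would invoke \cref{convex}: convexity of each $\ell_k$ gives the linearization $\ell_k(\boldsymbol{\theta}_k)-\ell_k(\boldsymbol{\theta}^*)\leq\langle\mathbf{g}_k,\boldsymbol{\theta}_k-\boldsymbol{\theta}^*\rangle$, so that $R(T)\leq\sum_{k=1}^T\sum_{i=1}^h\langle\mathbf{g}_k^{(i)},\boldsymbol{\theta}_k^{(i)}-\boldsymbol{\theta}^{*(i)}\rangle$ and the task reduces to bounding this double sum layer by layer. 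To do so I rewrite the update as $\boldsymbol{\theta}_{k+1}^{(i)}=\boldsymbol{\theta}_k^{(i)}-\eta_k\rho_k^{(i)}\mathbf{r}_k^{(i)}$ with scalar trust ratio $\rho_k^{(i)}:=\phi(\|\boldsymbol{\theta}_k^{(i)}\|)/(\|\mathbf{r}_k^{(i)}\|+\varepsilon_2\phi(\|\boldsymbol{\theta}_k^{(i)}\|))$ and $\mathbf{r}_k=\mathbf{m}_k/(\sqrt{\mathbf{v}_k}+\varepsilon_1)+\lambda_k\boldsymbol{\theta}_k$. Under \cref{bg}, \cref{bp}, and $\phi(x)\geq\varepsilon_3$, both $\|\mathbf{r}_k^{(i)}\|$ and $\phi(\|\boldsymbol{\theta}_k^{(i)}\|)$ are two-sidedly bounded, so $\rho_k^{(i)}\in[\rho_{\min},\rho_{\max}]$ for constants depending only on $\varepsilon_1,\varepsilon_2,\varepsilon_3,G_\infty,D_\infty$; this is what lets me treat the normalization as a bounded positive rescaling rather than a direction-distorting map.

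The core estimate comes from expanding the preconditioner-weighted distance $\|\boldsymbol{\theta}_{k+1}^{(i)}-\boldsymbol{\theta}^{*(i)}\|_{\sqrt{\mathbf{v}_k}+\varepsilon_1}^2$ along the update, which produces a telescoping difference of weighted distances, a momentum inner product $\langle\mathbf{m}_k^{(i)},\boldsymbol{\theta}_k^{(i)}-\boldsymbol{\theta}^{*(i)}\rangle$, and a quadratic remainder of order $\eta_k\|\mathbf{r}_k^{(i)}\|^2$. I would then unroll the recursions $\mathbf{m}_k=\beta_2\mathbf{m}_{k-1}+(1-\beta_2)(\mathbf{g}_k+\alpha_k\mathbf{a}_k)$ and $\mathbf{a}_k=\beta_1\mathbf{a}_{k-1}+(1-\beta_1)(\mathbf{g}_k-\mathbf{g}_{k-1})$ to re-express the momentum inner product as the desired true-gradient term $\langle\mathbf{g}_k,\boldsymbol{\theta}_k-\boldsymbol{\theta}^*\rangle$ plus corrections carrying the exploration contribution $\alpha_k\mathbf{a}_k$ and the weight decay $\lambda_k\boldsymbol{\theta}_k$. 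The hypothesis $\beta_2^2/\beta_3<1$ plays the role that $\beta_1^2<\beta_2$ does in AMSGrad: through a Cauchy--Schwarz argument on the EMA weights, it guarantees that the series $\sum_k\eta_k\|\mathbf{m}_k^{(i)}\|^2/(\sqrt{\mathbf{v}_{k,i}}+\varepsilon_1)$ arising from the quadratic remainder is summable.

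Summing over $k$ and $i$ then yields the rate. \cref{mono} enters precisely here: it makes the effective coefficient multiplying $\|\boldsymbol{\theta}_k^{(i)}-\boldsymbol{\theta}^{*(i)}\|^2$ monotone in $k$, so the telescoped distance terms collapse to a single endpoint contribution instead of a growing sum. With $\eta_k=\eta/\sqrt{k}$ this endpoint is of order $\sqrt{T}$, and using $\sqrt{\mathbf{v}_{T,i}}\leq G_\infty$ (\cref{bg}) together with $\|\boldsymbol{\theta}_k-\boldsymbol{\theta}^*\|_\infty\leq 2D_\infty$ (\cref{bp}) supplies the factor $d\,G_\infty D_\infty^2$; an additional $\sqrt{d}$ appears when passing between $\ell_\infty$ and $\ell_2$ norms over the $d$ coordinates, giving the stated $d^{1.5}$. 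The remainder series contributes $\sum_{k=1}^T\eta_k=\mathcal{O}(\sqrt{T})$, while the decaying schedules $\alpha_k\leq\alpha/\sqrt{k}$ and $\lambda_k\leq\lambda/\sqrt{k}$ make the exploration and weight-decay corrections $\mathcal{O}(\sqrt{T})$ as well (using $\|\mathbf{a}_k\|_\infty\leq G_\infty/8$, which follows from \cref{bg}). Collecting all pieces gives $R(T)\leq\mathcal{O}(\sqrt{T}\,G_\infty d^{1.5}D_\infty^2)$.

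I expect the \emph{main obstacle} to be handling the layerwise trust ratio and the doubly-averaged exploration term simultaneously. The trust ratio couples each layer's update direction to its own norm and so breaks the clean coordinatewise telescoping available for plain Adam, forcing the two-sided bound on $\rho_k^{(i)}$ above; and the EMA-of-EMA structure of $\mathbf{a}_k$ injected into $\mathbf{m}_k$ requires careful tracking of the gradient-difference weights to show that their aggregate contribution stays summable under $\beta_2^2/\beta_3<1$ and the $1/\sqrt{k}$ schedules.
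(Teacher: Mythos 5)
Your proposal is correct and follows essentially the same route as the paper's proof: convexity linearization, coordinatewise/layerwise expansion of the squared distance with the preconditioner and trust ratio absorbed into the telescoping coefficient, \cref{mono} to collapse the telescoped distance terms into an endpoint of order $\sqrt{T}$, Cauchy--Schwarz on the EMA weights under $\beta_2^2/\beta_3<1$, and the $1/\sqrt{k}$ schedules to keep the momentum, exploration ($\alpha_k\mathbf{a}_k$), and weight-decay corrections at $\mathcal{O}(\sqrt{T})$. The only cosmetic difference is where the $\beta_2^2/\beta_3<1$ bound is deployed: the paper uses it (via its bound on $\left\|\mathbf{m}_k/\sqrt{\mathbf{v}_k}\right\|$) to control the coefficient of the telescoped distance terms, while you invoke it for the quadratic remainder, which the paper instead bounds directly by the trust-ratio magnitude bound and bounded gradients --- both placements work.
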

\begin{remark}
For the proof and more accurate estimation, see \cref{CV}. We obtain the same main term $\mathcal{O}\left(\sqrt{T}\right)$ as Adam when $T\to \infty$ up to an additional factor $\sqrt{d}$ without requirement $\beta_{1,k}=\beta_{k}\lambda^k$ and a different requirement $\frac{\beta_2^2}{\beta_3}<1$, which is due to the factor $\frac{\left(\left\|\mathbf{r}_k^{(i)}\right\|+\varepsilon_2\phi\left(\left\|\boldsymbol{\theta}_k^{(i)}\right\|\right)\right)}{\phi\left(\left\|\boldsymbol{\theta}_k^{(i)}\right\|\right)}$ in ALTO.
\end{remark}
\begin{remark}
Although these theorems are only proved for \cref{altov}, analogous consequences also hold for \cref{alto} up to a bounded factor of $\beta_i, i\in[3]$, since the bias correction is bounded between $1$ and $1/\left(1-\beta_i\right)$, similar statement in~\cite{bc1,bc2}.   
\end{remark}

\section{Experiments}
We mainly focus on large batch training problem and evaluate the Lamb-adapted optimizer ALTO. In the small batch case, there is also improvement but it is marginal~\cref{sbt1acc}. We compare ALTO with typical optimizers, including SGD, Adam, AdamW~\cite{adamw}, Lamb, and AdaBelief~\cite{adabelief} across various tasks (CV, NLP, reinforcement learning (RL)), diverse datasets (CIFAR-10, CIFAR-100~\cite{cifar}, ImageNet~\cite{resnet}, CoNLL-2003~\cite{conll03}, IMDB~\cite{imdb}, MRPC~\cite{mrpc}, and GPT-2 Output Dataset~\cite{gptdata}), various architectures and scales (ResNet18, ResNet20, ResNet34, ResNet50~\cite{resnet}, VGG16~\cite{vgg}, DenseNet169~\cite{densenet}, LSTM~\cite{lstm}, BERT~\cite{bert}, and GPT-2~\cite{gpt}), distinct environments (Swimmer, Ant and Humanoid~\cite{mujoco}), and different batch size (from 32 to 50K). The following analysis mainly focus on CV and NLP tasks in this section, and ALTO is particularly suitable for GPT-based generative models due to its large effective batch size.\. Additionally, we also conduct experiments on RL and LSTM tasks(\cref{rl}). Finally, ablation study and hyperparameters tuning experiments show the influence of ALTO's important terms and hyperparameters on optimization results, in order to demonstrate the necessity of their introduction and the relevant statements or explanations in former section. For fair comparison, we conduct experiments on the same conditions, which would be different from those in their own original papers. Therefore, this may cause a little difference in experiment results than what are reported in original papers (See \cref{expdtl} for more experimental details). Of course, we also conduct some typical experiments under the same conditions as their original papers. In all these experiments, the best results of different optimizers are in bold. Each result is the mean of three independently repeated experiments.

\begin{minipage}{0.5\textwidth}
\vspace{-0.3cm}
\begin{table}[H]
\centering
\caption{Test top-1 Acc. (\%) on ResNet20 with CIFAR10 and CIFAR100 and on ResNet34 with ImageNet (for hyperparameters and architecture details, see \cref{cv_hyper})}.\label{t1acc}
\resizebox{\textwidth}{!}{
\begin{tabular}{lcccccc}
\hline
\bfseries Dataset & \multicolumn{2}{c}{\bfseries CIFAR10} & \multicolumn{2}{c}{\bfseries CIFAR100}  & \multicolumn{2}{c}{\bfseries ImageNet} \\
\cmidrule(lr){2-3}\cmidrule(lr){4-5}\cmidrule(lr){6-7}
\rowcolor{tableblue} \bfseries Batch Size & 128 & 16384 & 128 & 16384 & 256 & 4086 \\
\hline
SGD & \textbf{91.85} & 80.86 & 64.93 & 44.20 & \textbf{70.64} & 49.35\\
Adam   & 89.88   & 87.34  & 64.35  & 54.91 & 65.06 & 54.96\\
AdamW   &  90.54  & 82.29   & 64.62   & 52.95 & 69.64 & 68.40\\
Lamb  & 90.89   & 83.56  & 61.29   & 56.06 & 69.17 & 70.34\\
AdaBelief   & 91.12   & 88.03   & 64.44   & 52.94 & 70.12 & 70.18\\
\rowcolor{tablered} ALTO & 91.24 & \textbf{88.83} & \textbf{65.74} & \textbf{57.78} & 69.95 & \textbf{70.83}\\
\hline
\end{tabular}
}
\end{table} 
\vspace{-1cm}
\begin{table}[H]
\centering
\caption{Test top-1 Acc. (\%) of different optimizers for ImageNet training with RESNET-50 using different batch sizes in 90 epochs. $\dagger$ is reported in~\cite{lamb}. (for hyperparameters and architecture details, see \cref{cv_hyper})} % 表格标题
\resizebox{\textwidth}{!}{
\begin{tabular}{ccccccc}
\hline
\rowcolor{tableblue}  \bfseries Batch Size &\bfseries 1K &\bfseries 2K &\bfseries 4K &\bfseries 8K &\bfseries 16K &\bfseries 32K \\ \hline
Adam & 73.08 & 73.08 & 73.32 & 73.11 & 73.09 & 72.50 \\ 
AdamW & 75.65 & 74.93 & 74.65 & 74.40 & 74.10 & 73.57 \\ 
Adabelief & 73.32 & 73.48 & 73.41 & 73.14 & 73.00 & 72.89 \\ 
Lamb & 77.06$^\dagger$ & 77.11$^\dagger$ & 76.92$^\dagger$ & 76.89$^\dagger$ & 76.66$^\dagger$ & 76.42$^\dagger$ \\ 
\rowcolor{tablered}  ALTO & \textbf{77.22} & \textbf{77.25} & \textbf{77.35} & \textbf{77.10} & \textbf{76.87} & \textbf{76.70} \\ \hline
\end{tabular}
}
\label{tab:optimizer_comparison} 
\end{table}
\end{minipage}
\begin{minipage}{0.5\textwidth}
\begin{figure}[H]
\centering
\includegraphics[width=\linewidth]{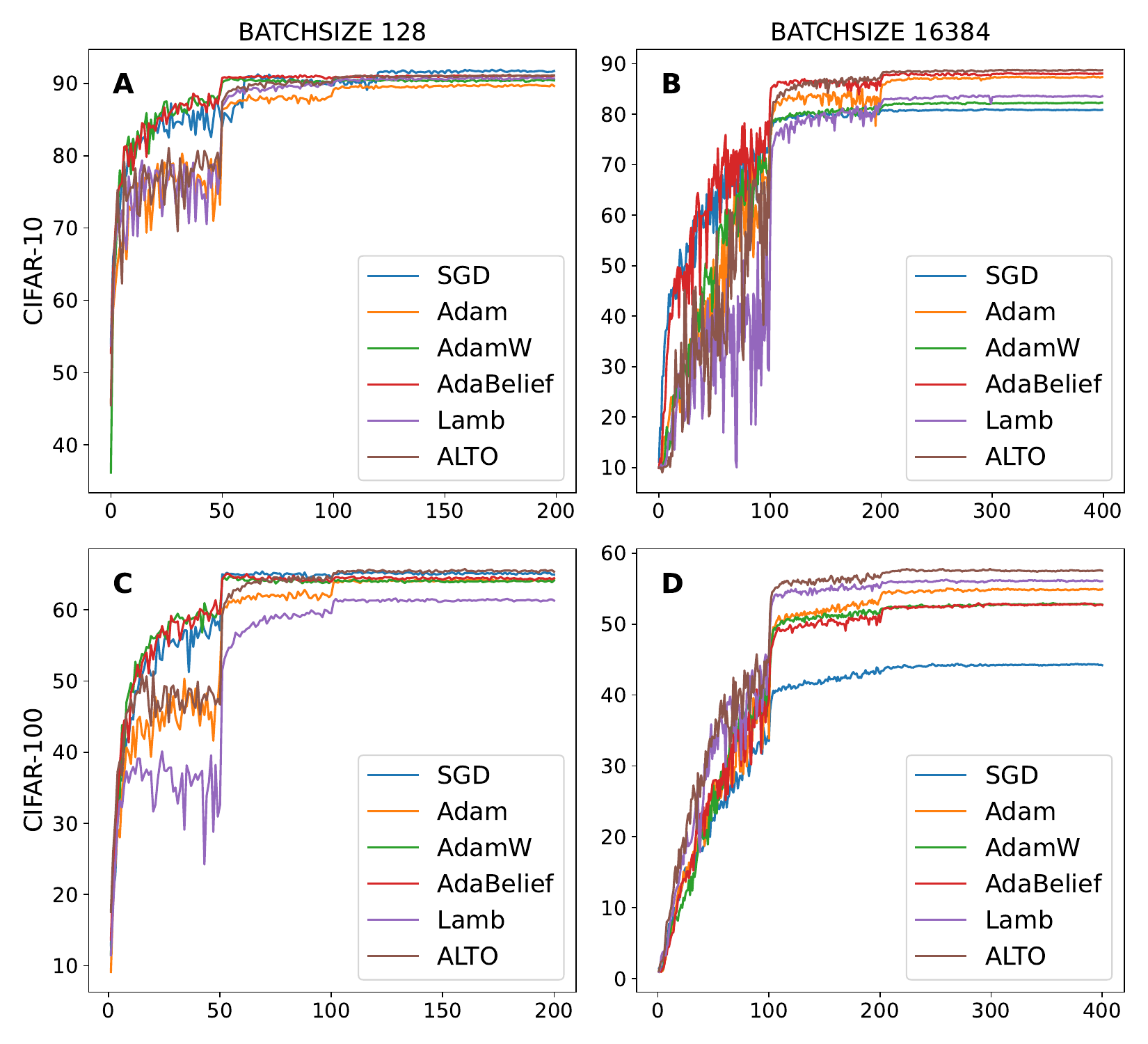}
\caption{Test top-1 Acc. (\%) on CIFAR-10 and CIFAR-100 with batch size 128 and 16384 on ResNet-20. The x-axis is epoch (for hyperparameters and archetecture details, see \cref{cv_hyper}).}
\label{cifar}
\end{figure}
\vfill
\end{minipage}

\noindent\textbf{Machine configuration.} All our experiments were conducted on single node equipped with 4 NVIDIA 80GB A100 GPUs interconnected with PCI-E3.0. We remark that multi-node experiments were not performed due to our limited hardware resource. 

\noindent\textbf{Hyperparameters.} Though ALTO introduces five extra hyperparameters compared with Adam, we usually and only adjust parameter $\beta_1$ and $\eta$ according to batch size. It is clear that the larger the batch size is, the larger the $-\alpha$ and $\beta_1$ should be. Hence, we set $\alpha=0.5, \beta_1=0.01$ in small batch training (batch size $<$1K) and $\alpha=-5, \beta_1=0.99$ in large batch case (batch size $\geq$1K), unless otherwise specified. If not mentioned, we set $\beta_2=0.9, \beta_3=0.99, \lambda=10^{-4}, \varepsilon_1=10^{-6}, \varepsilon_2=10^{-6}, \varepsilon_3=10^{-10}$. These parameters allow ALTO ample room for performance improvement. We only adjust $\beta_1$ and $\eta$ for ALTO, while for other optimizers, we tune all hyperparameters.

\subsection{Image Classification}
We conduct experiments on a variety of convolutional neural networks~\cite{cnn1,cnn2} (CNNs) and the datasets mentioned above. Due to space limitations, we only give a part of representative experiment results here (see \cref{vgg_dense_cifar_chapter} for more experiments and details). As ALTO is tailored for large-batch training (in case 16384), it not only outperforms all listed competitors on these three datasets, but also achieves a better result(70.83) than SGD (70.64) on a relatively small batch (256) on ImageNet. There is a widely accepted view that if the same learning rate is used, small batch SGD typically converges more slowly than the Adam family (Adam, AdamW, AdaBelief), but often achieves superior convergence result~\cite{adabelief, adamw} (\cref{cifar}). This may be because the loss of large batch better approximates the landscape. In contrast, ALTO beats them only on CIFAR-100 for small-batch (128) case (\cref{t1acc}). The larger the batch size, the larger the advantage of ALTO over other optimizers (\cref{tab:optimizer_comparison}, \cref{efficiencyandnlp} (a), \cref{fig:training with batch size 200 to 50K of ALTO and Lamb loss} and \cref{fig:training with batch size 200 to 50K of ALTO and Lamb acc} in \cref{supply_exp}). We visualize the training process in \cref{cifar}. As ALTO uses history information $\mathbf{a}_k$ to guide the training in later stage (after epoch 50 and 100 for small batch size 128 or epoch 100 and 200 for large batch size 16384) where gradient information is noisy, it outperforms other optimizers to a greater extent when the batch size is larger, implying more accurate gradients-decaying direction ($\mathbf{\Bar{g}}_k-\mathbf{\Bar{g}}_{k-1}$) information $\mathbf{a}_k$ containing. In large batch cases (16384), ALTO achieves the best accuracy of other optimizers in our experiments using only half the number of epochs. Honestly, due to the extra acceleration $\mathbf{a}$, ALTO's epoch computation time is longer than that of Lamb. Considering these two factors, we find that ALTO leads to less training time to reach a given accuracy(\cref{efficiencyandnlp}(b)). For more details, see \cref{epoch_and_accuracy}. Finally, ALTO as an optimizer developed based on Lamb, it outperforms Lamb in different batch size cases (\cref{tab:optimizer_comparison}) at nearly any training stage (\cref{cifar}). For additional experimental details, refer to \cref{cv_hyper}.

\begin{table}
\caption{(a) Test top-1 accuracy (\%) of ALTO and Lamb with batch size scaling. (b) Training time (s) with batch size 16384 to achieve the same test top-1 accuracy. Both (a) and (b) involve training VGG-16 on CIFAR100. (c) Comparative performance (train loss and test perplexity) of optimizers in training the GPT-2 (345M parameters) Model with batch size 4096 on Megatron-LM Framework by the OpenAI's open-sourced GPT-2 Output Dataset (1M).}\label{efficiencyandnlp}
\begin{minipage}{0.6\columnwidth}
\resizebox{\linewidth}{!}{
\begin{tabular}{l|rrrrrrrr}
\toprule
\bfseries (a) & \multicolumn{8}{c}{\bfseries Batch Size} \\
\cline{2-9}
    & 200 & 500 & 1K & 2K & 5K & 10K & 25K & 50K \\
\midrule
ALTO  & 66.9 & \textbf{63.35} & 59.79 & \textbf{70.43}  & \textbf{67.55}  & \textbf{64.51}  & \textbf{59.19}  & \textbf{49.69} \\
\midrule
Lamb  & \textbf{67.3} & 63.3 & \textbf{59.82} & 70.24  & 67.05  & 62.92  & 57.79  & 39.91 \\
\bottomrule
\end{tabular}
}
\resizebox{\linewidth}{!}{
\begin{tabular}{l|rrrrr}
\toprule
\bfseries (b)   &\bfseries 20\% &\bfseries 30\% &\bfseries 40\% &\bfseries 50\% &\bfseries 60\% \\
\midrule
ALTO & \textbf{137.103} & \textbf{202.674} & \textbf{333.817} & \textbf{482.842} & \textbf{608.023} \\
Lamb & 195.992 & 276.694 & 409.277 & 582.210 & 864.669 \\
\bottomrule
\end{tabular}
}
\end{minipage}
\begin{minipage}{0.36\columnwidth}
\resizebox{\linewidth}{!}{
\begin{tabular}{c|c|c}
\hline
\bfseries (c) & \bfseries Loss & \bfseries PPL \\ \hline
Adam & 4.43 & 87.74 \\ 
AdamW & 4.43 & 86.51 \\ 
Adabelief & 5.17 & 182.80 \\ 
Lion & 4.66 & 110.54 \\ 
Lamb & 4.39 & 83.13 \\ 
ALTO & \textbf{4.33} & \textbf{78.37} \\ \hline
\end{tabular}
}
\end{minipage}
\end{table}
 
\subsection{NLP}
Transformer-based attention neural networks~\cite{att} are heavily used in natural language processing. To demonstrate ALTO's ability to train popular large language model (LLM), we have employed two of them. One model we mainly focus on is GPT-2 with 345M parameters on Megatron-LM Framework~\cite{mega} for pre-training tasks in GPT-2 Output Dataset (\cref{efficiencyandnlp}(c)), and the other, introduced in \cref{nlp_d} (\cref{supply_exp}) for experiment completeness, is BERT$_\text{base}$ with 110M parameters for fine-tuning tasks in three datasets, where we also observe that as the batch size increases, ALTO's advantages become more pronounced and stable. We choose these two relatively small LLMs, due to our limited available computational resources. In the pre-training task, ALTO achieves an obvious generalization advantage (78.3 test perplexity) over other optimizers (\cref{efficiencyandnlp}). This is due to the massive equivalent batch size, batch size (4096)$\times$sentence length (20-30), caused by the GPT task, where every token needs to be predicted. Therefore, ALTO is suitable for pre-training current generative LLMs. Compared with a recent proposed optimizer Lion~\cite{lion}, ALTO achieves its final perplexity using only one-third of its iteration (also epoch) count, though ALTO requires 332ms and lion requires 253ms per iteration. If ALTO is applied to LLM training, another concern is the additional GPU memory overhead. Due to the lack of computational resource, directly measuring extra GPU memory overhead on real LLMs like GPT-4 is unrealistic, but we can estimate it via the undetermined coefficients method, about 2\% more than Lamb, as shown below. Compared with Lamb, the memory consumption of ALTO increases $\frac{\text{ALTO memory}}{\text{Lamb memory}}-1=\frac{d+c_1 db+c_2 d}{c_1 db+c_2 d}-1=\frac{1}{c_1 b+c_2}$, where $d$ is the number of parameters, $b$ is the batch size, and $c_1, c_2$ are some coefficients to be determined. The reason is the extra term $\mathbf{a}$ leads to memory consumption $d$, the forward and backward process for every sample and parameter lead to a consumption $c_1 db$, and intermediates like $m,v$, etc. lead to a consumption $c_2 d$. According to the above formula $\frac{1}{c_1 b+c_2}$ and \cref{bsandparam} (b), we find this growth rate is irrelevant to $d$. Taking BERT$_{\text{base}}$, a model based on transformer, as example, we have \cref{bsandparam} (a) with $d$ fixed and \cref{bsandparam} (b) with $b$ fixed. According to the data of batch size 32 and 1024 in \cref{bsandparam} (a), we have $c_1=0.048$ and $c_2=0.7877$, and find it fits well if batch size is 512. For experiment details, see \cref{nlp_hyper}.

\begin{minipage}{0.53\textwidth}
\vspace{-0.1cm}
\begin{table}[H]
\caption{The comparison of GPU memory consumption (MiB) of ALTO and Lamb (a) using BERT$_{\text{base}}$ with different batch sizes. (b) using batch size 1024 on BERT$_{\text{base}}$ with varying degrees of parameter reduction.}\label{bsandparam}
\resizebox{0.95\linewidth}{!}{
\begin{tabular}{l|rrr}
\toprule
(a) \bfseries Batch Size & \bfseries 32 &\bfseries 512&\bfseries 1024  \\
\midrule
Lamb & 5491 & 58636 &109006  \\
ALTO & 7869 (+43.31\%) &60822(+3.73\%)& 111228 (+2.04\%)\\
\bottomrule
\end{tabular}}
\resizebox{0.95\linewidth}{!}{
\begin{tabular}{l|rrrrr}
\toprule
(b) \#Parameters & 44526345 & 65789961 & 87053577 & 108317193 \\
\midrule
Lamb & 34776 & 59269 & 84240 & 108998 \\
ALTO & 35568 (+2.27\%) & 60805 (+2.59\%) & 85946 (+2.02\%) & 111220 (+2.03\%)\\
\bottomrule
\end{tabular}
}
\end{table}
\vspace{-1em}
\begin{table}[H]
\caption{Results of ablation study (\%) on CIFAR10 (batch size 16384) and MRPC (batch size 2048) on the test dataset.}\label{ablation}
\centering
\scalebox{0.49}{
\begin{tabular}{lcccccccc}
\toprule
& \multicolumn{2}{c}{\bfseries CIFAR10} & \multicolumn{6}{c}{\bfseries MRPC} \\
\cmidrule(r){2-3} \cmidrule(l){4-9}
\bfseries Feature             & Top-1 Acc.&$\Delta$& Acc. & F1  & Prec  & Recall & Avg &$\Delta$ \\ 
\midrule
\rowcolor{tableblue}  ALTO                & \textbf{88.83} &  -      & \textbf{74.95}     & \textbf{83.05}    & \textbf{75.48}    & 92.32    & \textbf{81.45}   & - \\
- $a_m$              & 87.06         &  -1.77    & 70.08     & 80.87    & 70.34    & 95.11    & 79.10   & -2.35 \\
- $a_v$              & 87.08          &  -1.75    & 67.13     & 79.78    & 67.49    & 97.55    & 77.98   & -3.47 \\
- $bc$            & 87.51          &  -1.32       & 72.92     & 81.56    & 74.53    & 90.06    & 79.76   & -1.69 \\
\rowcolor{tableblue} - $lrr$          & 86.22         &  -2.61      & 72.86     & 81.69    & 74.09    & 91.02    & 79.91   & -1.54 \\
- ($a_m+a_v$)   & 88.11          &  -0.72    & 69.44     & 80.64    & 69.67    & 95.72    & 78.86   & -2.59 \\
\rowcolor{tableblue} - ($a_m+a_v+bc$)   & 84.91         &  -3.92   & 71.53     & 81.00    & 72.80    & 91.28    & 79.15   & -2.3 \\
- ($a_m+a_v+bc+lrr$)  & 77.55          &  -11.28   & 66.49     & 79.87    & 66.49    & \textbf{100.0}    & 78.21   & -3.24 \\
\bottomrule
\end{tabular}
}
\end{table}
\end{minipage}
\begin{minipage}{0.47\textwidth}
\begin{figure}[H]
\centering    
\includegraphics[width=\linewidth]{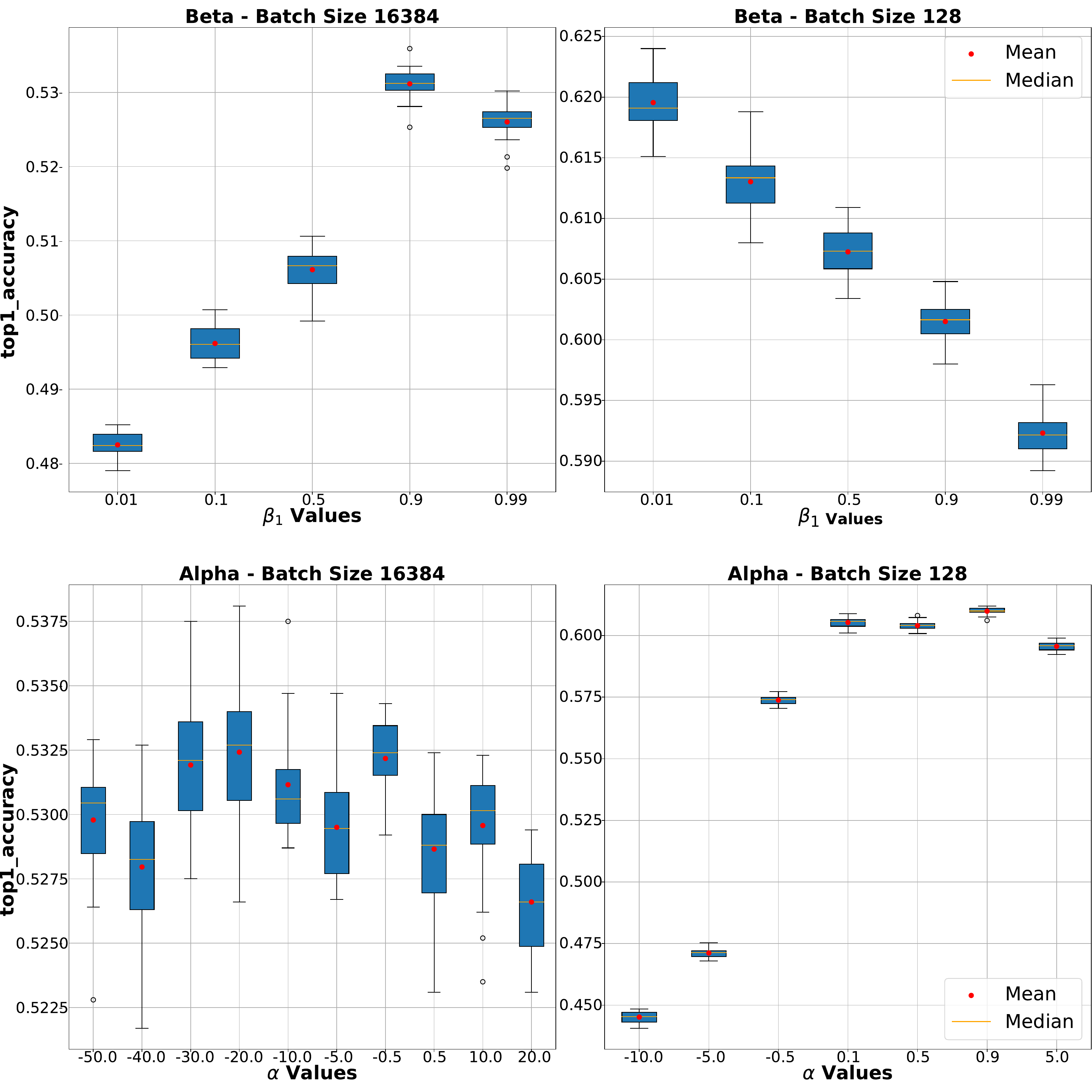}
\caption{The distribution of top-1 accuracy with different ($\beta_1$, $\alpha$) after ResNet-18 trained 180 epochs on CIFAR-100.}
\label{ihp}
\end{figure}
\end{minipage}

\subsection{Ablation Study.}\label{ablation_study}
To demonstrate the necessity of each component introduced in our algorithm, we remove the acceleration terms $\mathbf{a}$ in $\mathbf{m}$ ($a_m$) and $\mathbf{v}$ ($a_v$), the bias correction term $1/(1-\beta_i^k), i\in{2,3}$ ($bc$), and the layerwise learning rate regularization factor as introduced in Lamb ($lrr$). We then conduct an ablation study (\cref{ablation}). The results show that all these components are indispensable, and that ALTO and Lamb complement each other well.

\subsection{The Choice of $\beta_1$ and $\alpha$}\label{lht}
According to \cref{eq:acc}, $\beta_1$ measures the persistence of exploration, while $|\alpha|$ determines the scale of local minima that can be escaped during the exploration as discussed in design part. Experimental results reveal the effects of $\beta_1$ and $\alpha$ on ALTO's performance (\cref{ihp}). Generally, a larger batch size requires a larger $\beta_1$ and a smaller $\alpha$. For large batches, we set $\alpha$ to be negative for larger exploration range, flatter minima (\cref{hessian}), and this leads to a better test performance. However, for small batches we set it positive. Although a negative $\alpha$ suggests flatter local minima, the resulting improvement in generalization is insufficient to offset the usually lower training loss achieved with positive $\alpha$. If a flat minimum is desired  (\cref{hessian}), setting $\alpha$ to a negative value is also acceptable. 

\section{Related Works}
Lion~\cite{lion} using double momentum method computes the EMA$_2$ of the gradient ($g_k$), while ALTO first computes an EMA of acceleration ($g_k-g_{k-1}$), adds the EMA to $g_k$, and then computes EMA of the addition. Moreover, ALTO uses a negative $\alpha$, whereas Lion uses a positive effective $\alpha$. Additionally, ALTO considers the second moment, while Lion does not.

Adan~\cite{adan} also uses EMA of $g_k-g_{k-1}$. However, there are two major differences. First, Adan introduces the EMA of $g_k-g_{k-1}$ directly to the momentum in update equation, but we introduce it as an acceleration to gradient $g_k$. This means we use the EMA of the EMA of $g_k-g_{k-1}$ to estimate the momentum in the update equation which is more natural in form. Second, in large-batch training, to accelerate the optimizer for exploration, its hyperparameter $\alpha$ is usually set to a negative value, such as -5 in our experiments. In contrast, the equivalent $\alpha$ of Adan is positive. 

Indeed, ALTO requires more memory and computation than these two optimizers per iteration (\cref{lion_elapsed_time} in \cref{supply_exp}), but ALTO can achieve higher accuracy (\cref{adan,lion} in \cref{supply_exp}) and require only one third of the number of iterations compared to LION to reach a reasonable perplexity (\cref{iteraton_ratio} in \cref{supply_exp}). This implies that the training time required to reach an acceptable pre-training perplexity (PPL=200) using ALTO is 56.2\% less than using LION (\cref{related_works}). However, when compared with matrix-based optimizer like Muon~\cite{muon} and Shampoo~\cite{shampoo} with computational complexity $\mathcal{O}(N^{\frac{3}{2}})$ ($N$, the number of parameter), the extra computation related to $\mathbf{a}$ is much less, with a complexity of $\mathcal{O}(N)$.

A class of recently popular optimizers, which reduce the variance of the loss function $\sigma_k:=\sqrt{\mathbb{E}\left|\ell_k(\boldsymbol{\theta}, \zeta)-\mathbb{E}\ell(\boldsymbol{\theta},\zeta)\right|^2}$~\cite{variance}, are related to our adaptor design idea that reduces the norm of gradient $\|\nabla\ell_k(\boldsymbol{\theta})\|^2$. The relationship between $\sigma_k(\boldsymbol{\theta})$ and $\|\nabla\ell_k(\boldsymbol{\theta})\|$ is given by $\|\nabla\ell_k(\boldsymbol{\theta})\|\sqrt{\mathbb{E}\delta_k^2(\boldsymbol{\theta})}=\sigma_k(\boldsymbol{\theta})$, if we define $\delta_k(\boldsymbol{\theta}):=\frac{\mathbb{E}\ell(\boldsymbol{\theta})-\ell_k(\boldsymbol{\theta})}{\|\nabla\ell_k(\boldsymbol{\theta})\|}$ called horizontal amplitude at $\boldsymbol{\theta}$ and assume $\|\nabla\ell_k(\boldsymbol{\theta})\|\neq 0$ at $\boldsymbol{\theta}$. The optimization objectives of the two methods are the same up to a standard deviation of $\delta_k$. From the definition of the horizontal amplitude $\delta_k$, we find $\mathbb{E}\ell(\boldsymbol{\theta})=\ell_k(\boldsymbol{\theta})+\delta_k(\boldsymbol{\theta})\|\nabla\ell_k(\boldsymbol{\theta})\|\approx\ell_k(\boldsymbol{\theta}+\delta_k(\boldsymbol{\theta})\frac{\nabla\ell_k(\boldsymbol{\theta})}{\|\nabla\ell_k(\boldsymbol{\theta})\|})$. This means if we regard the graph of $\mathbb{E}\ell(\boldsymbol{\theta})$ as a translation of the graph of $\ell_k(\boldsymbol{\theta})$, the smallest translation length should be $\delta_k$. The $\delta_k$ measures how much the graph of $\ell_k(\boldsymbol{\theta})$ oscillates around the graph of $\mathbb{E}\ell(\boldsymbol{\theta})$.

Apart from first-order optimizers, our method "E" can also adapt zero-order optimizers~\cite{zerothorderoptimization}, second-order optimizers~\cite{adahessian, rlekf} and matrix-based optimizers (Muon, Shampoo), by replacing gradient estimator $\mathbf{g}_k$ with $\mathbf{g}_k+\alpha\mathbf{a}_k$ (\cref{alg: general_form}). 

\section{Limitations and Future Work}
\noindent\textbf{The introduction of two hyperparameters ($\beta_1$ and $\alpha$)}. The optimal values of the two hyperparameters vary for different tasks, origin optimizers, and platforms. We use one default setting, which may not be the best but is consistently well-behaved under various conditions. In the following process of maintaining the adaptor "E" and optimizer ALTO package, we will further improve the parameter settings.

\noindent\textbf{The improvement is limited in small batch case}. Small training batch size usually leads to approximately optimal test performance, which limits the potential for improvement. Meanwhile, the gradient estimator in small batch case is noisy, so we should not use a large $\beta_1$, which limits the effect of adaptor.

\section{Conclusion and Results}
We propose a gradient-based optimizer adaptor, which can make the optimizer continue exploring the parameter space along valleys rather than circling around the minimum that traps the optimizer. We use it as a suitable tool for large-batch training tasks. The large-scale exploration capability of the adapted optimizer can mitigate the constraints imposed by unadjustable learning rates. We conduct experiments on typical large-batch training tasks, and the adapted Lamb (ALTO) outperforms current top optimizers, especially in NLP tasks, since their effective batch sizes are extremely large. 

\section{Acknowledgements}
This work is supported by the following funding: National Science Foundation of China (92270206, 62032023, 62372435, T2125013)  and Huawei Technologies Co., Ltd. The model training was performed on the robotic AI-Scientist platform of Chinese Academy of Science.  We extend our sincere gratitude to Xirui Yang for insightful discussions during the conceptualization phase of this research. Special thanks to Dr. Jianchao Tan at Meituan Co., Ltd. for deploying ALTO on their internal large language models and utilizing it for pre-training, whose expertise in enterprise deployment was crucial for practical and industrial verification. Finally, special thanks to Jiacheng Li for his discussions and core contributions in experimental implementation of computer vision, natural language processing, and reinforcement learning tasks.
\bibliography{my}
\bibliographystyle{plainnat}

\onecolumn

\newpage

\Appendix

\newpage
\section{The Challenge in Large-Batch Training}\label{clbt}
For clearly explaining the challenge of large batch training, we conducted preliminary experiments on a simple regression task using a basic neural network architecture.
\begin{figure}[ht]
    \centering
    \includegraphics[width=\linewidth]{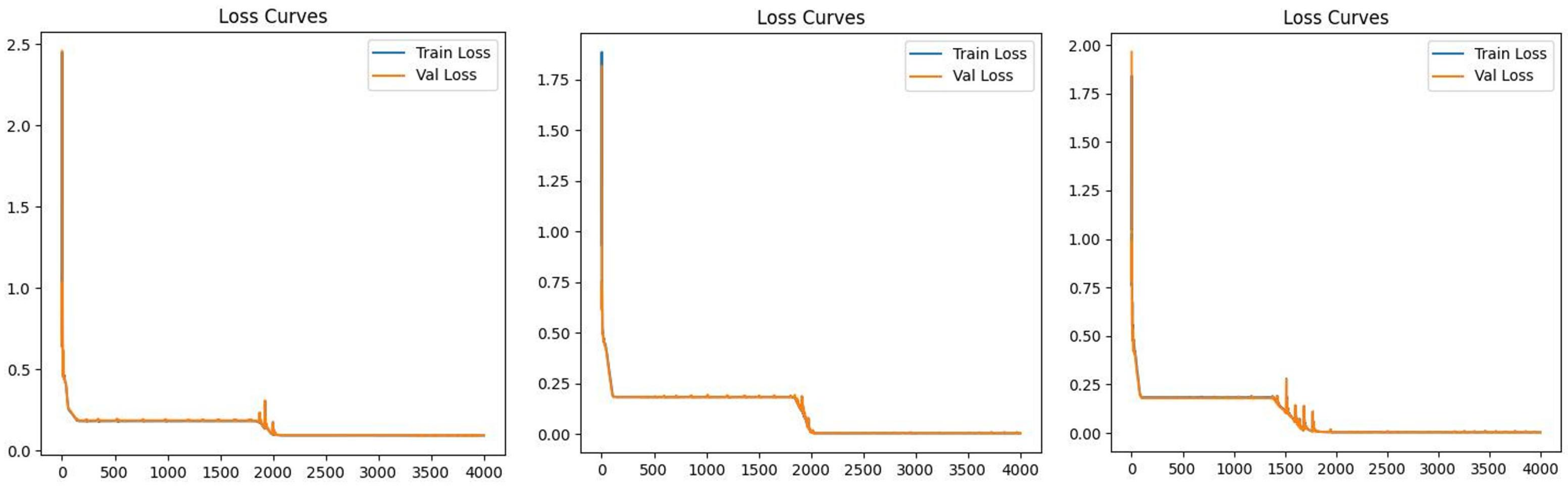}
    \caption{training with batchsize 32768 of Adam}
    \label{fig:training with batchsize 32768 of adam}
\end{figure}

\begin{figure}[ht]
    \centering
    \includegraphics[width=\linewidth]{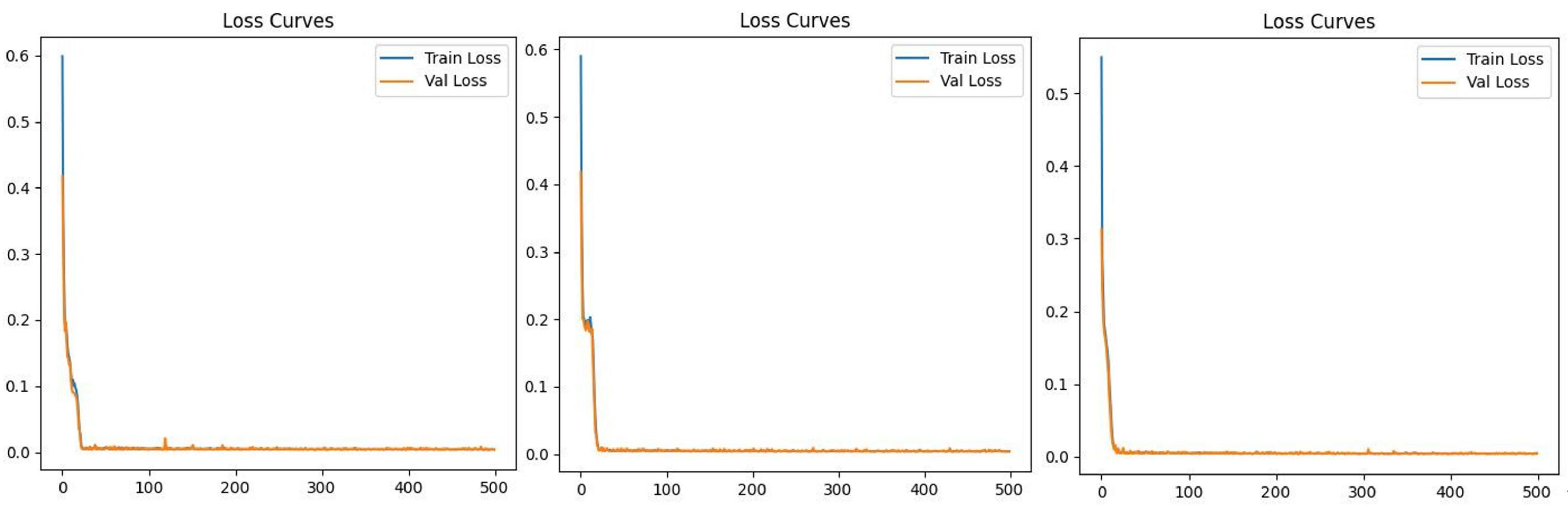}
    \caption{training with batchsize 256 of Adam}
    \label{fig:training with batchsize 256 of adam}
\end{figure}

In this task, we constructed a three-layer fully connected neural network, with the final layer consisting of a single neuron to fit the regression task: y=sin(x), where the model input is x and the output is y. For this simple univariate function fitting task, our training set was constructed with uniformly sampled points from y=sin(x), where x ranges from -10 to 10, with a total of 32,786 / 0.8 samples. Noise with a mean of zero was added to the corresponding y values. In the dataset, 80\% was used as the training set and 20\% as the validation set.

We conducted a total of six training runs, dividing them into two sets based on batch sizes: three runs with a batch size of 256, and three runs with a batch size of 32,768. The resulting loss-epoch graphs are shown in \cref{fig:training with batchsize 32768 of adam,fig:training with batchsize 256 of adam}. It can be observed that in all instances of training with the larger batch size, the loss-epoch graphs exhibited a 'staircase' pattern, characterized by a sudden drop in the loss value at a certain point. This phenomenon was not observed in the training runs using the smaller batch size. It is noteworthy that in this study, we did not employ a learning rate scheduler, but rather trained consistently with a fixed learning rate. Therefore, the occurrence of this phenomenon warrants particular attention.

In a common training task, we would stop too early to the 'sudden step' observed in the training with larger batch sizes. Training that number of epochs consumes too much computational resource. Notably, this phenomenon occurred even in such a simple task with a basic network architecture. Therefore, in experiments involving more complex tasks and models, it's possible that this 'sudden step' will occur. Please notice that the training finishes using 20 or even less epochs in small batch case. In contrast, the training finishes after around 2000 epochs. This means 32786-batch training requires more than 100 times of epochs that are required by 256-batch training, meanwhile 100 approximate $\dfrac{32786}{256}$. In fact, it is impossible to train so many epochs (no acceleration in large-batch training). The only method is to enlarge learning rate with same proportion. However, the batch size can be infinitely enlarged, whereas the learning rate can be not.
\section{Adapted Optimizers}
\begin{minipage}[t]{0.5\textwidth} 
\begin{algorithm}[H]
\caption{ESGD}
\label{esgd}
\begin{algorithmic}[1]
\Require initialize $\boldsymbol{\theta}_1$, learning rate $\eta_k$, momentum factor $\left(\beta_1, \beta_2\right) \in[0,1)^3$,  acceleration factor $|\alpha|<1/(1-\beta_1)$,  $\mathbf{a}_0=0$, and $\mathbf{m}_0=0$.
\Ensure $\left\{\boldsymbol{\theta}_k\right\}_{k=1}^T$.
\While{$k<T$}
\State $\mathbf{g}_k=\frac{1}{\left|\mathcal{Z}_k\right|} \sum_{\boldsymbol{\zeta}_i \in \mathcal{Z}_k} \nabla \ell\left(\boldsymbol{\theta}_k, \boldsymbol{\zeta}_i\right)$;
\State$\mathbf{a}_k=\beta_1\mathbf{a}_{k-1}+\left(1-\beta_1\right)(\mathbf{g}_{k}-\mathbf{g}_{k-1})$
\State $\mathbf{m}_k= \beta_2\mathbf{m}_{k-1}+(\mathbf{g}_k+\alpha\mathbf{a}_k);$
\State$\boldsymbol{\theta}_{k+1}=\boldsymbol{\theta}_{k}-\eta\mathbf{m}_k$
\EndWhile
\end{algorithmic}
\end{algorithm}
\end{minipage}
\begin{minipage}[t]{0.5\textwidth} 
\begin{algorithm}[H]
\caption{EAdam}\label{eadam}
\begin{algorithmic}[1]
\Require initialize $\boldsymbol{\theta}_1$, learning rate $\eta_k$, momentum factor $\left(\beta_1, \beta_2, \beta_3\right) \in[0,1)^3$, stable parameter $\varepsilon_1>0$, acceleration factor $|\alpha|<1/(1-\beta_1)$, $\mathbf{a}_0=0$, $\mathbf{m}_0=0$, and $\mathbf{v}_0=0$.
\Ensure $\left\{\boldsymbol{\theta}_k\right\}_{k=1}^T$.
\While{$k<T$}
\State $\mathbf{g}_k=\frac{1}{\left|\mathcal{Z}_k\right|} \sum_{\boldsymbol{\zeta}_i \in \mathcal{Z}_k} \nabla \ell\left(\boldsymbol{\theta}_k, \boldsymbol{\zeta}_i\right)$;
\State$\mathbf{a}_k=\beta_1\mathbf{a}_{k-1}+\left(1-\beta_1\right)(\mathbf{g}_{k}-\mathbf{g}_{k-1});$
\State$\mathbf{m}_k= \beta_2\mathbf{m}_{k-1}+\left(1-\beta_2\right)(\mathbf{g}_k+\alpha\mathbf{a}_k);$
\State$\mathbf{v}_k= \beta_3\mathbf{v}_{k-1}+\left(1-\beta_3\right)\left[\mathbf{g}_k+\alpha\mathbf{a}_k\right]^2$;
\State$\hat{\mathbf{m}}_k= \mathbf{m}_{k}/\left(1-\beta_2^k\right);$
\State$\hat{\mathbf{v}}_k= \mathbf{v}_{k}/\left(1-\beta_3^k\right);$
\State$\boldsymbol{r}_k=\hat{\mathbf{m}}_k/\left(\sqrt{\hat{\mathbf{v}}_k}+\varepsilon_1\right);$
\State$\boldsymbol{\theta}_{k+1}=\boldsymbol{\theta}_k-\eta\boldsymbol{r}_k$
\EndWhile
\end{algorithmic}
\end{algorithm}
\end{minipage}

\begin{algorithm}[htb]
\caption{Generic form of E-adapted optimizer}
\begin{algorithmic}
\State \textbf{Initialize} $\boldsymbol{\theta}_1$, 
gradient estimation operation $\phi(\cdot)$, standard origin optimzier updating operation $\psi(\cdot)$,
number of iterations $T$, and learning rate $\eta_k > 0$ at iteration $k$,
\For{$k =  1,2,\ldots, T$}
\State  \textbf{1. Gradient estimation:} 
\begin{align}
\mathbf{g}_{k} = \phi ( \{ \ell(\boldsymbol{\theta}_k, \boldsymbol{\zeta}_j) \}_{j \in  \mathcal{Z}_k}^{k}), \label{eq: grad_est_alg}
\end{align}
where  $\mathcal{Z}_k$ denotes a set of  mini-batch stochastic samples used at iteration $k$,
\State \textbf{2. Gradient replacement:} 
\begin{align}
\color{MyDarkRed}\mathbf{g}_k \leftarrow  \mathbf{g}_{k}+\alpha\mathbf{a}_k, \text{where} \; \mathbf{a}_k=\beta_1\mathbf{a}_{k-1}+\left(1-\beta_1\right)(\mathbf{g}_{k}-\mathbf{g}_{k-1}),    \label{eq: grad_replc_alg}
\end{align}
\State \textbf{3. Standard parameter updating of original optimizer:} 
\begin{align}
\boldsymbol{\theta}_t = \psi \left( \mathbf{g}_{k}, \mathbf{g}_{k-1},\dots,\mathbf{g}_{1}, \boldsymbol{\theta}_1, \eta_t
\right). \label{eq: update_alg}
\end{align}
\EndFor
\end{algorithmic}
\label{alg: general_form}
\end{algorithm}

\section{Convergence Analysis}
\subsection{Preliminary}
Before starting the proof, we first provide all notations here for looking up. Let 
\begin{enumerate}
\item$\left\langle\mathbf{x},\left(\sqrt{\mathbf{v}_k}+\varepsilon_1\right)  \mathbf{y}\right\rangle_{\sqrt{\mathbf{v}_k}}:=\left\langle\mathbf{x},\left(\sqrt{\mathbf{v}_k}+\varepsilon_1\right)  \mathbf{y}\right\rangle$, 
\item$\mathbf{u}_k:= \beta_2\mathbf{u}_{k-1}+\left(1-\beta_2\right)\mathbf{g}_k,$
\item$\mathbf{n}_k:= \beta_2\mathbf{n}_{k-1}+\left(1-\beta_2\right)\mathbf{a}_k,$
\item$\mathbf{m}_k=\mathbf{u}_{k}+\alpha\mathbf{n}_k$
\item$\mathbf{p}_k:=\mathbf{m}_k /\left(\sqrt{\mathbf{v}_k}+\varepsilon_1\right)$
\item$\varepsilon_2:=\hat{\varepsilon}_2+\lambda_k\eta$
\item$\tilde{\boldsymbol{\theta}}_k:=\left(\sqrt{\mathbf{v}_k}+\varepsilon_1\right)  \boldsymbol{\theta}_k $
\end{enumerate}

\subsection{Convergence Analysis of ALTO for Non-Convex Optimization}\label{NCV}
In this subsection, we give the convergence analysis of ALTO in non-convex case for \cref{algncv}.
\begin{algorithm}[!h]
\caption{ALTO Vanilla}
\label{algncv}
\begin{algorithmic}[1]
\Require initialize $\boldsymbol{\theta}_1$, learning rate $\eta_k$, momentum factor $\left(\beta_1, \beta_2, \beta_3\right) \in[0,1)^3$, stable parameter $\varepsilon_1,\varepsilon_2>0$, acceleration factor $|\alpha|<1/\beta_1$, weight decay $\lambda_k>0$, $\mathbf{a}_0=0$, $\mathbf{m}_0=0$, and $\mathbf{v}_0=0$.
\Ensure $\left\{\boldsymbol{\theta}_k\right\}_{k=1}^T$.
\While{$k<T$}
\State $\mathbf{g}_k=\frac{1}{\left|\mathcal{Z}_k\right|} \sum_{\boldsymbol{\zeta}_i \in \mathcal{Z}_k} \nabla \ell\left(\boldsymbol{\theta}_k, \boldsymbol{\zeta}_i\right)$;
\State $\mathbf{a}_k=\left(1-\beta_1\right)\mathbf{a}_{k-1}+\beta_1(\mathbf{g}_{k}-\mathbf{g}_{k-1})$
\State $\mathbf{m}_k= \left(1-\beta_2\right)\mathbf{m}_{k-1}+\beta_2(\mathbf{g}_k+\alpha\mathbf{a}_k);$
\State $\mathbf{v}_k= \left(1-\beta_3\right)\mathbf{v}_{k-1}+\beta_3\left[\mathbf{g}_k+\alpha\mathbf{a}_k\right]^2$;
\State $\boldsymbol{r}_k=\mathbf{m}_k/\left(\sqrt{\mathbf{v}_k}+\varepsilon_1\right)+\lambda\boldsymbol{\theta}_k$
\State$\boldsymbol{\theta}^{(i)}_{k+1}=\boldsymbol{\theta}^{(i)}_k-\frac{\eta_k\mathbf{r}^{(i)}_k\phi\left(\|\boldsymbol{\theta}^{(i)}_k\|\right)}{\left(\|\mathbf{r}^{(i)}_k\|+\varepsilon_2\phi\left(\|\boldsymbol{\theta}^{(i)}_k\|\right)\right)}$
\EndWhile
\end{algorithmic}
\end{algorithm}

\begin{lemma}\label{gbar}
If \cref{uiv} holds, we have
\begin{equation*}
\mathbb{E}\left\|\Bar{\mathbf{g}}_k-\mathbf{g}_k\right\|^2 \leq\frac{\sigma^2}{b}.
\end{equation*}
\end{lemma}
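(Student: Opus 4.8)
The plan is to exploit the fact that $\mathbf{g}_k$ is an empirical average of $b=\left|\mathcal{Z}_k\right|$ per-sample gradients drawn i.i.d.\ from $\mathbb{P}$, so that the claim reduces to the standard ``variance shrinks like $1/b$'' computation for the error of an averaged unbiased estimator. First I would rewrite the deviation as a normalized sum of mean-zero terms. Setting $X_i := \nabla\ell(\boldsymbol{\theta}_k,\boldsymbol{\zeta}_i)-\Bar{\mathbf{g}}_k$ for each $\boldsymbol{\zeta}_i\in\mathcal{Z}_k$, the unbiasedness clause of \cref{uiv} gives $\mathbb{E}X_i=\mathbf{0}$, while its variance clause gives $\mathbb{E}\|X_i\|^2\le\sigma^2$. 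By the definition of the mini-batch gradient in \cref{sgd}, we then have the identity $\mathbf{g}_k-\Bar{\mathbf{g}}_k=\frac{1}{b}\sum_{i=1}^b X_i$.

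The second step is to expand the squared norm and take expectations, writing
\begin{equation*}
\mathbb{E}\left\|\Bar{\mathbf{g}}_k-\mathbf{g}_k\right\|^2=\frac{1}{b^2}\sum_{i=1}^b\sum_{j=1}^b\mathbb{E}\left\langle X_i,X_j\right\rangle.
\end{equation*}
The off-diagonal terms all vanish: because the samples within the batch are drawn independently, for $i\neq j$ the vectors $X_i$ and $X_j$ are independent and each has mean zero, so $\mathbb{E}\langle X_i,X_j\rangle=\langle\mathbb{E}X_i,\mathbb{E}X_j\rangle=0$. Only the $b$ diagonal terms survive, giving $\frac{1}{b^2}\sum_{i=1}^b\mathbb{E}\|X_i\|^2\le\frac{1}{b^2}\cdot b\sigma^2=\sigma^2/b$, which is exactly the stated bound.

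The only point that deserves care is the independence used to kill the cross terms. As written, \cref{uiv} explicitly asserts independence of the batch gradients $\mathbf{g}_k$ and $\mathbf{g}_t$ across \emph{distinct} time steps, whereas here I need independence \emph{within} a single batch. I would therefore invoke the i.i.d.\ sampling of $\boldsymbol{\zeta}_i\sim\mathbb{P}$ that defines $\mathcal{Z}_k$ in the preliminaries: this is what lets the per-sample variance bound $\sigma^2$ propagate to the average. Apart from making that assumption explicit, there is no genuine obstacle here --- the lemma is the elementary statement that averaging $b$ independent, unbiased estimators reduces the second moment of the estimation error by a factor of $b$.
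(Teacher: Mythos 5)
Your proof is correct and follows essentially the same route as the paper's: write $\mathbf{g}_k-\Bar{\mathbf{g}}_k$ as an average of $b$ mean-zero per-sample deviations, expand the squared norm so the cross terms vanish, and bound the $b$ surviving diagonal terms by $\sigma^2$ to get $\sigma^2/b$. Your write-up is in fact more careful than the paper's (which drops the cross terms silently and contains notational slips such as $\ell$ in place of $\nabla\ell$), and your remark that the within-batch i.i.d.\ sampling of $\boldsymbol{\zeta}_i$ --- rather than the cross-step independence clause of the assumption --- is what justifies killing the cross terms is exactly right.
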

\begin{proof}
\begin{align*} 
\mathbb{E}\left\|\Bar{\mathbf{g}}_k-\mathbf{g}_k\right\|^2 &=\frac{1}{\left|\mathcal{Z}_k\right|^2} \|\sum_{\boldsymbol{\zeta}_i \in \mathcal{Z}_k} \mathbf{g}_k-\left|\mathcal{Z}_k\right|\Bar{\mathbf{g}}_k\|^2 \\
&=\frac{1}{\left|\mathcal{Z}_k\right|^2} \sum_{\boldsymbol{\zeta}_i \in \mathcal{Z}_k}\| \ell\left(\boldsymbol{\theta}_k, \boldsymbol{\zeta}_i\right)-\mathbb{E}\ell(\boldsymbol{\theta},\boldsymbol{\zeta})\|^2\\
&\leq\frac{\tilde{\sigma}^2}{b} 
\end{align*}
\end{proof}

\begin{lemma}[Bound for $\left\|\left(\boldsymbol{\theta}_{k+1}-\boldsymbol{\theta}_k\right)\right\|^2$ and $\left\|\left(\boldsymbol{\theta}_{k+1}-\boldsymbol{\theta}_k\right)\right\|_{\sqrt{\mathbf{v}_k}}^2$]\label{the}

\begin{equation*}
\frac{\eta^2\varepsilon_3^2}{G_{\infty}^2 d}\mathbb{E}\left\|\mathbf{m}_k+\lambda_k \tilde{\boldsymbol{\theta}}_k\right\|^2\leq
\mathbb{E}\left\|\boldsymbol{\theta}_k-\boldsymbol{\theta}_{k+1}\right\|^2
\leq \mathbb{E}\left\|\mathbf{m}_k+\lambda_k \tilde{\boldsymbol{\theta}}_k\right\|^2\frac{\eta^2}{\varepsilon_1^2\left(\hat{\varepsilon}_2+\lambda_k\eta\right)^2}\leq \mathbb{E}\left\|\mathbf{m}_k+\lambda_k \tilde{\boldsymbol{\theta}}_k\right\|^2\frac{\eta^2}{\varepsilon_1^2\hat{\varepsilon}_2^2}.
\end{equation*}

\begin{equation*}
\frac{\eta^2\varepsilon_3^2}{G_{\infty}d}\mathbb{E}\left\|\mathbf{m}_k+\lambda_k \tilde{\boldsymbol{\theta}}_k\right\|^2\leq
\mathbb{E}\left\|\boldsymbol{\theta}_k-\boldsymbol{\theta}_{k+1}\right\|_{\sqrt{\mathbf{v}_k}}^2
\leq \mathbb{E}\left\|\mathbf{m}_k+\lambda_k \tilde{\boldsymbol{\theta}}_k\right\|^2\frac{\eta^2}{\varepsilon_1\left(\hat{\varepsilon}_2+\lambda_k\eta\right)^2}\leq \mathbb{E}\left\|\mathbf{m}_k+\lambda_k \tilde{\boldsymbol{\theta}}_k\right\|^2\frac{\eta^2}{\varepsilon_1\hat{\varepsilon}_2^2}.
\end{equation*}
\end{lemma}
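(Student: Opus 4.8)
The plan is to establish both chains coordinate-by-coordinate within each layer $i$ and then sum over the $h$ layers, using only the explicit update of \cref{algncv}. First I would record the per-layer step $\boldsymbol{\theta}^{(i)}_{k+1}-\boldsymbol{\theta}^{(i)}_k = -\eta_k\frac{\phi(\|\boldsymbol{\theta}^{(i)}_k\|)}{\|\mathbf{r}^{(i)}_k\|+\varepsilon_2\phi(\|\boldsymbol{\theta}^{(i)}_k\|)}\mathbf{r}^{(i)}_k$ and the elementwise identity $(\sqrt{\mathbf{v}_k}+\varepsilon_1)\mathbf{r}_k=\mathbf{m}_k+\lambda_k\tilde{\boldsymbol{\theta}}_k$, which is immediate from $\mathbf{r}_k=\mathbf{m}_k/(\sqrt{\mathbf{v}_k}+\varepsilon_1)+\lambda_k\boldsymbol{\theta}_k$ and the definition $\tilde{\boldsymbol{\theta}}_k=(\sqrt{\mathbf{v}_k}+\varepsilon_1)\boldsymbol{\theta}_k$. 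The second ingredient is the two-sided coordinate bound $\varepsilon_1\le\sqrt{\mathbf{v}_{k,j}}+\varepsilon_1\le G_{\infty}$: the lower bound is trivial, and the upper bound follows from \cref{bg} once I check that the accelerated gradient obeys $\|\mathbf{g}_k+\alpha\mathbf{a}_k\|_{\infty}\le G_{\infty}$ --- here $\mathbf{a}_k$ is an EMA of gradient differences, so $\|\mathbf{a}_k\|_{\infty}$ stays controlled by $G_{\infty}$, and the slack in $\|\mathbf{g}_k\|_{\infty}\le G_{\infty}/16$ together with the constraint on $|\alpha|$ absorbs the factor --- giving $\mathbf{v}_{k,j}\le G_{\infty}^2$. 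With $\phi\ge\varepsilon_3$ and $\varepsilon_2=\hat{\varepsilon}_2+\lambda_k\eta$, these are the only facts needed.

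For the two right-hand (upper) bounds I would discard $\|\mathbf{r}^{(i)}_k\|\ge 0$ in the denominator to obtain the trust-ratio estimate $\frac{\phi(\|\boldsymbol{\theta}^{(i)}_k\|)}{\|\mathbf{r}^{(i)}_k\|+\varepsilon_2\phi(\|\boldsymbol{\theta}^{(i)}_k\|)}\le \frac{1}{\varepsilon_2}$. Writing each coordinate of the step as $\eta_k\frac{\phi}{\|\mathbf{r}^{(i)}_k\|+\varepsilon_2\phi}\cdot\frac{(\mathbf{m}_k+\lambda_k\tilde{\boldsymbol{\theta}}_k)_j}{\sqrt{\mathbf{v}_{k,j}}+\varepsilon_1}$ and squaring, the plain $\ell_2$ norm accumulates two factors of $(\sqrt{\mathbf{v}_{k,j}}+\varepsilon_1)^{-1}\le\varepsilon_1^{-1}$, producing $\frac{\eta^2}{\varepsilon_1^2\varepsilon_2^2}$, whereas the $\sqrt{\mathbf{v}_k}$-weighted norm carries an extra weight $\sqrt{\mathbf{v}_{k,j}}+\varepsilon_1$ that cancels one of them and leaves $\frac{\eta^2}{\varepsilon_1\varepsilon_2^2}$. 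Summing over layers and using $\varepsilon_2=\hat{\varepsilon}_2+\lambda_k\eta\ge\hat{\varepsilon}_2$ for the final inequality in each display (then taking expectations) yields both upper bounds.

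The lower bounds are the crux. Here I would retain the numerator through $\phi\ge\varepsilon_3$ and use $\sqrt{\mathbf{v}_{k,j}}+\varepsilon_1\le G_{\infty}$, so the whole difficulty reduces to an \emph{upper} bound on the denominator $\|\mathbf{r}^{(i)}_k\|+\varepsilon_2\phi(\|\boldsymbol{\theta}^{(i)}_k\|)$. This is precisely where the dimension factor enters: I would bound $\|\mathbf{r}^{(i)}_k\|\le\sqrt{d_i}\,\|\mathbf{r}_k\|_{\infty}\le\sqrt{d}\,\|\mathbf{r}_k\|_{\infty}$ and control $\|\mathbf{r}_k\|_{\infty}$ coordinatewise, the momentum part $|\mathbf{m}_{k,j}|/(\sqrt{\mathbf{v}_{k,j}}+\varepsilon_1)$ being an Adam-type ratio bounded by an $\mathcal{O}(1)$ constant (via $\mathbf{m}_{k,j}^2\le\mathbf{v}_{k,j}$ up to the $\beta_2,\beta_3$ mismatch) and the weight-decay part $\lambda_k|\boldsymbol{\theta}_{k,j}|$ being controlled by the decay of $\lambda_k$ (or by \cref{bp}). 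The resulting denominator bound of order $\sqrt{d}$ turns the trust ratio into $\frac{\phi}{\|\mathbf{r}^{(i)}_k\|+\varepsilon_2\phi}\gtrsim\frac{\varepsilon_3}{\sqrt{d}}$; combined with the coordinate weight (bounded below by $\varepsilon_1$ and appearing twice for the plain norm but once for the weighted one) and summed over layers, this produces $\frac{\eta^2\varepsilon_3^2}{G_{\infty}^2 d}$ and $\frac{\eta^2\varepsilon_3^2}{G_{\infty} d}$ respectively. I expect this denominator estimate to be the main obstacle: unlike the upper direction it is not monotone in $\|\mathbf{r}^{(i)}_k\|$, so obtaining clean uniform control of $\|\mathbf{r}^{(i)}_k\|$ --- hence the correct $\sqrt{d}$ and $G_{\infty}$ scalings, together with verifying $\|\mathbf{g}_k+\alpha\mathbf{a}_k\|_{\infty}\le G_{\infty}$ --- is the delicate step, after which the weighted-norm statement follows from the identical computation with one fewer factor of $(\sqrt{\mathbf{v}_k}+\varepsilon_1)$.
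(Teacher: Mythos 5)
Your proposal is correct and is essentially the paper's own proof: the same rewriting $(\sqrt{\mathbf{v}_k}+\varepsilon_1)\mathbf{r}_k=\mathbf{m}_k+\lambda_k\tilde{\boldsymbol{\theta}}_k$, the same trust-ratio bound $\phi/(\|\mathbf{r}_k^{(i)}\|+\varepsilon_2\phi)\le 1/\varepsilon_2$ for the upper inequalities, and for the lower ones the same two ingredients --- $\sqrt{\mathbf{v}_k}+\varepsilon_1\le G_{\infty}$ (the paper's \cref{norm}) and an $\mathcal{O}(\sqrt{d}/\varepsilon_3)$ upper bound on $\|\mathbf{r}_k^{(i)}\|/\phi$, which the paper obtains in \cref{r/the} by precisely your coordinatewise Cauchy--Schwarz control of the Adam-type ratio under $\beta_2^2<\beta_3$ --- followed by the same one-weight cancellation that distinguishes the $\sqrt{\mathbf{v}_k}$-weighted bound from the plain one. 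The only cosmetic difference is that you route the dimension factor through $\|\mathbf{r}_k^{(i)}\|\le\sqrt{d}\,\|\mathbf{r}_k\|_{\infty}$ while the paper bounds $\|\mathbf{m}_k/\sqrt{\mathbf{v}_k}\|$ as a whole vector, and both treatments absorb the weight-decay contribution into omitted constants.
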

\begin{proof} According to the update equation
\begin{equation*}
\boldsymbol{\theta}_{k+1}^{(i)}=\boldsymbol{\theta}_k^{(i)}-\eta \mathbf{r}_k^{(i)} \frac{\phi\left(\left\|\boldsymbol{\theta}_k^{(i)}\right\|\right)}{\left\|\mathbf{r}_k^{(i)}\right\|+\varepsilon_2\phi\left(\left\|\boldsymbol{\theta}_k^{(i)}\right\|\right)},
\end{equation*}
we have
\begin{equation*}
\lambda_k \boldsymbol{\theta}_k+\mathbf{p}_k=\frac{\lambda_k \tilde{\boldsymbol{\theta}}_k+\mathbf{m}_k}{\sqrt{\mathbf{v}_k}+\varepsilon_1}=\frac{\left\|\mathbf{r}_k^{(i)}\right\|+\varepsilon_2\phi\left(\left\|\boldsymbol{\theta}_k^{(i)}\right\|\right)}{\eta\phi\left(\left\|\boldsymbol{\theta}_k^{(i)}\right\|\right)}\left(\boldsymbol{\theta}_k-\boldsymbol{\theta}_{k+1}\right) .
\end{equation*}
Then, compute the second moment on both side
\begin{align*}
&\mathbb{E}\left\|\left(\boldsymbol{\theta}_k-\boldsymbol{\theta}_{k+1}\right)\right\|^2=
\mathbb{E}\left\|\frac{\lambda_k \tilde{\boldsymbol{\theta}}_k+\mathbf{m}_k}{\sqrt{\mathbf{v}_k}+\varepsilon_1}\frac{\eta\phi\left(\left\|\boldsymbol{\theta}_k^{(i)}\right\|\right)}{\left\|\mathbf{r}_k^{(i)}\right\|+\varepsilon_2\phi\left(\left\|\boldsymbol{\theta}_k^{(i)}\right\|\right)}\right\|^2\\
&\leq \mathbb{E}\left\|\mathbf{m}_k+\lambda_k \tilde{\boldsymbol{\theta}}_k\right\|^2\frac{\eta^2}{\varepsilon_1^2\left(\hat{\varepsilon}_2+\lambda_k\eta\right)^2}\leq \mathbb{E}\left\|\mathbf{m}_k+\lambda_k \tilde{\boldsymbol{\theta}}_k\right\|^2\frac{\eta^2}{\varepsilon_1^2\hat{\varepsilon}_2^2}.
\end{align*}
we find the following estimate
\begin{equation*}
\frac{\eta\varepsilon_3}{G_{\infty}\sqrt{d}}\leq\frac{1}{\sqrt{\mathbf{v}_k}+\varepsilon_1}\frac{\eta\phi\left(\left\|\boldsymbol{\theta}_k^{(i)}\right\|\right)}{\left\|\mathbf{r}_k^{(i)}\right\|+\varepsilon_2\phi\left(\left\|\boldsymbol{\theta}_k^{(i)}\right\|\right)}\leq \frac{\eta}{\varepsilon_1\left(\hat{\varepsilon}_2+\lambda_k\eta\right)}\leq \frac{\eta}{\varepsilon_1\hat{\varepsilon}_2}.
\end{equation*}
based on \cref{r/the}, switching $\beta_2$ and $\beta_3$ pair with $1-\beta_2$ and $1-\beta_3$ pair, then with unrelated constant omitted, we have
\begin{equation*}
\frac{\left\|\mathbf{r}_k^{(i)}\right\|}{\phi\left(\left\|\boldsymbol{\theta}_k^{(i)}\right\|\right)}\leq \frac{\beta_2}{\varepsilon_3}\sqrt{\frac{d\left(1-\beta_3\right)}{\beta_3\left(\left(1-\beta_3\right)-\left(1-\beta_2\right)^2\right)}}
\sim \frac{\sqrt{d}}{\varepsilon_3}\end{equation*}

Thus,
\begin{equation*}
\frac{\eta^2\varepsilon_3^2}{G_{\infty}^2 d}\mathbb{E}\left\|\mathbf{m}_k+\lambda_k \tilde{\boldsymbol{\theta}}_k\right\|^2\leq
\mathbb{E}\left\|\boldsymbol{\theta}_k-\boldsymbol{\theta}_{k+1}\right\|^2
\leq \mathbb{E}\left\|\mathbf{m}_k+\lambda_k \tilde{\boldsymbol{\theta}}_k\right\|^2\frac{\eta^2}{\varepsilon_1^2\left(\hat{\varepsilon}_2+\lambda_k\eta\right)^2}\leq \mathbb{E}\left\|\mathbf{m}_k+\lambda_k \tilde{\boldsymbol{\theta}}_k\right\|^2\frac{\eta^2}{\varepsilon_1^2\hat{\varepsilon}_2^2}.
\end{equation*}

\begin{equation*}
\frac{\eta^2\varepsilon_3^2}{G_{\infty} d}\mathbb{E}\left\|\mathbf{m}_k+\lambda_k \tilde{\boldsymbol{\theta}}_k\right\|^2\leq
\mathbb{E}\left\|\boldsymbol{\theta}_k-\boldsymbol{\theta}_{k+1}\right\|_{\sqrt{\mathbf{v}_k}}^2
\leq \mathbb{E}\left\|\mathbf{m}_k+\lambda_k \tilde{\boldsymbol{\theta}}_k\right\|^2\frac{\eta^2}{\varepsilon_1\left(\hat{\varepsilon}_2+\lambda_k\eta\right)^2}\leq \mathbb{E}\left\|\mathbf{m}_k+\lambda_k \tilde{\boldsymbol{\theta}}_k\right\|^2\frac{\eta^2}{\varepsilon_1\hat{\varepsilon}_2^2}.
\end{equation*}
\end{proof}

\begin{lemma}\label{norm}
If \cref{uiv,bg} hold. We have:
$\left\|\mathbf{u}_k\right\|_{\infty} \leq G_{\infty}, \left\|\mathbf{m}_k\right\|_{\infty} \leq G_{\infty},\left\|\mathbf{v}_k\right\|_{\infty} \leq G_{\infty}^2$.
\end{lemma}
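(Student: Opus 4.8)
The three estimates are all sup-norm bounds on exponential moving averages, so the plan is to exploit a single structural fact: an EMA is a convex combination of its inputs with total weight at most one. Concretely, if $\mathbf{w}_k = c\,\mathbf{w}_{k-1} + (1-c)\mathbf{x}_k$ with $\mathbf{w}_0 = \mathbf{0}$ and $c \in [0,1)$, then $\|\mathbf{w}_k\|_\infty \le \max_{j\le k}\|\mathbf{x}_j\|_\infty$. Reading off the definitions in the Preliminary, $\mathbf{u}_k$ is the EMA of $\mathbf{g}_k$, $\mathbf{n}_k$ the EMA of $\mathbf{a}_k$, $\mathbf{m}_k = \mathbf{u}_k + \alpha\mathbf{n}_k$, and $\mathbf{v}_k$ the EMA of $(\mathbf{g}_k+\alpha\mathbf{a}_k)^2$. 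Thus everything reduces to controlling $\|\mathbf{g}_k\|_\infty$ (given by \cref{bg} as $G_\infty/16$) and, crucially, $\|\mathbf{a}_k\|_\infty$; all the estimates are pathwise, so in fact only \cref{bg} is needed.

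First I would dispatch the easy piece. By the convex-combination bound and \cref{bg}, $\|\mathbf{u}_k\|_\infty \le G_\infty/16 \le G_\infty$, which already settles the first claim. The whole difficulty is concentrated in $\mathbf{a}_k$: a naive estimate $\|\mathbf{g}_k-\mathbf{g}_{k-1}\|_\infty \le G_\infty/8$ followed by the convex-combination bound gives only $\|\mathbf{a}_k\|_\infty \le G_\infty/8$, and this is fatal, because it must be multiplied by $|\alpha|$, which the constraint in \cref{algncv} only bounds by $1/\beta_1$ --- potentially enormous for small $\beta_1$.

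The key step, then, is to extract a compensating factor of $\beta_1$ from $\mathbf{a}_k$. I would introduce the auxiliary EMA $\mathbf{b}_k := (1-\beta_1)\mathbf{b}_{k-1}+\beta_1\mathbf{g}_k$ (with $\mathbf{b}_0 = \mathbf{0}$) and verify by telescoping that $\mathbf{a}_k = \mathbf{b}_k - \mathbf{b}_{k-1} = \beta_1(\mathbf{g}_k - \mathbf{b}_{k-1})$; equivalently one performs an Abel summation on the unrolled recursion $\mathbf{a}_k = \beta_1\sum_{j=1}^k (1-\beta_1)^{k-j}(\mathbf{g}_j-\mathbf{g}_{j-1})$. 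Since $\mathbf{b}_{k-1}$ is itself an EMA of gradients, $\|\mathbf{b}_{k-1}\|_\infty \le G_\infty/16$, and hence $\|\mathbf{a}_k\|_\infty \le \beta_1(\|\mathbf{g}_k\|_\infty + \|\mathbf{b}_{k-1}\|_\infty) \le \beta_1 G_\infty/8$. This is the heart of the lemma and the step I expect to be the main obstacle.

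With the sharp $\mathbf{a}_k$ bound in hand, the rest is bookkeeping. The convex-combination bound gives $\|\mathbf{n}_k\|_\infty \le \beta_1 G_\infty/8$, so $|\alpha|\,\|\mathbf{n}_k\|_\infty < \beta_1^{-1}\cdot\beta_1 G_\infty/8 = G_\infty/8$ by $|\alpha| < 1/\beta_1$, whence $\|\mathbf{m}_k\|_\infty \le \|\mathbf{u}_k\|_\infty + |\alpha|\,\|\mathbf{n}_k\|_\infty \le G_\infty/16 + G_\infty/8 \le G_\infty$. For $\mathbf{v}_k$, the same two bounds give $\|\mathbf{g}_k+\alpha\mathbf{a}_k\|_\infty \le G_\infty/16 + |\alpha|\beta_1 G_\infty/8 \le 3G_\infty/16$, so each coordinate of $(\mathbf{g}_k+\alpha\mathbf{a}_k)^2$ is at most $(3G_\infty/16)^2 < G_\infty^2$, and applying the convex-combination bound to the EMA $\mathbf{v}_k$ yields $\|\mathbf{v}_k\|_\infty \le G_\infty^2$. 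It is worth noting that the otherwise mysterious factor $1/16$ in \cref{bg} and the constraint $|\alpha| < 1/\beta_1$ are calibrated precisely so that these three bounds close with room to spare.
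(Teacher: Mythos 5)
Your proof is correct and follows essentially the same route as the paper's: the paper also unrolls the recursions and performs the summation-by-parts rewriting $\mathbf{a}_k=\beta_1\mathbf{g}_k-\beta_1\sum_{t=1}^{k-1}\beta_1(1-\beta_1)^{k-t-1}\mathbf{g}_t$ (your identity $\mathbf{a}_k=\beta_1(\mathbf{g}_k-\mathbf{b}_{k-1})$ is exactly this), obtaining $\|\mathbf{a}_k\|_\infty\leq\beta_1 G_\infty/2$ and then closing via $|\alpha\beta_1|<1$ just as you do. Your writeup is in fact slightly more careful, tracking the constant $G_\infty/8$ explicitly and noting that only \cref{bg} is actually used.
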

\begin{proof}
By the definition of $\mathbf{a}_k, \mathbf{u}_k,\mathbf{n}_k,\mathbf{v}_k$, we can have that:
\begin{align*}
\mathbf{n}_k&=\beta_2\sum_{t=1}^k \left(1-\beta_2\right)^{k-t}\mathbf{a}_t\\
\mathbf{v}_k&=\beta_3\sum_{t=1}^k \left(1-\beta_3\right)^{k-t}\left(\mathbf{g}_t+\alpha_t\mathbf{a}_t\right)^2\\  
\mathbf{u}_k&=\beta_2\sum_{t=1}^k \left(1-\beta_2\right)^{k-t}\mathbf{g}_t\\
\mathbf{a}_k&=\beta_1\sum_{t=1}^k \left(1-\beta_1\right)^{k-t}\left(\mathbf{g}_t-\mathbf{g}_{t-1}\right)\\
&=-\beta_1\sum_{t=1}^k \beta_1\left(1-\beta_1\right)^{k-t-1}\mathbf{g}_t+\beta_1\mathbf{g}_k.
\end{align*}

Considering $|\alpha\beta_1|<1$, we get:

\begin{align*}
& \left\|\mathbf{u}_k\right\|_{\infty} \leq \frac{G_{\infty}}{2},
\\
&\left\|\mathbf{a}_k\right\|_{\infty}\leq \frac{\beta_1 G_{\infty}}{2}\\
&\left\|\mathbf{n}_k\right\|_{\infty}=\leq \frac{\beta_1 G_{\infty}}{2}\\
&\left\|\mathbf{m}_k\right\|_{\infty}=\left\|\mathbf{u}_k+\alpha\mathbf{n}_k\right\|_{\infty}\leq G_{\infty}\\
& \left\|\mathbf{v}_k\right\|_{\infty} \leq G_{\infty}^2 .
\end{align*}
   
\end{proof}

\begin{lemma}\label{mu}
If \cref{uiv,bg} hold, we have:
\begin{equation*}
\left|\left(\frac{\sqrt{\mathbf{v}_{k-1}}+\varepsilon}{\sqrt{\mathbf{v}_k}+\varepsilon}\right)_i-1\right|\leq
\frac{\sqrt{\beta_3} G_{\infty}}{\varepsilon},
\end{equation*}
which implies
\begin{equation*}
\lambda_{k+1}\left\|\boldsymbol{\theta}_{k+1}\right\|_{\sqrt{\mathbf{v}_{k+1}}}^2 \leq \frac{\lambda_{k+1}}{1-\mu}\left\|\boldsymbol{\theta}_{k+1}\right\|_{\sqrt{\mathbf{v}_k}}^2=\lambda_k\left\|\boldsymbol{\theta}_{k+1}\right\|_{\sqrt{\mathbf{v}_k}}^2.
\end{equation*}
\end{lemma}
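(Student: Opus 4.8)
The plan is to prove the per-coordinate bound first and then read off both displayed inequalities from it. Fix a coordinate $i$ and write $\left(\frac{\sqrt{\mathbf{v}_{k-1}}+\varepsilon}{\sqrt{\mathbf{v}_k}+\varepsilon}\right)_i-1 = \frac{\sqrt{\mathbf{v}_{k-1,i}}-\sqrt{\mathbf{v}_{k,i}}}{\sqrt{\mathbf{v}_{k,i}}+\varepsilon}$. Since $\sqrt{\mathbf{v}_{k,i}}\ge 0$, replacing the denominator by the smaller quantity $\varepsilon$ shows it suffices to establish the two-sided increment bound $\bigl|\sqrt{\mathbf{v}_{k-1,i}}-\sqrt{\mathbf{v}_{k,i}}\bigr|\le \sqrt{\beta_3}\,G_{\infty}$, which upon division by $\varepsilon$ is exactly $\mu$.

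For the increment bound I would use the recurrence $\mathbf{v}_k=(1-\beta_3)\mathbf{v}_{k-1}+\beta_3[\mathbf{g}_k+\alpha\mathbf{a}_k]^2$ together with subadditivity of the square root, treating the two directions separately. Upward, $\sqrt{\mathbf{v}_{k,i}}\le \sqrt{1-\beta_3}\,\sqrt{\mathbf{v}_{k-1,i}}+\sqrt{\beta_3}\,|[\mathbf{g}_k+\alpha\mathbf{a}_k]_i|\le \sqrt{\mathbf{v}_{k-1,i}}+\sqrt{\beta_3}\,G_{\infty}$, using $\sqrt{1-\beta_3}\le 1$ and the instantaneous bound $|[\mathbf{g}_k+\alpha\mathbf{a}_k]_i|\le G_{\infty}$ that follows from the estimates in \cref{norm}. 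Downward, from $\mathbf{v}_{k,i}\ge(1-\beta_3)\mathbf{v}_{k-1,i}$ I get $\sqrt{\mathbf{v}_{k-1,i}}-\sqrt{\mathbf{v}_{k,i}}\le (1-\sqrt{1-\beta_3})\sqrt{\mathbf{v}_{k-1,i}}$, and then the elementary inequality $1-\sqrt{1-\beta_3}\le\sqrt{\beta_3}$ (equivalent to $\sqrt{\beta_3}+\sqrt{1-\beta_3}\ge 1$, true because the square of the right-hand side equals $1+2\sqrt{\beta_3(1-\beta_3)}\ge 1$) combined with $\sqrt{\mathbf{v}_{k-1,i}}\le G_{\infty}$ closes the gap. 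The two directions together give the claimed symmetric bound.

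For the implication I would apply the established bound with indices shifted up by one, which yields $\left(\frac{\sqrt{\mathbf{v}_{k}}+\varepsilon}{\sqrt{\mathbf{v}_{k+1}}+\varepsilon}\right)_i\ge 1-\mu$, hence $\sqrt{\mathbf{v}_{k+1,i}}+\varepsilon\le\frac{1}{1-\mu}(\sqrt{\mathbf{v}_{k,i}}+\varepsilon)$ for every $i$. Multiplying by the nonnegative weight $(\boldsymbol{\theta}_{k+1})_i^2$ and summing over $i$ turns this coordinatewise estimate into $\|\boldsymbol{\theta}_{k+1}\|_{\sqrt{\mathbf{v}_{k+1}}}^2\le\frac{1}{1-\mu}\|\boldsymbol{\theta}_{k+1}\|_{\sqrt{\mathbf{v}_k}}^2$; multiplying through by $\lambda_{k+1}$ and invoking the schedule $\lambda_k=\lambda(1-\mu)^k$ (so that $\lambda_{k+1}/(1-\mu)=\lambda_k$) produces the final equality.

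The main obstacle is obtaining genuinely two-sided control of the square-root increment: the naive identity $\sqrt{a}-\sqrt{b}=(a-b)/(\sqrt{a}+\sqrt{b})$ has a denominator that can vanish (both $\mathbf{v}$'s may be near zero early in training), so I would deliberately route around it via subadditivity, where the only nonroutine ingredient is verifying $1-\sqrt{1-\beta_3}\le\sqrt{\beta_3}$ to make the upward and downward estimates symmetric.
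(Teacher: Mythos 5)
Your proof is correct, and its skeleton matches the paper's: reduce to the increment bound $\bigl|\sqrt{\mathbf{v}_{k-1,i}}-\sqrt{\mathbf{v}_{k,i}}\bigr|\le\sqrt{\beta_3}\,G_{\infty}$, lower-bound the denominator by $\varepsilon$, and get the implication by shifting indices and invoking the schedule $\lambda_k=\lambda(1-\mu)^k$ from \cref{ncv} (you are right that this schedule is needed for the final equality even though the lemma statement does not list it). Where you differ is in how the increment bound is obtained. The paper handles both signs at once with the single concavity inequality $\left|\sqrt{a}-\sqrt{b}\right|\le\sqrt{\left|a-b\right|}$, applied after factoring the recurrence difference as $\mathbf{v}_{k-1}-\mathbf{v}_k=\beta_3\bigl(\mathbf{v}_{k-1}-(\mathbf{g}_k+\alpha\mathbf{a}_k)^2\bigr)$; the factor $\sqrt{\beta_3}$ and the bound $G_{\infty}$ (via \cref{norm} and \cref{bg}) then drop out in one line. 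You instead split into an upward estimate (subadditivity of $\sqrt{\cdot}$ applied directly to the recurrence) and a downward estimate ($\mathbf{v}_{k,i}\ge(1-\beta_3)\mathbf{v}_{k-1,i}$ together with $1-\sqrt{1-\beta_3}\le\sqrt{\beta_3}$ and $\sqrt{\mathbf{v}_{k-1,i}}\le G_{\infty}$). Your route is a bit longer but self-contained, making explicit the elementary facts the paper's one-liner silently relies on (the concavity inequality is itself proved by the same subadditivity you use). One small correction of framing: the "main obstacle" you describe is not one the paper faces---it never uses the conjugate identity $\sqrt{a}-\sqrt{b}=(a-b)/(\sqrt{a}+\sqrt{b})$, so the vanishing-denominator pitfall you carefully route around is avoided just as cleanly, and more briefly, by the paper's $\left|\sqrt{a}-\sqrt{b}\right|\le\sqrt{\left|a-b\right|}$.
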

\begin{proof}
Give any index $i \in[d]$ and the definitions of $\mathbf{v}_k$, we have:
\begin{equation*}
\left|\left(\frac{\sqrt{\mathbf{v}_{k-1}}+\varepsilon}{\sqrt{\mathbf{v}_k}+\varepsilon}\right)_i-1\right|=\left|\left(\frac{\sqrt{\mathbf{v}_{k-1}}-\sqrt{\mathbf{v}_k}}{\sqrt{\mathbf{v}_k}+\varepsilon}\right)_i\right| \text {. }
\end{equation*}

Note that, by the definition of $\mathbf{v}_k$, we have:

\begin{align*}
& \left|\left(\frac{\sqrt{\mathbf{v}_{k-1}}-\sqrt{\mathbf{v}_k}}{\sqrt{\mathbf{v}_k}+\varepsilon}\right)_i\right| \leq\left|\left(\frac{\sqrt{\left|\mathbf{v}_{k-1}-\mathbf{v}_k\right|}}{\sqrt{\mathbf{v}_k}+\varepsilon}\right)_i\right| \\
&=\sqrt{\beta_3}\left(\frac{\sqrt{\left|\mathbf{v}_{k-1}-\left(\mathbf{g}_k+\alpha\mathbf{a}_k\right)^2\right|}}{\sqrt{\mathbf{v}_k}+\varepsilon}\right)_i \leq \frac{\sqrt{\beta_3} G_{\infty}}{\varepsilon}, \\
&
\end{align*}
  
\end{proof} 
\begin{lemma}\label{u-g}
Consider a moving average sequence:
\begin{equation}
\mathbf{u}_k=(1-\beta_2) \mathbf{u}_{k-1}+\beta_2 \mathbf{g}_k,\label{uiter}
\end{equation}
Then we have:
\begin{equation*}
\mathbb{E}\left(\left\|\mathbf{u}_k-\Bar{\mathbf{g}}_k\right\|^2\right) \leq(1-\beta_2) \mathbb{E}\left(\left\|\mathbf{u}_{k-1}-\Bar{\mathbf{g}}_{k-1}\right\|^2\right)+\frac{(1-\beta_2)^2 L^2}{\beta_2} \mathbb{E}\left(\left\|\boldsymbol{\theta}_{k-1}-\boldsymbol{\theta}_k\right\|^2\right)+\beta_2^2 \sigma^2 .
\end{equation*} 
\end{lemma}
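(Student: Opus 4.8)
The plan is to track the EMA error $\mathbf{u}_k-\Bar{\mathbf{g}}_k$ through one update step and show it contracts by a factor $(1-\beta_2)$ up to two controllable perturbations: the drift of the true gradient caused by moving from $\boldsymbol{\theta}_{k-1}$ to $\boldsymbol{\theta}_k$, and the fresh stochastic noise injected at step $k$. First I would subtract $\Bar{\mathbf{g}}_k$ from both sides of the recursion \eqref{uiter} and insert $\pm(1-\beta_2)\Bar{\mathbf{g}}_{k-1}$ to produce the exact decomposition
\begin{equation*}
\mathbf{u}_k-\Bar{\mathbf{g}}_k=(1-\beta_2)\left(\mathbf{u}_{k-1}-\Bar{\mathbf{g}}_{k-1}\right)+(1-\beta_2)\left(\Bar{\mathbf{g}}_{k-1}-\Bar{\mathbf{g}}_k\right)+\beta_2\left(\mathbf{g}_k-\Bar{\mathbf{g}}_k\right).
\end{equation*}
I denote the first two (history-measurable) terms together as $S_k$ and the last as the noise term $N_k$.

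The key step is to eliminate the cross term when expanding $\|S_k+N_k\|^2$. Because $\boldsymbol{\theta}_k$ is determined by the iterates up to step $k-1$, the vector $S_k$ is measurable with respect to the $\sigma$-algebra $\mathcal{F}_{k-1}$ generated by $\mathbf{g}_1,\dots,\mathbf{g}_{k-1}$, whereas by \cref{uiv} the stochastic gradient $\mathbf{g}_k$ is independent of that history and satisfies $\mathbb{E}[\mathbf{g}_k\mid\mathcal{F}_{k-1}]=\Bar{\mathbf{g}}_k$. Hence $\mathbb{E}\langle S_k,N_k\rangle=0$, giving $\mathbb{E}\|\mathbf{u}_k-\Bar{\mathbf{g}}_k\|^2=\mathbb{E}\|S_k\|^2+\mathbb{E}\|N_k\|^2$. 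The noise term is then controlled directly by the variance assumption, $\mathbb{E}\|N_k\|^2=\beta_2^2\,\mathbb{E}\|\mathbf{g}_k-\Bar{\mathbf{g}}_k\|^2\le\beta_2^2\sigma^2$, which produces the last summand in the claim.

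It remains to split $\mathbb{E}\|S_k\|^2$. I would apply the weighted inequality $\|x+y\|^2\le(1+\rho)\|x\|^2+(1+\rho^{-1})\|y\|^2$ with the tuned choice $\rho=\beta_2/(1-\beta_2)$, so that $(1-\beta_2)^2(1+\rho)=1-\beta_2$ collapses exactly to the advertised contraction factor while $(1-\beta_2)^2(1+\rho^{-1})=(1-\beta_2)^2/\beta_2$. This yields
\begin{equation*}
\mathbb{E}\|S_k\|^2\le(1-\beta_2)\,\mathbb{E}\|\mathbf{u}_{k-1}-\Bar{\mathbf{g}}_{k-1}\|^2+\frac{(1-\beta_2)^2}{\beta_2}\,\mathbb{E}\|\Bar{\mathbf{g}}_{k-1}-\Bar{\mathbf{g}}_k\|^2.
\end{equation*}
Finally, since $\Bar{\mathbf{g}}_k=\nabla\mathbb{E}\ell(\boldsymbol{\theta}_k,\boldsymbol{\zeta})$ and $\ell$ is $L$-smooth by \cref{ls} (a property inherited by the expectation), the gradient drift obeys $\|\Bar{\mathbf{g}}_{k-1}-\Bar{\mathbf{g}}_k\|\le L\|\boldsymbol{\theta}_{k-1}-\boldsymbol{\theta}_k\|$; substituting its square closes the recursion and reproduces the stated bound exactly.

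The main obstacle I anticipate is the conditioning and measurability bookkeeping needed to justify $\mathbb{E}\langle S_k,N_k\rangle=0$: one must argue carefully that $\Bar{\mathbf{g}}_k$ (through $\boldsymbol{\theta}_k$) depends only on $\mathbf{g}_{1:k-1}$ and not on the sample drawn at step $k$, so that the independence and unbiasedness clauses of \cref{uiv} together force the conditional expectation of $N_k$ to vanish. The only other delicate point is the precise value of the splitting parameter $\rho$; any other choice would leave a residual factor larger than $1-\beta_2$ on $\mathbb{E}\|\mathbf{u}_{k-1}-\Bar{\mathbf{g}}_{k-1}\|^2$ and spoil the clean contraction, whereas $\rho=\beta_2/(1-\beta_2)$ is the unique value making the coefficient collapse.
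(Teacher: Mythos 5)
Your proposal is correct and takes essentially the same route as the paper's proof: the identical $\pm(1-\beta_2)\Bar{\mathbf{g}}_{k-1}$ decomposition, the identical Young-type splitting with parameter $\rho=\beta_2/(1-\beta_2)$ (the paper's $a$), and the same application of $L$-smoothness to convert the drift $\|\Bar{\mathbf{g}}_{k-1}-\Bar{\mathbf{g}}_k\|$ into $L\|\boldsymbol{\theta}_{k-1}-\boldsymbol{\theta}_k\|$. Your explicit measurability argument for why the noise cross term vanishes is merely a more careful justification of what the paper does implicitly when it drops those terms (and the paper actually retains the sharper batch-averaged noise bound $\beta_2^2\sigma^2/b$, which only strengthens the stated inequality).
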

\begin{proof}
According to \cref{uiter}, we have
\begin{equation*}
\mathbf{u}_k-\Bar{\mathbf{g}}_k=(1-\beta_2)\left(\mathbf{u}_{k-1}-\Bar{\mathbf{g}}_{k-1}\right)+(1-\beta_2)\left(\Bar{\mathbf{g}}_{k-1}-\Bar{\mathbf{g}}_k\right)+\beta_2\left(\mathbf{g}_k-\Bar{\mathbf{g}}_k\right) .
\end{equation*}

Then, take the second moment and then expectation on both sides:
\begin{align*}
& \mathbb{E}\left(\left\|\mathbf{u}_k-\Bar{\mathbf{g}}_k\right\|^2\right) \\
= & (1-\beta_2)^2 \mathbb{E}\left(\left\|\mathbf{u}_{k-1}-\Bar{\mathbf{g}}_{k-1}\right\|^2\right)+(1-\beta_2)^2 \mathbb{E}\left(\left\|\Bar{\mathbf{g}}_{k-1}-\Bar{\mathbf{g}}_k\right\|^2\right)+\frac{\beta_2^2 \sigma^2}{b}+ \\
& 2(1-\beta_2)^2 \mathbb{E}\left(\left\langle\mathbf{u}_{k-1}-\Bar{\mathbf{g}}_{k-1}, \Bar{\mathbf{g}}_{k-1}-\Bar{\mathbf{g}}_k\right\rangle\right) \\
\leq & \left((1-\beta_2)^2+(1-\beta_2)^2 a\right) \mathbb{E}\left(\left\|\mathbf{u}_{k-1}-\Bar{\mathbf{g}}_{k-1}\right\|^2\right)+ \\
& \left(1+\frac{1}{a}\right)(1-\beta_2)^2 \mathbb{E}\left(\left\|\Bar{\mathbf{g}}_{k-1}-\Bar{\mathbf{g}}_k\right\|^2\right)+\frac{\beta_2^2 \sigma^2}{b} \\
\stackrel{(a)}{\leq} & (1-\beta_2) \mathbb{E}\left(\left\|\mathbf{u}_{k-1}-\Bar{\mathbf{g}}_{k-1}\right\|^2\right)+\frac{(1-\beta_2)^2}{\beta_2} \mathbb{E}\left(\left\|\Bar{\mathbf{g}}_{k-1}-\Bar{\mathbf{g}}_k\right\|^2\right)+\frac{\beta_2^2 \sigma^2}{b}\\
\leq & (1-\beta_2) \mathbb{E}\left(\left\|\mathbf{u}_{k-1}-\Bar{\mathbf{g}}_{k-1}\right\|^2\right)+\frac{(1-\beta_2)^2 L^2}{\beta_2} \mathbb{E}\left(\left\|\boldsymbol{\theta}_{k-1}-\boldsymbol{\theta}_k\right\|^2\right)+\frac{\beta_2^2 \sigma^2}{b},
\end{align*}

where for (a), we set $a=\frac{\beta_2}{1-\beta_2}$.
\end{proof} 

\begin{lemma}\label{a}
Consider a moving average sequence:
\begin{equation}
\mathbf{a}_k=(1-\beta_1) \mathbf{a}_{k-1}+\beta_1\left(\mathbf{g}_k-\mathbf{g}_{k-1}\right)\label{aiter}
\end{equation}
Then we have:
\begin{equation*}
\mathbb{E}\left(\left\|\mathbf{a}_k\right\|^2\right) \leq(1-\beta_1) \mathbb{E}\left(\left\|\mathbf{a}_{k-1}\right\|^2\right)+\beta_1 \mathbb{E}\left(\left\|\Bar{\mathbf{g}}_k-\Bar{\mathbf{g}}_{k-1}\right\|^2\right)+2 \beta_1^2 \sigma^2 .
\end{equation*}
\end{lemma}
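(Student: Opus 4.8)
The plan is to follow the same template as the proof of \cref{u-g}: split the stochastic gradient difference into a ``signal'' part governed by the true gradients $\Bar{\mathbf{g}}$ and a zero-mean ``noise'' part, take the squared norm and expectation, make the signal--noise cross terms vanish by conditioning, and finally absorb the surviving signal cross term with Young's inequality. Writing $\mathbf{e}_j := \mathbf{g}_j - \Bar{\mathbf{g}}_j$ for the stochastic error at step $j$, \cref{aiter} becomes
\begin{equation*}
\mathbf{a}_k = \underbrace{(1-\beta_1)\mathbf{a}_{k-1} + \beta_1\left(\Bar{\mathbf{g}}_k - \Bar{\mathbf{g}}_{k-1}\right)}_{=:\,\mathbf{Y}} + \beta_1\left(\mathbf{e}_k - \mathbf{e}_{k-1}\right),
\end{equation*}
where $\mathbf{Y}$ is measurable with respect to the information $\mathcal{F}_{k-1}$ available through step $k-1$ (recall $\Bar{\mathbf{g}}_k = \nabla\mathbb{E}\ell(\boldsymbol{\theta}_k,\boldsymbol{\zeta})$ is a function of the $\mathcal{F}_{k-1}$-measurable iterate $\boldsymbol{\theta}_k$). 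Expanding $\mathbb{E}\|\mathbf{a}_k\|^2$ then produces a signal term $\mathbb{E}\|\mathbf{Y}\|^2$, a pure-noise term $\beta_1^2\,\mathbb{E}\|\mathbf{e}_k - \mathbf{e}_{k-1}\|^2$, and a cross term $2\beta_1\,\mathbb{E}\langle\mathbf{Y},\mathbf{e}_k - \mathbf{e}_{k-1}\rangle$.

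For the noise term I would use \cref{uiv}: conditioning on $\mathcal{F}_{k-1}$ gives $\mathbb{E}[\mathbf{e}_k\mid\mathcal{F}_{k-1}]=0$ while $\mathbf{e}_{k-1}$ is $\mathcal{F}_{k-1}$-measurable, so the tower property yields $\mathbb{E}\langle\mathbf{e}_k,\mathbf{e}_{k-1}\rangle = 0$, whence
\begin{equation*}
\beta_1^2\,\mathbb{E}\left\|\mathbf{e}_k - \mathbf{e}_{k-1}\right\|^2 = \beta_1^2\left(\mathbb{E}\|\mathbf{e}_k\|^2 + \mathbb{E}\|\mathbf{e}_{k-1}\|^2\right) \leq 2\beta_1^2\sigma^2,
\end{equation*}
using the variance bound $\mathbb{E}\|\mathbf{e}_j\|^2\le\sigma^2$ (equivalently \cref{gbar}). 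For the signal term I would expand $\mathbb{E}\|\mathbf{Y}\|^2 = (1-\beta_1)^2\mathbb{E}\|\mathbf{a}_{k-1}\|^2 + 2\beta_1(1-\beta_1)\mathbb{E}\langle\mathbf{a}_{k-1},\Bar{\mathbf{g}}_k-\Bar{\mathbf{g}}_{k-1}\rangle + \beta_1^2\mathbb{E}\|\Bar{\mathbf{g}}_k-\Bar{\mathbf{g}}_{k-1}\|^2$ and apply Young's inequality $2\langle\mathbf{a}_{k-1},\Bar{\mathbf{g}}_k-\Bar{\mathbf{g}}_{k-1}\rangle \le a\|\mathbf{a}_{k-1}\|^2 + a^{-1}\|\Bar{\mathbf{g}}_k-\Bar{\mathbf{g}}_{k-1}\|^2$ with the choice $a=1$, which is the weight-balancing choice analogous to $a=\beta_2/(1-\beta_2)$ used in \cref{u-g}. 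This turns the coefficient of $\mathbb{E}\|\mathbf{a}_{k-1}\|^2$ into $(1-\beta_1)^2 + \beta_1(1-\beta_1) = 1-\beta_1$ and that of $\mathbb{E}\|\Bar{\mathbf{g}}_k-\Bar{\mathbf{g}}_{k-1}\|^2$ into $\beta_1^2 + \beta_1(1-\beta_1) = \beta_1$, giving $\mathbb{E}\|\mathbf{Y}\|^2 \le (1-\beta_1)\mathbb{E}\|\mathbf{a}_{k-1}\|^2 + \beta_1\mathbb{E}\|\Bar{\mathbf{g}}_k-\Bar{\mathbf{g}}_{k-1}\|^2$. Combining the two estimates delivers the claim.

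The step I expect to be the real obstacle is the signal--noise cross term, specifically the piece $-2\beta_1\,\mathbb{E}\langle\mathbf{Y},\mathbf{e}_{k-1}\rangle$. The $\mathbf{e}_k$ half vanishes cleanly because $\mathbf{Y}$ is $\mathcal{F}_{k-1}$-measurable and $\mathbf{e}_k$ is conditionally mean zero, exactly as the single noise term was handled in \cref{u-g}. The $\mathbf{e}_{k-1}$ half, however, does not vanish in general: $\mathbf{a}_{k-1}$ (and, through $\boldsymbol{\theta}_k$, also $\Bar{\mathbf{g}}_k$) depends on $\mathbf{g}_{k-1}$ and is therefore correlated with $\mathbf{e}_{k-1}$, so $\mathbb{E}\langle\mathbf{a}_{k-1},\mathbf{e}_{k-1}\rangle$ is genuinely of order $\beta_1\sigma^2$. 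To keep the clean recursion one treats the fresh increment $\beta_1(\mathbf{e}_k-\mathbf{e}_{k-1})$ as orthogonal to the $\mathcal{F}_{k-1}$-measurable direction $\mathbf{Y}$ --- the innovation-orthogonality simplification implicitly used in \cref{u-g} --- which is precisely what makes the two second-moment contributions add to exactly $2\beta_1^2\sigma^2$. Making this fully rigorous, rather than discarding the $\mathbf{e}_{k-1}$ correlation, would require either showing the surviving cross term is nonpositive or folding its $\mathcal{O}(\beta_1^2\sigma^2)$ size into a slightly enlarged noise constant; this is the only place where care beyond the routine Young's-inequality bookkeeping is needed.
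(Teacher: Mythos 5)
Your overall template is the same as the paper's proof of \cref{a} (expand the recursion, kill the fresh-noise cross term by conditioning, bound the pure-noise contribution by $2\beta_1^2\sigma^2$, and close with Young's inequality at weight $a=1$ to get the coefficients $1-\beta_1$ and $\beta_1$), but your proposal stops exactly at the step that carries the real content: the correlation between the history and the stale noise $\mathbf{e}_{k-1}=\mathbf{g}_{k-1}-\Bar{\mathbf{g}}_{k-1}$. You correctly observe that this correlation does not vanish and is genuinely of order $\beta_1\sigma^2$, and you offer two possible fixes without executing either; as written, the proof is therefore incomplete. The paper closes this gap with a one-step unrolling computation (its step~\eqref{expa3}): writing $\mathbf{a}_{k-1}=(1-\beta_1)\mathbf{a}_{k-2}+\beta_1(\mathbf{g}_{k-1}-\mathbf{g}_{k-2})$, everything except the $\beta_1\mathbf{g}_{k-1}$ piece is independent of the batch-$(k-1)$ noise by \cref{uiv}, so
\begin{equation*}
\mathbb{E}\left\langle\mathbf{a}_{k-1},\Bar{\mathbf{g}}_{k-1}-\mathbf{g}_{k-1}\right\rangle=-\beta_1\,\mathbb{E}\left\|\Bar{\mathbf{g}}_{k-1}-\mathbf{g}_{k-1}\right\|^2\leq 0,
\end{equation*}
i.e.\ the surviving cross term has a \emph{favorable} sign and is simply discarded. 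So of your two proposed escape routes, the first ("show the surviving cross term is nonpositive") is the correct one, and this is how it is done. Your second route (folding an $\mathcal{O}(\beta_1^2\sigma^2)$ error into an enlarged constant) would not prove the lemma as stated, whose constant is exactly $2\beta_1^2\sigma^2$; moreover, even to bound the \emph{magnitude} of that cross term at order $\beta_1^2\sigma^2$ you would need the same unrolling computation, at which point the sign comes for free.

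One further remark: your worry that $\Bar{\mathbf{g}}_k$ itself is correlated with $\mathbf{e}_{k-1}$ (through $\boldsymbol{\theta}_k$'s dependence on $\mathbf{g}_{k-1}$) is legitimate, and the paper does not resolve it either --- its inequality~\eqref{expa2} silently drops the cross term $-2\,\mathbb{E}\langle\Bar{\mathbf{g}}_k-\Bar{\mathbf{g}}_{k-1},\mathbf{e}_{k-1}\rangle$ when bounding $\mathbb{E}\|\mathbf{g}_k-\mathbf{g}_{k-1}\|^2$ by the three squared norms. So that particular imperfection is shared with the paper's own argument; the gap specific to your proposal is the missing computation displayed above, which handles the $\mathbf{a}_{k-1}$ part of the correlation and is the key step distinguishing this lemma from the routine bookkeeping of \cref{u-g}.
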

\begin{proof}
Take the second moment and then expectation on both sides of \cref{aiter}:
\begin{align*}
& \quad \mathbb{E}\left(\left\|\mathbf{a}_k\right\|^2\right)=(1-\beta_1)^2 \mathbb{E}\left(\left\|\mathbf{a}_{k-1}\right\|^2\right)+\beta_1^2 \mathbb{E}\left(\left\|\mathbf{g}_k-\mathbf{g}_{k-1}\right\|^2\right)+2 \beta_1(1-\beta_1) \mathbb{E}\left(\left\langle\mathbf{a}_{k-1}, \mathbf{g}_k-\mathbf{g}_{k-1}\right\rangle\right) \\
&=(1-\beta_1)^2 \mathbb{E}\left(\left\|\mathbf{a}_{k-1}\right\|^2\right)+\beta_1^2 \mathbb{E}\left(\left\|\mathbf{g}_k-\mathbf{g}_{k-1}\right\|^2\right)+2 \beta_1(1-\beta_1) \mathbb{E}\left(\left\langle\mathbf{a}_{k-1}, \Bar{\mathbf{g}}_k-\mathbf{g}_{k-1}\right\rangle\right) \numberthis{expa1}\\
&\leq(1-\beta_1)^2 \mathbb{E}\left(\left\|\mathbf{a}_{k-1}\right\|^2\right)+\beta_1^2 \mathbb{E}\left(\left\|\Bar{\mathbf{g}}_k-\Bar{\mathbf{g}}_{k-1}\right\|^2\right)+2 \beta_1(1-\beta_1) \mathbb{E}\left(\left\langle\mathbf{a}_{k-1}, \Bar{\mathbf{g}}_k-\mathbf{g}_{k-1}\right\rangle\right)+\frac{2 \beta_1^2 \sigma^2}{b} \numberthis{expa2}\\
& \leq(1-\beta_1)^2 \mathbb{E}\left(\left\|\mathbf{a}_{k-1}\right\|^2\right)+\beta_1^2 \mathbb{E}\left(\left\|\Bar{\mathbf{g}}_k-\Bar{\mathbf{g}}_{k-1}\right\|^2\right)+2 \beta_1(1-\beta_1) \mathbb{E}\left(\left\langle\mathbf{a}_{k-1}, \Bar{\mathbf{g}}_k-\Bar{\mathbf{g}}_{k-1}\right\rangle\right)+\frac{2 \beta_1^2 \sigma^2}{b} \numberthis{expa3}\\
& \leq(1-\beta_1) \mathbb{E}\left(\left\|\mathbf{a}_{k-1}\right\|^2\right)+\beta_1 \mathbb{E}\left(\left\|\Bar{\mathbf{g}}_k-\Bar{\mathbf{g}}_{k-1}\right\|^2\right)+\frac{2 \beta_1^2 \sigma^2}{b},\numberthis{expa4}
\end{align*}

where for \cref{expa1}, we utilize the independence between $\mathbf{g}_k$ and $\mathbf{a}_{k-1}$, while for inequality~\eqref{expa2}:

\begin{equation*}
\mathbb{E}\left(\left\|\mathbf{g}_k-\mathbf{g}_{k-1}\right\|^2\right) \leq \mathbb{E}\left(\left\|\mathbf{g}_k-\Bar{\mathbf{g}}_k\right\|^2\right)+ \mathbb{E}\left(\left\|\Bar{\mathbf{g}}_{k-1}-\mathbf{g}_{k-1}\right\|^2\right)+ \mathbb{E}\left(\left\|\Bar{\mathbf{g}}_k-\Bar{\mathbf{g}}_{k-1}\right\|^2\right),
\end{equation*}
for inequality~\eqref{expa3}, we know:
\begin{align*}
& \qquad \mathbb{E}\left(\left\langle\mathbf{a}_{k-1}, \Bar{\mathbf{g}}_{k-1}-\mathbf{g}_{k-1}\right\rangle\right)\\
&=\mathbb{E}\left(\left\langle(1-\beta_1) \mathbf{a}_{k-2}+\beta_1\left(\mathbf{g}_{k-1}-\mathbf{g}_{k-2}\right), \Bar{\mathbf{g}}_{k-1}-\mathbf{g}_{k-1}\right\rangle\right) \\
& =\mathbb{E}\left(\left\langle(1-\beta_1) \mathbf{a}_{k-2}-\beta_1 \mathbf{g}_{k-2}, \Bar{\mathbf{g}}_{k-1}-\mathbf{g}_{k-1}\right\rangle\right)+\beta_1 \mathbb{E}\left(\left\langle\mathbf{g}_{k-1}-\Bar{\mathbf{g}}_{k-1}+\Bar{\mathbf{g}}_{k-1}, \Bar{\mathbf{g}}_{k-1}-\mathbf{g}_{k-1}\right\rangle\right) \\
& =-\beta_1 \mathbb{E}\left(\left\|\Bar{\mathbf{g}}_{k-1}-\mathbf{g}_{k-1}\right\|^2\right), 
\end{align*}
and thus
\begin{equation*}
\mathbb{E}\left(\left\langle\mathbf{a}_{k-1}, \Bar{\mathbf{g}}_k-\mathbf{g}_{k-1}\right\rangle\right)=\mathbb{E}\left(\left\langle\mathbf{a}_{k-1}, \Bar{\mathbf{g}}_k-\Bar{\mathbf{g}}_{k-1}\right\rangle\right)-\beta_1 \mathbb{E}\left(\left\|\Bar{\mathbf{g}}_{k-1}-\mathbf{g}_{k-1}\right\|^2\right).
\end{equation*}
Finally, for inequality~\eqref{expa4}, we use: 
\begin{equation*}
2 \mathbb{E}\left(\left\langle\mathbf{a}_{k-1}, \Bar{\mathbf{g}}_k-\Bar{\mathbf{g}}_{k-1}\right\rangle\right) \leq \mathbb{E}\left(\left\|\mathbf{a}_{k-1}\right\|^2\right)+\mathbb{E}\left(\left\|\Bar{\mathbf{g}}_k-\Bar{\mathbf{g}}_{k-1}\right\|^2\right) .
\end{equation*}
   
\end{proof}

\begin{lemma}\label{n}
Consider a moving average sequence:
\begin{equation}
\mathbf{n}_k=(1-\beta_2) \mathbf{n}_{k-1}+\beta_2\mathbf{a}_k,\label{niter}
\end{equation}
we have:
\begin{equation*}
\mathbb{E}\left(\left\|\mathbf{n}_k\right\|^2\right) \leq(1-\beta_2) \mathbb{E}\left(\left\|\mathbf{n}_{k-1}\right\|^2\right)+\beta_2 \mathbb{E}\left(\left\|\mathbf{a}_k\right\|^2\right).
\end{equation*}
\end{lemma}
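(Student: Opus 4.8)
The plan is to exploit the fact that the update $\mathbf{n}_k=(1-\beta_2)\mathbf{n}_{k-1}+\beta_2\mathbf{a}_k$ is a \emph{convex combination} of $\mathbf{n}_{k-1}$ and $\mathbf{a}_k$: since $\beta_2\in[0,1)$, the weights $1-\beta_2$ and $\beta_2$ are nonnegative and sum to $1$. Because $\|\cdot\|^2$ is convex, the claimed inequality is exactly Jensen's inequality for a two-point distribution, applied pointwise and then followed by taking expectations. This is the same mechanism already used in \cref{a} and \cref{u-g}, only simpler here because $\mathbf{a}_k$ is not decomposed any further.

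Concretely, I would first expand the squared norm of the convex combination,
\begin{equation*}
\left\|\mathbf{n}_k\right\|^2=(1-\beta_2)^2\left\|\mathbf{n}_{k-1}\right\|^2+2(1-\beta_2)\beta_2\left\langle\mathbf{n}_{k-1},\mathbf{a}_k\right\rangle+\beta_2^2\left\|\mathbf{a}_k\right\|^2,
\end{equation*}
and then control the cross term using the elementary bound $2\left\langle\mathbf{n}_{k-1},\mathbf{a}_k\right\rangle\leq\left\|\mathbf{n}_{k-1}\right\|^2+\left\|\mathbf{a}_k\right\|^2$ (the very inequality invoked at the end of \cref{a}). Substituting this in and collecting terms, the coefficient of $\left\|\mathbf{n}_{k-1}\right\|^2$ becomes $(1-\beta_2)^2+(1-\beta_2)\beta_2=(1-\beta_2)$ and the coefficient of $\left\|\mathbf{a}_k\right\|^2$ becomes $\beta_2^2+(1-\beta_2)\beta_2=\beta_2$, giving the pointwise estimate $\left\|\mathbf{n}_k\right\|^2\leq(1-\beta_2)\left\|\mathbf{n}_{k-1}\right\|^2+\beta_2\left\|\mathbf{a}_k\right\|^2$. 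Taking $\mathbb{E}(\cdot)$ of both sides, which preserves the inequality by monotonicity of expectation, yields the stated bound.

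There is essentially no obstacle in this lemma: unlike \cref{a} and \cref{u-g}, no martingale/independence argument or $L$-smoothness is needed, since the bound is a deterministic inequality that holds before any expectation is taken and no stochastic gradient noise term enters. The only point worth checking is that no cross term with a nonzero sign survives — which is why I would keep the inner product explicit and discard it via the $2\langle x,y\rangle\le\|x\|^2+\|y\|^2$ step rather than relying on a conditional-expectation cancellation, so that the argument requires neither \cref{uiv} nor \cref{bg} and holds for an arbitrary sequence $\{\mathbf{a}_k\}$.
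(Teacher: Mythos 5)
Your proof is correct and takes essentially the same route as the paper: the paper bounds $\mathbb{E}\left\|(1-\beta_2)\mathbf{n}_{k-1}+\beta_2\mathbf{a}_k\right\|^2$ in a single step via convexity of $\|\cdot\|^2$ on the convex combination, which is precisely the inequality you derive explicitly by expanding the square and applying $2\langle x,y\rangle\le\|x\|^2+\|y\|^2$ to the cross term, then taking expectations. You are also right that no independence or boundedness assumption is needed; incidentally, the paper's final displayed line writes $\mathbf{g}_k-\mathbf{g}_{k-1}$ where $\mathbf{a}_k$ is intended (a typo), whereas your version states the bound correctly.
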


\begin{proof}
Take the second moment and then expectation on both sides of \cref{niter}:

\begin{align*}
& \quad \mathbb{E}\left(\left\|\mathbf{n}_k\right\|^2\right)=\mathbb{E}\left(\left\|(1-\beta_2) \mathbf{n}_{k-1}+\beta_2\mathbf{a}_k\right\|^2\right), \\
& \leq (1-\beta_2)\mathbb{E}\left(\left\|\mathbf{n}_{k-1}\right\|^2\right)+\beta_2\mathbb{E}\left(\left\|\mathbf{g}_k-\mathbf{g}_{k-1}\right\|^2\right).
\end{align*}
\end{proof}

\begin{lemma}\label{iterncv}
Assume \cref{ls} holds, with $\eta \leq \frac{\varepsilon_1\hat{\varepsilon}_2}{3 L}$, and then we have:
\begin{equation*}
f_{k+1}\left(\boldsymbol{\theta}_{k+1}\right) \leq f_k\left(\boldsymbol{\theta}_k\right)-\frac{\hat{\varepsilon}_2\eta\varepsilon_3^2}{3d G_{\infty}}\mathbb{E}\left\|\mathbf{m}_k+\lambda_k \tilde{\boldsymbol{\theta}}_k\right\|^2+\frac{\eta}{ \varepsilon_1\hat{\varepsilon}_2}\mathbb{E}\left\|\Bar{\mathbf{g}}_k-\mathbf{m}_k\right\|^2+\frac{\sigma^2\eta}{b \varepsilon_1\hat{\varepsilon}_2} .
\end{equation*} 
\end{lemma}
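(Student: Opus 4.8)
The plan is to prove a one-step sufficient-decrease (descent) inequality, exactly as in the non-convex analysis of Adam-type methods, where the only two non-standard features are the $k$-dependence of both the weight decay $\lambda_k$ and the preconditioner $\sqrt{\mathbf{v}_k}+\varepsilon_1$ inside $f_k$, and the layerwise normalization of the update. First I would dispose of the $k$-dependence with \cref{mu}: since $\lambda_{k+1}\|\boldsymbol{\theta}_{k+1}\|_{\sqrt{\mathbf{v}_{k+1}}}^2\le\lambda_k\|\boldsymbol{\theta}_{k+1}\|_{\sqrt{\mathbf{v}_k}}^2$, we have $f_{k+1}(\boldsymbol{\theta}_{k+1})\le\tilde f_k(\boldsymbol{\theta}_{k+1})$, where $\tilde f_k(\boldsymbol{\theta}):=\mathbb{E}_{\boldsymbol{\zeta}}[\ell(\boldsymbol{\theta},\boldsymbol{\zeta})]+\tfrac{\lambda_k}{2}\|\boldsymbol{\theta}\|_{\sqrt{\mathbf{v}_k}}^2$ freezes $\mathbf{v}$ and $\lambda$ at step $k$, and $\tilde f_k(\boldsymbol{\theta}_k)=f_k(\boldsymbol{\theta}_k)$. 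It then suffices to bound $\tilde f_k(\boldsymbol{\theta}_{k+1})-f_k(\boldsymbol{\theta}_k)$. Since $\mathbb{E}_{\boldsymbol{\zeta}}[\ell]$ is $L$-smooth (\cref{ls}) with gradient $\Bar{\mathbf{g}}_k$ at $\boldsymbol{\theta}_k$ (\cref{uiv}) and the frozen regularizer is an exact quadratic with gradient $\lambda_k\tilde{\boldsymbol{\theta}}_k$ at $\boldsymbol{\theta}_k$, the descent inequality reads
\[\tilde f_k(\boldsymbol{\theta}_{k+1})\le f_k(\boldsymbol{\theta}_k)+\langle\Bar{\mathbf{g}}_k+\lambda_k\tilde{\boldsymbol{\theta}}_k,\boldsymbol{\theta}_{k+1}-\boldsymbol{\theta}_k\rangle+\tfrac{L}{2}\|\boldsymbol{\theta}_{k+1}-\boldsymbol{\theta}_k\|^2,\]
where the exact second-order remainder $\tfrac{\lambda_k}{2}\|\boldsymbol{\theta}_{k+1}-\boldsymbol{\theta}_k\|_{\sqrt{\mathbf{v}_k}}^2$ of the quadratic part is nonnegative and will be absorbed in the same way as the Lipschitz term.

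The heart of the argument is the linear term. Writing $\mathbf{w}_k:=\mathbf{m}_k+\lambda_k\tilde{\boldsymbol{\theta}}_k$ and splitting $\Bar{\mathbf{g}}_k=\mathbf{m}_k+(\Bar{\mathbf{g}}_k-\mathbf{m}_k)$,
\[\langle\Bar{\mathbf{g}}_k+\lambda_k\tilde{\boldsymbol{\theta}}_k,\boldsymbol{\theta}_{k+1}-\boldsymbol{\theta}_k\rangle=-\langle\mathbf{w}_k,\boldsymbol{\theta}_k-\boldsymbol{\theta}_{k+1}\rangle+\langle\Bar{\mathbf{g}}_k-\mathbf{m}_k,\boldsymbol{\theta}_{k+1}-\boldsymbol{\theta}_k\rangle.\]
The identity $\lambda_k\boldsymbol{\theta}_k+\mathbf{p}_k=(\sqrt{\mathbf{v}_k}+\varepsilon_1)^{-1}\mathbf{w}_k$ established in the proof of \cref{the} shows that, per layer $i$, the displacement $\boldsymbol{\theta}_k^{(i)}-\boldsymbol{\theta}_{k+1}^{(i)}$ is the positive scalar multiple $s_k^{(i)}=\eta\phi(\|\boldsymbol{\theta}_k^{(i)}\|)/(\|\mathbf{r}_k^{(i)}\|+\varepsilon_2\phi(\|\boldsymbol{\theta}_k^{(i)}\|))$ of $(\sqrt{\mathbf{v}_k}+\varepsilon_1)^{-1}\mathbf{w}_k^{(i)}$, so the descent term equals $\sum_i (s_k^{(i)})^{-1}\|\boldsymbol{\theta}_{k+1}^{(i)}-\boldsymbol{\theta}_k^{(i)}\|_{\sqrt{\mathbf{v}_k}}^2$ and is nonnegative. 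Because $\varepsilon_2=\hat{\varepsilon}_2+\lambda_k\eta\ge\hat{\varepsilon}_2$ forces $(s_k^{(i)})^{-1}\ge\hat{\varepsilon}_2/\eta$, this descent term is at least $\tfrac{\hat{\varepsilon}_2}{\eta}\|\boldsymbol{\theta}_{k+1}-\boldsymbol{\theta}_k\|_{\sqrt{\mathbf{v}_k}}^2$, from which I can read off two usable lower bounds of the same quantity: the Euclidean one $\tfrac{\varepsilon_1\hat{\varepsilon}_2}{\eta}\|\boldsymbol{\theta}_{k+1}-\boldsymbol{\theta}_k\|^2$ (using $\sqrt{\mathbf{v}_k}+\varepsilon_1\ge\varepsilon_1$) and the preconditioned one $\tfrac{\hat{\varepsilon}_2\eta\varepsilon_3^2}{G_{\infty}d}\|\mathbf{w}_k\|^2$ (using the lower half of \cref{the}).

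The balancing step then splits this single descent quantity as $\tfrac23+\tfrac13$. The $\tfrac23$ share, kept in the Euclidean form, is used to dominate both the Lipschitz remainder $\tfrac{L}{2}\|\boldsymbol{\theta}_{k+1}-\boldsymbol{\theta}_k\|^2$ and the $\|\boldsymbol{\theta}_{k+1}-\boldsymbol{\theta}_k\|^2$ half of a Young split of the misalignment term $\langle\Bar{\mathbf{g}}_k-\mathbf{m}_k,\boldsymbol{\theta}_{k+1}-\boldsymbol{\theta}_k\rangle$; the step-size cap $\eta\le\varepsilon_1\hat{\varepsilon}_2/(3L)$ is precisely what makes $\tfrac23\tfrac{\varepsilon_1\hat{\varepsilon}_2}{\eta}\ge\tfrac{L}{2}+\tfrac{\varepsilon_1\hat{\varepsilon}_2}{2\eta}$, so both positive terms are annihilated (the quadratic remainder of the regularizer is swept up here too). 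The $\tfrac13$ share, kept in the $\|\mathbf{w}_k\|^2$ form, yields the advertised negative term $-\tfrac{\hat{\varepsilon}_2\eta\varepsilon_3^2}{3dG_{\infty}}\|\mathbf{m}_k+\lambda_k\tilde{\boldsymbol{\theta}}_k\|^2$. The residual of the Young split is a positive multiple of $\mathbb{E}\|\mathbf{g}_k-\mathbf{m}_k\|^2$; expanding $\mathbb{E}\|\mathbf{g}_k-\mathbf{m}_k\|^2\le 2\mathbb{E}\|\Bar{\mathbf{g}}_k-\mathbf{m}_k\|^2+2\mathbb{E}\|\mathbf{g}_k-\Bar{\mathbf{g}}_k\|^2$ and invoking \cref{gbar} for the second piece produces exactly the two stated positive terms $\tfrac{\eta}{\varepsilon_1\hat{\varepsilon}_2}\mathbb{E}\|\Bar{\mathbf{g}}_k-\mathbf{m}_k\|^2$ and $\tfrac{\sigma^2\eta}{b\varepsilon_1\hat{\varepsilon}_2}$.

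The hard part will be the constant bookkeeping in the balancing step, not any single estimate. The descent term has to be used twice in two incompatible geometries — Euclidean, to kill the $L$-smoothness and cross terms, and $\sqrt{\mathbf{v}_k}$-preconditioned, to expose $\|\mathbf{m}_k+\lambda_k\tilde{\boldsymbol{\theta}}_k\|^2$ — so the $\tfrac23/\tfrac13$ split, the Young parameter, and the cap $\eta\le\varepsilon_1\hat{\varepsilon}_2/(3L)$ must be chosen jointly so that every positive contribution is either annihilated or matched to a stated coefficient. Carrying the layerwise normalization correctly is the other delicate point: the per-layer scalar $s_k^{(i)}$ and the ratio bound $\|\mathbf{r}_k^{(i)}\|/\phi(\|\boldsymbol{\theta}_k^{(i)}\|)\lesssim\sqrt{d}/\varepsilon_3$ from the proof of \cref{the} are what inject the $\varepsilon_3$ and $d$ factors into the negative term, and they must be tracked through the sum over layers rather than coordinatewise.
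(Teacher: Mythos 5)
Your proposal is correct and follows essentially the same route as the paper's proof: the same reduction of the index shift via \cref{mu}, the same descent-plus-exact-quadratic expansion, the same identity turning the displacement into a positive per-layer multiple of $(\sqrt{\mathbf{v}_k}+\varepsilon_1)^{-1}(\mathbf{m}_k+\lambda_k\tilde{\boldsymbol{\theta}}_k)$, the same Young split of the misalignment term, and the same final conversion through \cref{the} and \cref{gbar}; your $2/3$--$1/3$ accounting is only a repackaging of the paper's single $\sqrt{\mathbf{v}_k}$-weighted bookkeeping (which arrives at $-\tfrac{\hat{\varepsilon}_2}{3\eta}\|\boldsymbol{\theta}_{k+1}-\boldsymbol{\theta}_k\|_{\sqrt{\mathbf{v}_k}}^2$ before converting) and yields identical constants. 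The one bookkeeping nit: keep the full lower bound $(s_k^{(i)})^{-1}\ge\hat{\varepsilon}_2/\eta+\lambda_k$ rather than discarding the $\lambda_k\eta$ part of $\varepsilon_2$, since it is exactly the extra $\lambda_k\|\boldsymbol{\theta}_{k+1}-\boldsymbol{\theta}_k\|_{\sqrt{\mathbf{v}_k}}^2$ that absorbs the regularizer's quadratic remainder $\tfrac{\lambda_k}{2}\|\boldsymbol{\theta}_{k+1}-\boldsymbol{\theta}_k\|_{\sqrt{\mathbf{v}_k}}^2$ (the paper's factor $\tfrac{\hat{\varepsilon}_2+\eta\lambda_k}{\eta}$ does precisely this), whereas your $2/3$ Euclidean share is fully consumed by the Lipschitz and Young terms at the boundary $\eta=\varepsilon_1\hat{\varepsilon}_2/(3L)$.
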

\begin{proof}
Recall that the update of ALTO is the following
\begin{equation*}
\boldsymbol{\theta}_{k+1}^{(i)}=\boldsymbol{\theta}_k^{(i)}-\eta_k \mathbf{r}_k^{(i)} \frac{\phi\left(\left\|\boldsymbol{\theta}_k^{(i)}\right\|\right)}{\left\|\mathbf{r}_k^{(i)}\right\|+\varepsilon_2\phi\left(\left\|\boldsymbol{\theta}_k^{(i)}\right\|\right)},
\end{equation*}
Thus,
\begin{equation}\label{deltathetatrans}
\lambda_k \boldsymbol{\theta}_k+\mathbf{p}_k=\frac{\lambda_k \tilde{\boldsymbol{\theta}}_k+\mathbf{m}_k}{\sqrt{\mathbf{v}_k}+\varepsilon_1}=\frac{\left\|\mathbf{r}_k^{(i)}\right\|+\varepsilon_2\phi\left(\left\|\boldsymbol{\theta}_k^{(i)}\right\|\right)}{\eta\phi\left(\left\|\boldsymbol{\theta}_k^{(i)}\right\|\right)}\left(\boldsymbol{\theta}_k-\boldsymbol{\theta}_{k+1}\right) .
\end{equation}

Using Taylor expansion, we have:

\begin{align*}
& f_{k+1}\left(\boldsymbol{\theta}_{k+1}\right) \leq \mathbb{E}\left(\ell_{k+1}\left(\boldsymbol{\theta}_k\right)+\left\langle\nabla \ell_{k+1}\left(\boldsymbol{\theta}_k\right), \boldsymbol{\theta}_{k+1}-\boldsymbol{\theta}_k\right\rangle+\frac{L}{2}\left\|\boldsymbol{\theta}_{k+1}-\boldsymbol{\theta}_k\right\|^2+\frac{\lambda_{k+1}}{2}\left\|\boldsymbol{\theta}_{k+1}\right\|_{\sqrt{\mathbf{v}_{k+1}}}^2\right) \\
& \stackrel{\cref{mu}}{\leq} \mathbb{E}\left(\ell_{k}\left(\boldsymbol{\theta}_k\right)+\left\langle\nabla \ell_{k}\left(\boldsymbol{\theta}_k\right), \boldsymbol{\theta}_{k+1}-\boldsymbol{\theta}_k\right\rangle+\frac{L}{2}\left\|\boldsymbol{\theta}_{k+1}-\boldsymbol{\theta}_k\right\|^2+\frac{\lambda_k}{2}\left\|\boldsymbol{\theta}_{k+1}\right\|_{\sqrt{\mathbf{v}_k}}^2\right) \\
& \leq f_k\left(\boldsymbol{\theta}_k\right)+\mathbb{E}\left(\left\langle\boldsymbol{\theta}_{k+1}-\boldsymbol{\theta}_k, \lambda_k \boldsymbol{\theta}_k+\frac{\mathbf{g}_k}{\sqrt{\mathbf{v}_k}+\varepsilon_1}\right\rangle_{\sqrt{\mathbf{v}_k}}+\frac{L / \varepsilon_1+\lambda_k}{2}\left\|\boldsymbol{\theta}_{k+1}-\boldsymbol{\theta}_k\right\|_{\sqrt{\mathbf{v}_k}}^2\right) \numberthis{thetanorm}\\
&= f_k\left(\boldsymbol{\theta}_k\right)+\frac{L / \varepsilon_1+\lambda_k}{2}\mathbb{E}\left\|\boldsymbol{\theta}_{k+1}-\boldsymbol{\theta}_k\right\|_{\sqrt{\mathbf{v}_k}}^2+\mathbb{E}\left\langle\boldsymbol{\theta}_{k+1}-\boldsymbol{\theta}_k, \lambda_k \boldsymbol{\theta}_k+\mathbf{p}_k+\frac{\mathbf{g}_k-\mathbf{m}_k}{\sqrt{\mathbf{v}_k}+\varepsilon_1}\right\rangle_{\sqrt{\mathbf{v}_k}} \\
& \stackrel{\cref{deltathetatrans}}{=} f_k\left(\boldsymbol{\theta}_k\right)+\left(\frac{L / \varepsilon_1+\lambda_k}{2}-\frac{\hat{\varepsilon}_2+\eta \lambda_k}{\eta}\right)\mathbb{E}\left\|\boldsymbol{\theta}_{k+1}-\boldsymbol{\theta}_k\right\|_{\sqrt{\mathbf{v}_k}}^2+\mathbb{E}\left\langle\boldsymbol{\theta}_{k+1}-\boldsymbol{\theta}_k, \frac{\mathbf{g}_k-\mathbf{m}_k}{\sqrt{\mathbf{v}_k}+\varepsilon_1}\right\rangle_{\sqrt{\mathbf{v}_k}} \\
& \leq f_k\left(\boldsymbol{\theta}_k\right)+\left(\frac{L / \varepsilon_1}{2}-\frac{\hat{\varepsilon}_2}{\eta}\right)\mathbb{E}\left\|\boldsymbol{\theta}_{k+1}-\boldsymbol{\theta}_k\right\|_{\sqrt{\mathbf{v}_k}}^2+\frac{\hat{\varepsilon}_2}{2 \eta}\mathbb{E}\left\|\boldsymbol{\theta}_{k+1}-\boldsymbol{\theta}_k\right\|_{\sqrt{\mathbf{v}_k}}^2+\frac{\eta}{2\hat{\varepsilon}_2\varepsilon_1}\mathbb{E}\left\|\mathbf{g}_k-\mathbf{m}_k\right\|^2 \numberthis{thetagm}\\
& \leq f_k\left(\boldsymbol{\theta}_k\right)-\frac{\hat{\varepsilon}_2}{3 \eta}\mathbb{E}\left\|\boldsymbol{\theta}_{k+1}-\boldsymbol{\theta}_k\right\|_{\sqrt{\mathbf{v}_k}}^2+\frac{\eta}{2 \varepsilon_1\hat{\varepsilon}_2}\mathbb{E}\left\|\mathbf{g}_k-\mathbf{m}_k\right\|^2 \numberthis{lconstrn}\\
& \stackrel{\cref{the}}{\leq}f_k\left(\boldsymbol{\theta}_k\right)-\frac{\hat{\varepsilon}_2\eta\varepsilon_3^2}{3d G_{\infty}}\mathbb{E}\left\|\mathbf{m}_k+\lambda_k \tilde{\boldsymbol{\theta}}_k\right\|^2+\frac{\eta}{2 \varepsilon_1\hat{\varepsilon}_2}\mathbb{E}\left\|\mathbf{g}_k-\mathbf{m}_k\right\|^2,\\
& \leq f_k\left(\boldsymbol{\theta}_k\right)-\frac{\hat{\varepsilon}_2\eta\varepsilon_3^2}{3d G_{\infty}}\mathbb{E}\left\|\mathbf{m}_k+\lambda_k \tilde{\boldsymbol{\theta}}_k\right\|^2+\frac{\eta}{ \varepsilon_1\hat{\varepsilon}_2}\mathbb{E}\left\|\Bar{\mathbf{g}}_k-\mathbf{m}_k\right\|^2+\frac{\eta}{ \varepsilon_1\hat{\varepsilon}_2}\mathbb{E}\left\|\Bar{\mathbf{g}}_k-\mathbf{g}_k\right\|^2,\\
& \stackrel{\cref{gbar}}{\leq} f_k\left(\boldsymbol{\theta}_k\right)-\frac{\hat{\varepsilon}_2\eta\varepsilon_3^2}{3d G_{\infty}}\mathbb{E}\left\|\mathbf{m}_k+\lambda_k \tilde{\boldsymbol{\theta}}_k\right\|^2+\frac{\eta}{ \varepsilon_1\hat{\varepsilon}_2}\mathbb{E}\left\|\Bar{\mathbf{g}}_k-\mathbf{m}_k\right\|^2+\frac{\sigma^2\eta}{b \varepsilon_1\hat{\varepsilon}_2},\\
\end{align*}
and inequality~\eqref{thetanorm} is from:
\begin{equation*}
\left\|\boldsymbol{\theta}_{k+1}\right\|_{\sqrt{\mathbf{v}_k}}^2=\left(\left\|\boldsymbol{\theta}_k\right\|_{\sqrt{\mathbf{v}_k}}^2+2\left\langle\boldsymbol{\theta}_{k+1}-\boldsymbol{\theta}_k, \boldsymbol{\theta}_k\right\rangle_{\sqrt{\mathbf{v}_k}}+\left\|\boldsymbol{\theta}_{k+1}-\boldsymbol{\theta}_k\right\|_{\sqrt{\mathbf{v}_k}}^2\right),
\end{equation*}
and to obtain inequality~\eqref{thetagm}, we utilize:
\begin{equation*}
\left\langle\boldsymbol{\theta}_{k+1}-\boldsymbol{\theta}_k, \frac{\mathbf{g}_k-\mathbf{m}_k}{\sqrt{\mathbf{v}_k}+\varepsilon_1}\right\rangle_{\sqrt{\mathbf{v}_k}} \leq \frac{1}{2 \eta}\left\|\boldsymbol{\theta}_{k+1}-\boldsymbol{\theta}_k\right\|_{\sqrt{\mathbf{v}_k}}^2+\frac{\eta}{2 \varepsilon_1}\left\|\mathbf{g}_k-\mathbf{m}_k\right\|^2,
\end{equation*}
To get inequality~\eqref{lconstrn}, we use condition $\eta \leq \frac{\varepsilon_1\hat{\varepsilon}_2}{3 L}$.

\end{proof}

\begin{theorem}\label{ncv}
Suppose \cref{ls},~\ref{uiv}, and~\ref{bg} hold, $\mu:=\sqrt{\beta_3} G_{\infty} / \varepsilon_1 < 1$,
\begin{equation*}
b \geq \frac{18 G_{\infty}\sigma^2}{\hat{\varepsilon}_2^2\varepsilon_1\epsilon^2\varepsilon_3^2}, \quad \eta^2 \leq \frac{ \varepsilon_1^3\hat{\varepsilon}_2^4\varepsilon_3^2\beta_2^2}{6 d L^2 G_{\infty}},\quad \alpha^2 \leq\frac{d G_{\infty}}{\varepsilon_1\hat{\varepsilon}^2_2\varepsilon_3^2\beta_2^2}, \quad T \geq \max \left\{\frac{18 G_{\infty}\Delta_0}{\eta\hat{\varepsilon}_2 \epsilon^2\varepsilon_3^2}, \frac{18 G_{\infty} \sigma^2}{\beta_2\hat{\varepsilon}_2^2 \varepsilon_1\epsilon^2\varepsilon_3^2} \right\}
\end{equation*}
where $\Delta_0:=f_0\left(\boldsymbol{\theta}_0\right)-f_0^*$ and \cref{algncv} satisfies:
\begin{equation*}
\frac{1}{T+1} \sum_{k=0}^T \mathbb{E}\left(\left\|\mathbf{m}_k+\lambda_k \tilde{\boldsymbol{\theta}}_k\right\|^2\right) \leq \epsilon^2,
\end{equation*}
and
\begin{equation*}
\frac{1}{T+1} \sum_{k=0}^T \mathbb{E}\left(\left\|\mathbf{u}_k-\Bar{\mathbf{g}}_k\right\|^2\right) \leq \frac{\epsilon^2}{4}, \quad \frac{1}{T+1} \sum_{k=0}^T \mathbb{E}\left(\alpha^2\left\|\mathbf{n}_k\right\|^2\right) \leq \frac{\epsilon^2}{4} .
\end{equation*}

Hence, we have
\begin{equation*}
\frac{1}{T+1} \sum_{k=0}^T \mathbb{E}\left(\left\|\nabla f_{k}(\boldsymbol{\theta}_k)\right\|^2\right) \leq 4 \epsilon^2 .
\end{equation*}
Based on these conditions, if \cref{bp}, and 
\begin{equation*}
T\geq \frac{6\lambda^2DG^2_{\infty}}{\epsilon^2\left(2-\mu\right)\mu},
\end{equation*}
we have
\begin{equation*}
\frac{1}{T+1} \sum_{k=0}^T \mathbb{E}\left(\left\|\nabla f(\boldsymbol{\theta}_k)\right\|^2\right) \leq 6\epsilon^2 .
\end{equation*}
\end{theorem}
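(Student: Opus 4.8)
The plan is to run a telescoping-descent argument built on the one-step decrease in \cref{iterncv}, and then close a feedback loop: the two error terms on the right of that inequality are themselves controlled, through $L$-smoothness and \cref{the}, by the very quantity $\sum_k\mathbb{E}\|\mathbf{m}_k+\lambda_k\tilde{\boldsymbol{\theta}}_k\|^2$ that the descent inequality bounds. The smallness conditions on $\eta$ and $\alpha$ are precisely what let me absorb this feedback back into the left-hand side. First I would sum \cref{iterncv} over $k=0,\dots,T$, telescope the $f_{k+1}(\boldsymbol{\theta}_{k+1})-f_k(\boldsymbol{\theta}_k)$ differences, and lower-bound the terminal value to get
\begin{equation*}
\frac{\hat{\varepsilon}_2\eta\varepsilon_3^2}{3dG_\infty}\sum_{k=0}^T\mathbb{E}\|\mathbf{m}_k+\lambda_k\tilde{\boldsymbol{\theta}}_k\|^2 \le \Delta_0 + \frac{\eta}{\varepsilon_1\hat{\varepsilon}_2}\sum_{k=0}^T\mathbb{E}\|\Bar{\mathbf{g}}_k-\mathbf{m}_k\|^2 + (T+1)\frac{\sigma^2\eta}{b\varepsilon_1\hat{\varepsilon}_2}.
\end{equation*}
Since $\mathbf{m}_k=\mathbf{u}_k+\alpha\mathbf{n}_k$, I split $\mathbb{E}\|\Bar{\mathbf{g}}_k-\mathbf{m}_k\|^2\le 2\mathbb{E}\|\Bar{\mathbf{g}}_k-\mathbf{u}_k\|^2+2\alpha^2\mathbb{E}\|\mathbf{n}_k\|^2$, reducing the task to cumulative bounds on the gradient-tracking error $\mathbf{u}_k-\Bar{\mathbf{g}}_k$ and on $\mathbf{n}_k$.

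The heart of the argument is bounding these two cumulative sums. Each of \cref{u-g,a,n} has the form $x_k\le(1-\beta)x_{k-1}+c_k$, which, using the zero initializations $\mathbf{a}_0=\mathbf{n}_0=\mathbf{0}$, telescopes to $\sum_k x_k\le\beta^{-1}\sum_k c_k$. Applying this to \cref{u-g} turns $\sum_k\mathbb{E}\|\mathbf{u}_k-\Bar{\mathbf{g}}_k\|^2$ into a multiple of $\sum_k\mathbb{E}\|\boldsymbol{\theta}_{k-1}-\boldsymbol{\theta}_k\|^2$ plus an $\mathcal{O}((T+1)\sigma^2/b)$ noise term; applying it to \cref{n} then \cref{a}, together with $\|\Bar{\mathbf{g}}_k-\Bar{\mathbf{g}}_{k-1}\|\le L\|\boldsymbol{\theta}_k-\boldsymbol{\theta}_{k-1}\|$, does the same for $\alpha^2\sum_k\mathbb{E}\|\mathbf{n}_k\|^2$. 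Finally \cref{the} bounds $\sum_k\mathbb{E}\|\boldsymbol{\theta}_{k-1}-\boldsymbol{\theta}_k\|^2$ by $\tfrac{\eta^2}{\varepsilon_1^2\hat{\varepsilon}_2^2}\sum_k\mathbb{E}\|\mathbf{m}_k+\lambda_k\tilde{\boldsymbol{\theta}}_k\|^2$, so every error term becomes a constant times the target sum.

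The main obstacle, and the reason for the precise hypotheses, is the absorption step. After substitution the feedback carries a factor $\eta^2L^2/(\varepsilon_1^2\hat{\varepsilon}_2^2)$ from the variance recursion and a factor $\alpha^2\eta^2L^2/(\varepsilon_1^2\hat{\varepsilon}_2^2)$ from the $\mathbf{n}_k$ recursion, and I must check that the conditions $\eta^2\le\frac{\varepsilon_1^3\hat{\varepsilon}_2^4\varepsilon_3^2\beta_2^2}{6dL^2G_\infty}$ and $\alpha^2\le\frac{dG_\infty}{\varepsilon_1\hat{\varepsilon}_2^2\varepsilon_3^2\beta_2^2}$ jointly force the combined feedback coefficient (the operative smallness being on the products $\eta^2$ and $\alpha^2\eta^2$) strictly below the descent coefficient $\frac{\hat{\varepsilon}_2\eta\varepsilon_3^2}{3dG_\infty}$, so that a positive fraction survives on the left. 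Tracking all these constants through the three coupled recursions and verifying both the absorption and the separate auxiliary estimates $\frac{1}{T+1}\sum_k\mathbb{E}\|\mathbf{u}_k-\Bar{\mathbf{g}}_k\|^2\le\epsilon^2/4$ and $\frac{1}{T+1}\sum_k\alpha^2\mathbb{E}\|\mathbf{n}_k\|^2\le\epsilon^2/4$ is the delicate part.

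After absorption only $\Delta_0$ and the $\mathcal{O}((T+1)\sigma^2/b)$ noise remain on the right; dividing by $T+1$ and invoking the batch-size bound on $b$ (to shrink the noise) and the iteration bound on $T$ (to shrink $\Delta_0/(T+1)$) yields $\frac{1}{T+1}\sum_k\mathbb{E}\|\mathbf{m}_k+\lambda_k\tilde{\boldsymbol{\theta}}_k\|^2\le\epsilon^2$. Because $\nabla f_k(\boldsymbol{\theta}_k)=\Bar{\mathbf{g}}_k+\lambda_k\tilde{\boldsymbol{\theta}}_k=(\mathbf{m}_k+\lambda_k\tilde{\boldsymbol{\theta}}_k)+(\Bar{\mathbf{g}}_k-\mathbf{u}_k)-\alpha\mathbf{n}_k$, the inequality $\|\nabla f_k(\boldsymbol{\theta}_k)\|^2\le 2\|\mathbf{m}_k+\lambda_k\tilde{\boldsymbol{\theta}}_k\|^2+4\|\Bar{\mathbf{g}}_k-\mathbf{u}_k\|^2+4\alpha^2\|\mathbf{n}_k\|^2$ combines the three averaged bounds into exactly $2\epsilon^2+\epsilon^2+\epsilon^2=4\epsilon^2$. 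For the last claim I would write $\nabla f(\boldsymbol{\theta}_k)=\nabla f_k(\boldsymbol{\theta}_k)-\lambda_k\tilde{\boldsymbol{\theta}}_k$, bound $\|\tilde{\boldsymbol{\theta}}_k\|^2\le(G_\infty+\varepsilon_1)^2D^2$ via \cref{norm} and \cref{bp}, and exploit the geometric decay $\lambda_k=\lambda(1-\mu)^k$, which makes $\sum_k\lambda_k^2\le\lambda^2/((2-\mu)\mu)$; a weighted Young's inequality splitting $\|\nabla f\|^2$ between the $4\epsilon^2$ bound on $\|\nabla f_k\|^2$ and this vanishing weight-decay contribution (controlled by the final $T$ condition) then gives $\frac{1}{T+1}\sum_k\mathbb{E}\|\nabla f(\boldsymbol{\theta}_k)\|^2\le 6\epsilon^2$.
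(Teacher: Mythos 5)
Your proposal is correct and takes essentially the same route as the paper's proof: the paper simply packages the three coupled recursions (\cref{u-g}, \cref{n}, \cref{a}) into a single Lyapunov function $\Phi_k$ whose weights are exactly the multipliers your absorption step produces, and telescopes once, whereas you telescope each recursion separately and substitute back via \cref{the} — algebraically the same computation, with the same role for the $\eta^2$ and $\alpha^2$ conditions. The endgame also matches: the paper uses the identical $2/4/4$ decomposition of $\|\nabla f_k\|^2$ to get $4\epsilon^2$ and the $2/6/6/6$ decomposition with the geometric decay $\lambda_k=\lambda(1-\mu)^k$ and the final condition on $T$ to get $6\epsilon^2$.
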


\begin{proof}
 We have:
\begin{equation*}
\left\|\mathbf{m}_k-\Bar{\mathbf{g}}_k\right\|^2 \leq 2\left\|\mathbf{u}_k-\Bar{\mathbf{g}}_k\right\|^2+2\alpha^2\left\|\mathbf{n}_k\right\|^2
\end{equation*}
By \cref{iterncv,u-g,n,a}, we have:
\begin{align}
& f_{k+1}\left(\boldsymbol{\theta}_{k+1}\right) \leq f_k\left(\boldsymbol{\theta}_k\right)-\frac{\hat{\varepsilon}_2\eta\varepsilon_3^2}{3d G_{\infty}}\left\|\mathbf{m}_k+\lambda_k \tilde{\boldsymbol{\theta}}_k\right\|^2+ \frac{2\eta}{\varepsilon_1\hat{\varepsilon}_2}\left\|\Bar{\mathbf{g}}_k-\mathbf{u}_k\right\|^2+ \frac{2\eta}{\varepsilon_1\hat{\varepsilon}_2}\alpha^2\left\|\mathbf{n}_k\right\|^2 +\frac{\sigma^2\eta}{b \varepsilon_1\hat{\varepsilon}_2}\label{eq:f}\\
& \mathbb{E}\left(\left\|\mathbf{u}_{k+1}-\Bar{\mathbf{g}}_{k+1}\right\|^2\right) \leq\left(1-\beta_2\right) \mathbb{E}\left(\left\|\mathbf{u}_k-\Bar{\mathbf{g}}_k\right\|^2\right)+\frac{\left(1-\beta_2\right)^2 L^2}{\beta_2} \mathbb{E}\left(\left\|\boldsymbol{\theta}_{k+1}-\boldsymbol{\theta}_k\right\|^2\right)+\frac{\beta_2^2 \sigma^2}{b}\label{eq:u-g} \\
& \mathbb{E}\left(\left\|\mathbf{n}_{k+1}\right\|^2\right) \leq\left(1-\beta_2\right) \mathbb{E}\left(\left\|\mathbf{n}_k\right\|^2\right)+ \beta_2 \mathbb{E}\left(\left\|\mathbf{a}_k\right\|^2\right) \label{eq:n}\\
& \mathbb{E}\left(\left\|\mathbf{a}_{k+1}\right\|^2\right) \leq\left(1-\beta_1\right) \mathbb{E}\left(\left\|\mathbf{a}_k\right\|^2\right)+ \beta_1 \mathbb{E}\left(\left\|\Bar{\mathbf{g}}_{k+1}-\Bar{\mathbf{g}}_k\right\|^2\right)+\frac{2 \beta_1^2 \sigma^2}{b}\label{eq:a}
\end{align}

Then by adding \cref{eq:f} with $\frac{\eta}{\beta_2\varepsilon_1\hat{\varepsilon}_2}\times$ \cref{eq:u-g}, $\frac{\eta\alpha^2}{\beta_2\varepsilon_1\hat{\varepsilon}_2} \times$ \cref{eq:n}, and $\frac{\eta\alpha^2}{\beta_1\varepsilon_1\hat{\varepsilon}_2} \times$ \cref{eq:a}, we can get:

\begin{align*}
& \mathbb{E}\left(\Phi_{k+1}\right) \leq \mathbb{E}\left(\Phi_k\right)-\frac{\hat{\varepsilon}_2\eta\varepsilon_3^2}{3d G_{\infty}}\mathbb{E}\left\|\mathbf{m}_k+\lambda_k \tilde{\boldsymbol{\theta}}_k\right\|^2+\frac{\sigma^2\eta}{b \varepsilon_1\hat{\varepsilon}_2}+\frac{\eta}{\beta_2\varepsilon_1\hat{\varepsilon}_2}\left(\frac{\left(1-\beta_2\right)^2 L^2}{\beta_2}\mathbb{E}\left\|\boldsymbol{\theta}_{k+1}-\boldsymbol{\theta}_k\right\|^2+\frac{\beta_2^2 \sigma^2}{b}\right) \\
& +\frac{\eta\alpha^2}{\beta_1\varepsilon_1\hat{\varepsilon}_2} \left(\beta_1 L^2\mathbb{E}\left\|\boldsymbol{\theta}_{k+1}-\boldsymbol{\theta}_k\right\|^2+\frac{2\beta_1^2 \sigma^2}{b}\right) \\
&\leq  \mathbb{E}\left(\Phi_k\right)-\frac{\hat{\varepsilon}_2\eta\varepsilon_3^2}{3d G_{\infty}}\mathbb{E}\left\|\mathbf{m}_k+\lambda_k \tilde{\boldsymbol{\theta}}_k\right\|^2+\frac{\eta L^2}{\varepsilon_1\hat{\varepsilon}_2} \left(\frac{\left(1-\beta_2\right)^2}{\beta_2^2}+\alpha^2\right)\mathbb{E}\left\|\boldsymbol{\theta}_{k+1}-\boldsymbol{\theta}_k\right\|^2+\left(\frac{\beta_2+2\alpha^2\beta_1+1}{b}\right) \frac{\eta\sigma^2}{\varepsilon_1\hat{\varepsilon}_2}  \\
&\leq \mathbb{E}\left(\Phi_k\right)-\frac{\hat{\varepsilon}_2\eta\varepsilon_3^2}{3d G_{\infty}}\mathbb{E}\left\|\mathbf{m}_k+\lambda_k \tilde{\boldsymbol{\theta}}_k\right\|^2+\frac{\eta L^2}{\beta_2^2\varepsilon_1\hat{\varepsilon}_2}\mathbb{E}\left\|\boldsymbol{\theta}_{k+1}-\boldsymbol{\theta}_k\right\|^2+ \frac{\beta_e\eta\sigma^2}{\varepsilon_1\hat{\varepsilon}_2}  \\
&\leq \mathbb{E}\left(\Phi_k\right)+\left(\frac{\eta^3 L^2}{\beta_2^2\varepsilon_1^3\hat{\varepsilon}_2^3}-\frac{\hat{\varepsilon}_2\eta\varepsilon_3^2}{3d G_{\infty}}\right)\mathbb{E}\left\|\mathbf{m}_k+\lambda_k \tilde{\boldsymbol{\theta}}_k\right\|^2+\frac{\beta_e\eta\sigma^2}{\varepsilon_1\hat{\varepsilon}_2}\\
& \leq \mathbb{E}\left(\Phi_k\right)-\frac{\eta\hat{\varepsilon}_2\varepsilon_3^2}{6dG_{\infty}}\mathbb{E}\left\|\mathbf{m}_k+\lambda_k \tilde{\boldsymbol{\theta}}_k\right\|^2+\frac{\beta_e\eta\sigma^2}{\varepsilon_1\hat{\varepsilon}_2},
\end{align*}

where we let:

\begin{align*}
& \Phi_k:=f_k\left(\boldsymbol{\theta}_k\right)-f_k^*+\frac{\eta}{\beta_2\varepsilon_1\hat{\varepsilon}_2}\left\|\mathbf{u}_k-\Bar{\mathbf{g}}_k\right\|^2+\frac{\eta\alpha^2}{\beta_2\varepsilon_1\hat{\varepsilon}_2}\left\|\mathbf{n}_k\right\|^2+\frac{\eta\alpha^2}{\beta_1\varepsilon_1\hat{\varepsilon}_2}\left\|\mathbf{a}_k\right\|^2, \\
& \beta_e=\frac{\beta_2+2\alpha^2\beta_1+1}{b} \quad \eta^2 \leq \frac{ \varepsilon_1^3\hat{\varepsilon}_2^4\varepsilon_3^2\beta_2^2}{6 d L^2 G_{\infty}},
\end{align*}

By telescoping sum, we have:
\begin{equation*}
\sum_{k=0}^T \mathbb{E}\left(\Phi_{k+1}\right) \leq \sum_{k=0}^T \mathbb{E}\left(\Phi_k\right)-\frac{\eta\hat{\varepsilon}_2\varepsilon_3^2}{6dG_{\infty}} \sum_{k=0}^T\mathbb{E}\left\|\mathbf{m}_k+\lambda_k \tilde{\boldsymbol{\theta}}_k\right\|^2+(T+1) \frac{ \beta_e \eta\sigma^2}{\varepsilon_1\hat{\varepsilon}_2} .
\end{equation*}

Hence, we can get:
\begin{equation*}
\frac{1}{T+1} \sum_{k=0}^T \mathbb{E}\left(\left\|\mathbf{m}_k+\lambda_k \tilde{\boldsymbol{\theta}}_k\right\|^2\right) \leq \frac{6d G_{\infty}\Phi_0}{\eta T\hat{\varepsilon}_2\varepsilon_3^2}+\frac{6d G_{\infty} \beta_e \sigma^2}{\varepsilon_1\hat{\varepsilon}_2^2\varepsilon_3^2}=\frac{6d G_{\infty}\Delta_0}{\eta T\hat{\varepsilon}_2\varepsilon_3^2}+\frac{6d G_{\infty} \sigma^2}{\beta_2\varepsilon_1\hat{\varepsilon}_2^2 T\varepsilon_3^2}+\frac{6d G_{\infty} \beta_e \sigma^2}{\varepsilon_1\hat{\varepsilon}_2^2\varepsilon_3^2} \leq \epsilon^2,
\end{equation*}
where
\begin{equation*}
\beta_e \leq \frac{\hat{\varepsilon}_2^2\varepsilon_1\epsilon^2\varepsilon_3^2}{18 G_{\infty}  \sigma^2}, \quad T \geq \max \left\{\frac{18 G_{\infty}\Delta_0}{\eta\hat{\varepsilon}_2 \epsilon^2\varepsilon_3^2}, \frac{18 G_{\infty} \sigma^2}{\beta_2\hat{\varepsilon}_2^2 \varepsilon_1\epsilon^2\varepsilon_3^2} \right\} .
\end{equation*}

From \cref{eq:u-g}, we can conclude that:
\begin{equation*}
\frac{1}{T+1} \sum_{k=0}^T \mathbb{E}\left(\left\|\mathbf{u}_k-\Bar{\mathbf{g}}_k\right\|^2\right) \leq \frac{\sigma^2}{\beta_2 T}+\frac{L^2 \eta^2 \epsilon^2}{\varepsilon_1^2\hat{\varepsilon}_2^2\beta_2^2}+\frac{\beta_2 \sigma^2}{b} \ll \frac{\epsilon^2}{4} .
\end{equation*}
From \cref{eq:n}, we can conclude that:
\begin{equation}
\frac{1}{T+1} \sum_{k=0}^T \mathbb{E}\left(\left\|\mathbf{n}_k\right\|^2\right) \leq \frac{1}{T+1} \sum_{k=0}^T \mathbb{E}\left(\left\|\mathbf{a}_k\right\|^2\right) \label{eq:sn}
\end{equation}
From \cref{eq:a}, we can conclude that:
\begin{equation}
\frac{1}{T+1} \sum_{k=0}^T \mathbb{E}\left(\left\|\mathbf{a}_k\right\|^2\right) \leq  \frac{ L^2 \eta^2\epsilon^2}{\varepsilon_1^2\hat{\varepsilon}_2^2} +\frac{2\beta_1\sigma^2}{b} \label{eq:sa}
\end{equation}
Combining \cref{eq:sn,eq:sa}, we have
\begin{equation*}
\frac{1}{T+1} \sum_{k=0}^T \mathbb{E}\left(\alpha^2\left\|\mathbf{n}_k\right\|^2\right) \leq \frac{1}{T+1} \sum_{k=0}^T \mathbb{E}\left(\alpha^2\left\|\mathbf{a}_k\right\|^2\right) \leq  \frac{\alpha^2 L^2 \eta^2\epsilon^2}{\varepsilon_1^2\hat{\varepsilon}_2^2} +\frac{2\alpha^2\beta_1 \sigma^2}{b}\ll \epsilon^2/4
\end{equation*}
where
\begin{equation*}
\alpha^2\leq\frac{d G_{\infty}}{\varepsilon_1\hat{\varepsilon}^2_2\varepsilon_3^2\beta_2^2}
\end{equation*}

Finally, for $\nabla f_k$, we have:

\begin{align*}
&\frac{1}{T+1} \sum_{k=0}^T \mathbb{E}\left(\left\|\nabla f_{k}(\boldsymbol{\theta}_k)\right\|^2\right)\\
=& \frac{1}{T+1} \sum_{k=0}^T \mathbb{E}\left(\left\|\nabla\left(\frac{\lambda_k}{2}\|\boldsymbol{\theta}_k\|_{\sqrt{\mathbf{v}_k}}^2+\mathbb{E}_{\boldsymbol{\boldsymbol{\zeta}}}\left[f\left(\boldsymbol{\theta}_k, \boldsymbol{\boldsymbol{\zeta}}\right)\right]\right)\right\|^2\right) \\
=& \frac{1}{T+1} \sum_{k=0}^T \mathbb{E}\left(\left\|\lambda_k \tilde{\boldsymbol{\theta}}_k+\Bar{\mathbf{g}}_k+\mathbf{m}_k-\mathbf{u}_k-\alpha\mathbf{n}_k\right\|^2\right) \\
\leq & \frac{1}{T+1}\left(\sum_{k=0}^T \mathbb{E}\left(2\left\|\mathbf{m}_k+\lambda_k \tilde{\boldsymbol{\theta}}_k\right\|^2+4\left\|\mathbf{u}_k-\Bar{\mathbf{g}}_k\right\|^2+4\alpha^2\left\|\mathbf{n}_k\right\|^2\right)\right)\\
\leq & 4 \epsilon^2 .
\end{align*}
If $\boldsymbol{\theta}$ is bounded, we have
\begin{align*}
& \frac{1}{T+1}\sum_{k=0}^T \mathbb{E}\lambda_k^2\left\| \tilde{\boldsymbol{\theta}}_k\right\|^2\\
& \stackrel{\cref{norm}}{\leq} \frac{\lambda^2DG^2_{\infty}}{T+1}\sum_{k=0}^T \left(1-\mu\right)^{2k}\\
& \leq \frac{\lambda^2DG^2_{\infty}}{T\mu\left(2-\mu\right)}\\
& \leq \frac{\epsilon^2}{6}.
\end{align*}

For $\nabla f$, we have:
\begin{align*}
&\frac{1}{T+1} \sum_{k=0}^T \mathbb{E}\left(\left\|\nabla f(\boldsymbol{\theta}_{k})\right\|^2\right)\\
=& \frac{1}{T+1} \sum_{k=0}^T \mathbb{E}\left(\left\|\Bar{\mathbf{g}}_k+\left(\mathbf{m}_k-\mathbf{u}_k-\alpha\mathbf{n}_k\right)+\lambda_k \tilde{\boldsymbol{\theta}}_k-\lambda_k \tilde{\boldsymbol{\theta}}_k\right\|^2\right) \\
\leq & \frac{1}{T+1}\left(\sum_{k=0}^T \mathbb{E}\left(2\left\|\mathbf{m}_k+\lambda_k \tilde{\boldsymbol{\theta}}_k\right\|^2+6\left\|\mathbf{u}_k-\Bar{\mathbf{g}}_k\right\|^2+6\alpha^2\left\|\mathbf{n}_k\right\|^2+6\lambda_k^2\left\| \tilde{\boldsymbol{\theta}}_k\right\|^2\right)\right)\\
\leq & 6 \epsilon^2.
\end{align*}
\end{proof}

\subsection{Layerwise Convergence Analysis of ALTO for Convex Optimization}\label{CV}
In this subsection, we give the convergence analysis of ALTO in non-convex case for \cref{algcv}.
\begin{algorithm}
\caption{ALTO Vanilla}
\label{algcv}
\begin{algorithmic}
\Require initialize $\boldsymbol{\theta}_1$, learning rate $\eta_k$, momentum factor $\left(\beta_1, \beta_2, \beta_3\right) \in[0,1)^3$, stable parameter $\varepsilon_1,\varepsilon_2>0$, acceleration factor $|\alpha|<1/(1-\beta_1)$, weight decay $\lambda_k>0$, $\mathbf{a}_0=0$, $\mathbf{m}_0=0$, and $\mathbf{v}_0=0$.
\Ensure $\left\{\boldsymbol{\theta}_k\right\}_{k=1}^T$.
\While{$k<T$}
\State Compute $\mathbf{g}_k=\frac{1}{\left|\mathcal{Z}_k\right|} \sum_{\boldsymbol{\zeta}_i \in \mathcal{Z}_k} \nabla \ell\left(\boldsymbol{\theta}_k, \boldsymbol{\zeta}_i\right)$;
\State$\mathbf{a}_k=\beta_1\mathbf{a}_{k-1}+\left(1-\beta_1\right)(\mathbf{g}_{k}-\mathbf{g}_{k-1})$
\State $\mathbf{m}_k= \beta_2\mathbf{m}_{k-1}+\left(1-\beta_2\right)(\mathbf{g}_k+\alpha\mathbf{a}_k);$
\State$\mathbf{v}_k= \beta_3\mathbf{v}_{k-1}+\left(1-\beta_3\right)\left[\mathbf{g}_k+\alpha\mathbf{a}_k\right]^2$;
\State$\boldsymbol{r}_k=\mathbf{m}_k/\left(\sqrt{\mathbf{v}_k}+\varepsilon_1\right)+\lambda\boldsymbol{\theta}_k$
\State$\boldsymbol{\theta}^{(i)}_{k+1}=\boldsymbol{\theta}^{(i)}_k-\frac{\eta_k\mathbf{r}^{(i)}_k\phi\left(\|\boldsymbol{\theta}^{(i)}_k\|\right)}{\left(\|\mathbf{r}^{(i)}_k\|+\varepsilon_2\phi\left(\|\boldsymbol{\theta}^{(i)}_k\|\right)\right)}$
\EndWhile
\end{algorithmic}
\end{algorithm}

\begin{lemma}\label{ratep}
If \cref{bp} holds, we have
\begin{equation*}
\left\|\frac{\mathbf{r}_{k,i}\phi\left(\left\|\boldsymbol{\theta}_k^{(i)}\right\|\right)}{\left\|\mathbf{r}_k^{(i)}\right\|+\varepsilon_2\phi\left(\left\|\boldsymbol{\theta}_k^{(i)}\right\|\right)}\right\|\leq \sqrt{h}D_{\infty}.
\end{equation*}
\end{lemma}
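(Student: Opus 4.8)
The plan is to read the displayed quantity as the (per-block) normalized update direction produced by the layerwise rule of \cref{algcv}, namely $\mathbf{d}_k^{(i)}:=\mathbf{r}_k^{(i)}\phi\!\left(\|\boldsymbol{\theta}_k^{(i)}\|\right)\big/\!\left(\|\mathbf{r}_k^{(i)}\|+\varepsilon_2\phi\!\left(\|\boldsymbol{\theta}_k^{(i)}\|\right)\right)$, which is exactly $\left(\boldsymbol{\theta}_k^{(i)}-\boldsymbol{\theta}_{k+1}^{(i)}\right)/\eta_k$. The estimate is purely algebraic: it uses neither convexity nor the moment recursions, only the boundedness of the parameter from \cref{bp} and the definition of $\phi$.

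First I would take norms inside each layer block. Because $\mathbf{r}_k^{(i)}$ and its scaling share a direction, $\|\mathbf{d}_k^{(i)}\|=\|\mathbf{r}_k^{(i)}\|\,\phi\!\left(\|\boldsymbol{\theta}_k^{(i)}\|\right)\big/\!\left(\|\mathbf{r}_k^{(i)}\|+\varepsilon_2\phi\!\left(\|\boldsymbol{\theta}_k^{(i)}\|\right)\right)$. The key (and essentially only) inequality is to discard the nonnegative term $\varepsilon_2\phi\!\left(\|\boldsymbol{\theta}_k^{(i)}\|\right)$ in the denominator, giving the clean bound $\|\mathbf{d}_k^{(i)}\|\le\phi\!\left(\|\boldsymbol{\theta}_k^{(i)}\|\right)$, independent of $\mathbf{r}_k^{(i)}$ and of $\varepsilon_2$. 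This is the step that converts the raw update into something controlled solely by the current parameter.

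Next I would invoke the zero-proof convention $\phi(x)\sim x$ (e.g. $\phi(x)=x+\varepsilon_3$) together with \cref{bp} to turn $\phi\!\left(\|\boldsymbol{\theta}_k^{(i)}\|\right)$ into the uniform per-block bound $\phi\!\left(\|\boldsymbol{\theta}_k^{(i)}\|\right)\le D_\infty$ (up to the negligible $\varepsilon_3$). If the displayed norm is the full stacked direction, I would then aggregate across the $h$ layers via $\|\mathbf{d}_k\|^2=\sum_{i=1}^h\|\mathbf{d}_k^{(i)}\|^2\le\sum_{i=1}^h\phi\!\left(\|\boldsymbol{\theta}_k^{(i)}\|\right)^2\le h D_\infty^2$ and take square roots to reach $\sqrt{h}\,D_\infty$; if the statement is read for a single block, the same per-block bound already suffices since $D_\infty\le\sqrt{h}\,D_\infty$.

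The main obstacle I anticipate is the norm bookkeeping at the aggregation step. The algorithm measures each block through the $\ell_2$ norm $\|\boldsymbol{\theta}_k^{(i)}\|$, whereas \cref{bp} is phrased through $\|\boldsymbol{\theta}_k\|_\infty\le D_\infty$, so I must be careful to justify $\phi\!\left(\|\boldsymbol{\theta}_k^{(i)}\|\right)\le D_\infty$ (i.e. to read $D_\infty$ as controlling each block, or otherwise to absorb the $\ell_2$-versus-$\ell_\infty$ conversion into the constant), and to keep the additive $\varepsilon_3$ from $\phi$ from spoiling the clean $\sqrt{h}\,D_\infty$. Everything else reduces to the one-line monotonicity argument on the denominator.
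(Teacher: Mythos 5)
Your proposal is correct and follows essentially the same route as the paper's proof: drop the nonnegative $\varepsilon_2\phi\left(\left\|\boldsymbol{\theta}_k^{(i)}\right\|\right)$ term in the denominator, bound $\phi\left(\left\|\boldsymbol{\theta}_k^{(i)}\right\|\right)\leq D_{\infty}$ via \cref{bp}, and pick up the $\sqrt{h}$ factor by stacking the $h$ per-layer (unit-norm-bounded) blocks in the $\ell_2$ norm. Your closing worry about the $\ell_2$-versus-$\ell_\infty$ bookkeeping is legitimate but applies equally to the paper, which simply asserts $\phi\left(\left\|\boldsymbol{\theta}_k^{(i)}\right\|\right)\leq D_{\infty}$ as ``obvious.''
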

\begin{proof}
Obviously, 
\begin{equation*}
\phi\left(\left\|\boldsymbol{\theta}_k^{(i)}\right\|\right)\leq D_{\infty}   
\end{equation*}
and 
\begin{equation*}
\frac{\mathbf{r}_{k,i}}{\left\|\mathbf{r}_k^{(i)}\right\|+\varepsilon_2\phi\left(\left\|\boldsymbol{\theta}_k^{(i)}\right\|\right)}\leq \frac{\mathbf{r}_{k,i}}{\left\|\mathbf{r}_k^{(i)}\right\|}
\end{equation*}
Therefore, 
\begin{equation*}
\left\|\frac{\mathbf{r}_{k,i}}{\left\|\mathbf{r}_k^{(i)}\right\|}\right\|^2= \sum_{j=1}^{h}\sum_{i=1}^{d_j}\frac{\mathbf{r}_{k,i}^2}{\left\|\mathbf{r}_k^{(i)}\right\|^2}=\sum_{j=1}^{h}1=h
\end{equation*}
Finally, we have 
\begin{equation*}
\left\|\frac{\mathbf{r}_{k,i}\phi\left(\left\|\boldsymbol{\theta}_k^{(i)}\right\|\right)}{\left\|\mathbf{r}_k^{(i)}\right\|+\varepsilon_2\phi\left(\left\|\boldsymbol{\theta}_k^{(i)}\right\|\right)}\right\|\leq \sqrt{h}D_{\infty}
\end{equation*}
\end{proof}

\begin{lemma}\label{r/the}
If $\beta_2^2/\beta_3<1$, we have
\begin{equation*}
\frac{\left\|\mathbf{r}_k^{(i)}\right\|+\varepsilon_2\phi\left(\left\|\boldsymbol{\theta}_k^{(i)}\right\|\right)}{\phi\left(\left\|\boldsymbol{\theta}_k^{(i)}\right\|\right)}\leq \frac{1-\beta_2}{\varepsilon_3}\sqrt{\frac{d\beta_3}{\left(1-\beta_3\right)\left(\beta_3-\beta_2^2\right)}}.
\end{equation*}
\end{lemma}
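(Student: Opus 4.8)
The plan is to reduce the entire ratio to a coordinatewise bound on the adaptive term $\mathbf{m}_k/(\sqrt{\mathbf{v}_k}+\varepsilon_1)$, since the denominator $\phi(\|\boldsymbol{\theta}_k^{(i)}\|)$ only contributes a factor $1/\varepsilon_3$ through the zero-proof property $\phi(x)\geq\varepsilon_3$. First I would unroll the two exponential moving averages from \cref{algcv}. Writing $\mathbf{y}_t:=\mathbf{g}_t+\alpha\mathbf{a}_t$ for the replaced gradient, the recursions give, for each coordinate $j$,
\[
\mathbf{m}_{k,j}=(1-\beta_2)\sum_{t=1}^k \beta_2^{\,k-t}\mathbf{y}_{t,j},\qquad \mathbf{v}_{k,j}=(1-\beta_3)\sum_{t=1}^k \beta_3^{\,k-t}\mathbf{y}_{t,j}^2 .
\]
Using $\sqrt{\mathbf{v}_{k,j}}+\varepsilon_1\geq\sqrt{\mathbf{v}_{k,j}}$, it then suffices to bound $|\mathbf{m}_{k,j}|/\sqrt{\mathbf{v}_{k,j}}$ uniformly in $j$ and $k$.

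The key step is a weighted Cauchy--Schwarz with the split $\beta_2^{\,k-t}=(\beta_2^2/\beta_3)^{(k-t)/2}\,\beta_3^{(k-t)/2}$, chosen so that the second factor reproduces exactly the $\beta_3^{\,k-t}\mathbf{y}_{t,j}^2$ pattern appearing in $\mathbf{v}_{k,j}$:
\[
\mathbf{m}_{k,j}^2\leq (1-\beta_2)^2\Big(\sum_{t=1}^k (\beta_2^2/\beta_3)^{\,k-t}\Big)\Big(\sum_{t=1}^k \beta_3^{\,k-t}\mathbf{y}_{t,j}^2\Big).
\]
The first sum is a geometric series that converges precisely because of the hypothesis $\beta_2^2/\beta_3<1$, giving $\sum_{t=1}^k (\beta_2^2/\beta_3)^{\,k-t}\leq \beta_3/(\beta_3-\beta_2^2)$, while the second equals $\mathbf{v}_{k,j}/(1-\beta_3)$. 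Dividing by $\mathbf{v}_{k,j}$ cancels the data-dependent factor and leaves the clean coordinatewise bound
\[
\frac{|\mathbf{m}_{k,j}|}{\sqrt{\mathbf{v}_{k,j}}}\leq (1-\beta_2)\sqrt{\frac{\beta_3}{(1-\beta_3)(\beta_3-\beta_2^2)}} .
\]

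Finally I would assemble the layer norm: since this constant bound holds for each of the (at most $d$) coordinates of layer $i$, taking the $\ell_2$ norm introduces a factor $\sqrt{d}$, and dividing by $\phi(\|\boldsymbol{\theta}_k^{(i)}\|)\geq\varepsilon_3$ introduces the $1/\varepsilon_3$, reproducing exactly the claimed right-hand side. The additive $\varepsilon_2$ term in the numerator contributes only the lower-order constant $\varepsilon_2$ after division and is dropped, consistent with the application of this lemma inside \cref{the} where the unrelated constant is omitted. The main obstacle is choosing the Cauchy--Schwarz weighting correctly: the exponents must be balanced so that one factor matches $\mathbf{v}_{k,j}$ term-by-term while the other stays summable, and it is exactly this balancing that forces the condition $\beta_2^2<\beta_3$ and pins down the constant $\sqrt{\beta_3/(\beta_3-\beta_2^2)}$. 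A secondary bookkeeping point is that the weight-decay term $\lambda\boldsymbol{\theta}_k$ in the definition of $\mathbf{r}_k$ must be treated as part of the omitted lower-order contribution (or absorbed into a $D_\infty$-type bound via \cref{bp}), since it does not appear on the stated right-hand side.
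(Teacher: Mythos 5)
Your proposal is correct and follows essentially the same route as the paper's own proof: unroll both EMAs, apply Cauchy--Schwarz with the split $\beta_2^{\,k-t}=(\beta_2^2/\beta_3)^{(k-t)/2}\beta_3^{(k-t)/2}$ so one factor cancels against $\mathbf{v}_k$ and the other is a geometric series summable exactly when $\beta_2^2/\beta_3<1$, then pick up $\sqrt{d}$ from the coordinate sum and $1/\varepsilon_3$ from $\phi$. Your explicit bookkeeping of the $\varepsilon_2$ and $\lambda\boldsymbol{\theta}_k$ contributions is in fact more careful than the paper, which absorbs these into the phrase ``with unrelated constant omitted.''
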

\begin{proof}

\begin{align*}
\mathbf{m}_k&=\left(1-\beta_2\right)\sum_{t=1}^k \beta_2^{k-t}\left(\mathbf{g}_t+\alpha_t\mathbf{a}_t\right)\\
\mathbf{v}_k&=\left(1-\beta_3\right)\sum_{t=1}^k \beta_3^{k-t}\left(\mathbf{g}_t+\alpha_t\mathbf{a}_t\right)^2\\  
\end{align*}

Therefore, with unrelated constant omitted, we have
\begin{equation*}
\frac{\left\|\mathbf{r}_k^{(i)}\right\|+\varepsilon_2\phi\left(\left\|\boldsymbol{\theta}_k^{(i)}\right\|\right)}{\phi\left(\left\|\boldsymbol{\theta}_k^{(i)}\right\|\right)}\leq \frac{1}{\varepsilon_3}\left\|\frac{\mathbf{m}_k}{\sqrt{\mathbf{v}_k}}\right\|.
\end{equation*}
In fact, 

\begin{align*}
\left\|\frac{\mathbf{m}_k}{\sqrt{\mathbf{v}_k}}\right\|&=\left(\sum_{i=1}^d\frac{\left(\left(1-\beta_2\right)\sum_{t=1}^k \beta_2^{k-t}\left(\mathbf{g}_{t,i}+\alpha_t\mathbf{a}_{t,i}\right)\right)^2}{\left(1-\beta_3\right)\sum_{t=1}^k \beta_3^{k-t}\left(\mathbf{g}_{t,i}+\alpha_i\mathbf{a}_{t,i}\right)^2}\right)^{\frac{1}{2}}\\
&=\frac{1-\beta_2}{\sqrt{1-\beta_3}}\left(\sum_{i=1}^d\frac{\left(\sum_{t=1}^k \beta_2^{k-t}\left(\mathbf{g}_{t,i}+\alpha_t\mathbf{a}_{t,i}\right)\right)^2}{\sum_{t=1}^k \beta_3^{k-t}\left(\mathbf{g}_{t,i}+\alpha_i\mathbf{a}_{t,i}\right)^2}\right)^{\frac{1}{2}}\\
&\leq\frac{1-\beta_2}{\sqrt{1-\beta_3}}\left(\sum_{i=1}^d\frac{\left(\sum_{t=1}^k \beta_3^{k-t}\left(\mathbf{g}_{t,i}+\alpha_t\mathbf{a}_{t,i}\right)^2\right)\left(\sum_{t=1}^k \left(\beta_2^2\beta_3^{-1}\right)^{k-t}\right)}{\sum_{t=1}^k \beta_3^{k-t}\left(\mathbf{g}_{t,i}+\alpha_i\mathbf{a}_{t,i}\right)^2}\right)^{\frac{1}{2}}\\
&\leq\left(1-\beta_2\right)\sqrt{\frac{d\beta_3}{\left(1-\beta_3\right)\left(\beta_3-\beta_2^2\right)}}.
\end{align*}

\end{proof}

\begin{lemma}[Bound for $\sum_{k=1}^T\left\langle\boldsymbol{\theta}^*-\boldsymbol{\theta}_{k}, \mathbf{m}_{k}\right\rangle$]\label{mothe}
\begin{align*}
\sum_{k=1}^T\left\langle\boldsymbol{\theta}^*-\boldsymbol{\theta}_{k}, \mathbf{m}_{k}\right\rangle&\leq\frac{\left(1-\beta_{2}\right)dD_{\infty}^2\sqrt{T}G_{\infty}}{ \eta\varepsilon_3}\sqrt{\frac{d\beta_3}{\left(1-\beta_3\right)\left(\beta_3-\beta_2^2\right)}}+\lambda\sqrt{T}D^2G_{\infty}\\
&+\eta\sqrt{Thd}D_{\infty}G_{\infty}+\lambda\eta\left(1+\log{T}\right)D\sqrt{h}D_{\infty}G_{\infty}
\end{align*}
\end{lemma}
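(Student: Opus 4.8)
The plan is to treat this as a standard online-convex-optimization one-step bound, but carried out in the $\sqrt{\mathbf{v}_k}$-weighted geometry so that the first-moment vector $\mathbf{m}_k$ appears directly (rather than $\mathbf{m}_k/(\sqrt{\mathbf{v}_k}+\varepsilon_1)$). Writing the per-layer effective step as $s_k^{(i)} := \eta_k\phi(\|\boldsymbol{\theta}_k^{(i)}\|)/(\|\mathbf{r}_k^{(i)}\|+\varepsilon_2\phi(\|\boldsymbol{\theta}_k^{(i)}\|))$, the update reads $\boldsymbol{\theta}_k^{(i)}-\boldsymbol{\theta}_{k+1}^{(i)} = s_k^{(i)}\mathbf{r}_k^{(i)}$, and since $\mathbf{r}_k = \mathbf{m}_k/(\sqrt{\mathbf{v}_k}+\varepsilon_1)+\lambda_k\boldsymbol{\theta}_k$ I would solve for $\mathbf{m}_k^{(i)} = (\sqrt{\mathbf{v}_k^{(i)}}+\varepsilon_1)\big((\boldsymbol{\theta}_k^{(i)}-\boldsymbol{\theta}_{k+1}^{(i)})/s_k^{(i)} - \lambda_k\boldsymbol{\theta}_k^{(i)}\big)$. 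Taking the inner product with $\boldsymbol{\theta}^{*(i)}-\boldsymbol{\theta}_k^{(i)}$ and recognizing $\tilde{\boldsymbol{\theta}}_k = (\sqrt{\mathbf{v}_k}+\varepsilon_1)\boldsymbol{\theta}_k$ splits $\langle\boldsymbol{\theta}^{*(i)}-\boldsymbol{\theta}_k^{(i)},\mathbf{m}_k^{(i)}\rangle$ into a descent piece $\tfrac{1}{s_k^{(i)}}\langle\boldsymbol{\theta}^{*(i)}-\boldsymbol{\theta}_k^{(i)},\boldsymbol{\theta}_k^{(i)}-\boldsymbol{\theta}_{k+1}^{(i)}\rangle_{\sqrt{\mathbf{v}_k}}$ and a weight-decay piece $\lambda_k\langle\boldsymbol{\theta}_k^{(i)}-\boldsymbol{\theta}^{*(i)},\tilde{\boldsymbol{\theta}}_k^{(i)}\rangle$.

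Next I would apply the polarization identity to the descent piece, $2\langle\boldsymbol{\theta}^*-\boldsymbol{\theta}_k,\boldsymbol{\theta}_k-\boldsymbol{\theta}_{k+1}\rangle_{\sqrt{\mathbf{v}_k}} = \|\boldsymbol{\theta}^*-\boldsymbol{\theta}_{k+1}\|_{\sqrt{\mathbf{v}_k}}^2 - \|\boldsymbol{\theta}^*-\boldsymbol{\theta}_k\|_{\sqrt{\mathbf{v}_k}}^2 - \|\boldsymbol{\theta}_k-\boldsymbol{\theta}_{k+1}\|_{\sqrt{\mathbf{v}_k}}^2$, and sum over $k$. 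The crucial point is that the coordinatewise coefficient multiplying the squared distances is exactly $\tfrac{\sqrt{\mathbf{v}_{k,i}}+\varepsilon_1}{2s_k^{(i)}} = \tfrac{1}{2\eta}\cdot\tfrac{\sqrt{k}(\sqrt{\mathbf{v}_{k,i}}+\varepsilon_1)(\|\mathbf{r}_k^{(i)}\|+\varepsilon_2\phi)}{\phi}$, which is precisely the quantity that \cref{mono} forces to be nondecreasing in $k$. Summation by parts over the telescoped distances then leaves only the $k=T$ boundary term plus nonpositive cross terms that I discard; bounding that boundary term with $\sqrt{\mathbf{v}_{T,i}}\le G_\infty$ (\cref{norm}), $|\theta_{T+1,i}-\theta^*_i|\le 2D_\infty$ (\cref{bp}), and the per-layer estimate $(\|\mathbf{r}_k^{(i)}\|+\varepsilon_2\phi)/\phi \le \tfrac{1-\beta_2}{\varepsilon_3}\sqrt{d\beta_3/((1-\beta_3)(\beta_3-\beta_2^2))}$ from \cref{r/the} (which is where $\beta_2^2/\beta_3<1$ is consumed), then summing over the $d$ coordinates yields the first, $d^{3/2}$-scaled term.

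The weight-decay piece is bounded termwise by $\lambda_k G_\infty\|\boldsymbol{\theta}_k-\boldsymbol{\theta}^*\|\,\|\boldsymbol{\theta}_k\|\le\lambda_k G_\infty\cdot 2D\cdot D$ and summed with $\sum_{k\le T}\lambda_k\le 2\lambda\sqrt{T}$, giving the second term $\lambda\sqrt{T}D^2G_\infty$. The remaining two terms come from the increment contributions that are not simply thrown away: by \cref{ratep} the per-step displacement satisfies $\|\boldsymbol{\theta}_{k+1}-\boldsymbol{\theta}_k\|\le\eta_k\sqrt{h}D_\infty$, so a residual cross term of the form $\langle\boldsymbol{\theta}_{k+1}-\boldsymbol{\theta}_k,\mathbf{m}_k\rangle$ is controlled by $\eta_k\sqrt{h}D_\infty\cdot\sqrt{d}G_\infty$ and summed with $\sum_{k\le T}\eta_k\le 2\eta\sqrt{T}$ to produce $\eta\sqrt{Thd}D_\infty G_\infty$, while the mixed weight-decay/displacement contribution carries a factor $\lambda_k\eta_k\sim\lambda\eta/k$ that sums to $\lambda\eta(1+\log T)$ and, after attaching $D\sqrt{h}D_\infty G_\infty$, gives the final term.

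The main obstacle is the telescoping step. Because both the step-size reciprocal $1/s_k^{(i)}$ and the preconditioner $\sqrt{\mathbf{v}_k}$ vary with $k$, the weighted squared distances do not telescope for free, and a naive bound would leave an uncontrolled sum $\sum_k(\text{weight}_k-\text{weight}_{k-1})\|\boldsymbol{\theta}_k-\boldsymbol{\theta}^*\|^2$. \cref{mono} is engineered exactly so that this combined coefficient is monotone, which forces those cross terms to have a favorable sign and collapses everything to the single $k=T$ boundary term; aligning that monotonicity with the $\sqrt{k}$ hidden in $\eta_k=\eta/\sqrt{k}$ and with the per-layer normalization is the delicate part. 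The secondary difficulty is purely bookkeeping: keeping the constants $\varepsilon_1,\varepsilon_2,\varepsilon_3,\hat{\varepsilon}_2$ and the $\beta$'s consistent, and correctly carrying the $\sqrt{d}$ from \cref{r/the} together with the coordinate sum so that the stated $d^{3/2}$ dependence (and hence the $\mathcal{O}(\sqrt{T}G_\infty d^{1.5}D_\infty^2)$ regret) emerges.
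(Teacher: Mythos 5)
Your proposal is correct and follows essentially the same route as the paper's own proof: invert the per-layer update to express $\mathbf{m}_k$ through the parameter increment, expand the squared (weighted) distance to $\boldsymbol{\theta}^*$, telescope the distance terms using the monotonicity of the combined step-size/preconditioner weight from \cref{mono}, bound the $k=T$ boundary term via \cref{r/the} and the $D_\infty$, $G_\infty$ bounds to get the $d^{3/2}$ term, and control the weight-decay and displacement cross terms via \cref{ratep}, \cref{norm}, $\sum_k \eta_k \lesssim \eta\sqrt{T}$, and $\sum_k \lambda_k\eta_k \lesssim \lambda\eta(1+\log T)$. The only differences are presentational (weighted inner products and the polarization identity versus the paper's coordinatewise subtract-and-square), so the two arguments coincide step for step.
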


\begin{proof}
We focus on the $i^{\text {th }}$ dimension of the parameter vector $\boldsymbol{\theta}_k \in R^d$. From the update rules presented in \cref{alto}, 

\begin{align*}
\boldsymbol{\theta}_{k+1, i} & =\boldsymbol{\theta}_{k, i}-\eta_k \mathbf{r}_{k,i} \frac{\phi\left(\left\|\boldsymbol{\theta}_k^{(i)}\right\|\right)}{\left\|\mathbf{r}_k^{(i)}\right\|+\varepsilon_2\phi\left(\left\|\boldsymbol{\theta}_k^{(i)}\right\|\right)}\\
& =\boldsymbol{\theta}_{k,i}-\eta_k\left(\frac{\mathbf{m}_{k,i}}{\sqrt{\mathbf{v}_{k,i}}+\varepsilon_1} +\lambda_k\boldsymbol{\theta}_{k,i}\right)\frac{\phi\left(\left\|\boldsymbol{\theta}_k^{(i)}\right\|\right)}{\left\|\mathbf{r}_k^{(i)}\right\|+\varepsilon_2\phi\left(\left\|\boldsymbol{\theta}_k^{(i)}\right\|\right)}
\end{align*}

Subtract the scalar $\boldsymbol{\theta}_{, i}^*$ and square both sides of the above update rule, we have,

\begin{align*}
\left(\boldsymbol{\theta}_{k+1, i}-\boldsymbol{\theta}_{, i}^*\right)^2&=\left(\boldsymbol{\theta}_{k, i}-\boldsymbol{\theta}_{, i}^*\right)^2+\eta_k^2\left(\frac{\mathbf{r}_{k,i}\phi\left(\left\|\boldsymbol{\theta}_k^{(i)}\right\|\right)}{\left\|\mathbf{r}_k^{(i)}\right\|+\varepsilon_2\phi\left(\left\|\boldsymbol{\theta}_k^{(i)}\right\|\right)}\right)^2\\
&-2\left(\boldsymbol{\theta}_{k, i}-\boldsymbol{\theta}_{, i}^*\right)\eta_k\left(\frac{\mathbf{m}_{k,i}}{\sqrt{\mathbf{v}_{k,i}}+\varepsilon_1} +\lambda_k\boldsymbol{\theta}_{k,i}\right)\frac{\phi\left(\left\|\boldsymbol{\theta}_k^{(i)}\right\|\right)}{\left\|\mathbf{r}_k^{(i)}\right\|+\varepsilon_2\phi\left(\left\|\boldsymbol{\theta}_k^{(i)}\right\|\right)} \\
\end{align*}
Rearrange the above equation, we have
\begin{align*}
\mathbf{m}_{k, i}\left(\boldsymbol{\theta}_{k, i}-\boldsymbol{\theta}_{, i}^*\right)= & \frac{\sqrt{\mathbf{v}_{k,i}}+\varepsilon
_1}{2 \eta_k}\frac{\left\|\mathbf{r}_k^{(i)}\right\|+\varepsilon_2\phi\left(\left\|\boldsymbol{\theta}_k^{(i)}\right\|\right)}{\phi\left(\left\|\boldsymbol{\theta}_k^{(i)}\right\|\right)}\left(\left(\boldsymbol{\theta}_{k, i}-\boldsymbol{\theta}_{, k}^*\right)^2-\left(\boldsymbol{\theta}_{k+1, i}-\boldsymbol{\theta}_{, i}^*\right)^2\right) \\
&+\lambda_{k}\left(\sqrt{\mathbf{v}_{k,i}}+\varepsilon_1 \right)\left(\boldsymbol{\theta}_{, i}^*-\boldsymbol{\theta}_{k, i}\right)\boldsymbol{\theta}_{k, i}\\
&+\eta_k\frac{\sqrt{\mathbf{v}_{k,i}}+\varepsilon_1}{2}\mathbf{r}_{k,i}\frac{\mathbf{r}_{k,i}\phi\left(\left\|\boldsymbol{\theta}_k^{(i)}\right\|\right)}{\left\|\mathbf{r}_k^{(i)}\right\|+\varepsilon_2\phi\left(\left\|\boldsymbol{\theta}_k^{(i)}\right\|\right)}\\
= & \frac{\sqrt{\mathbf{v}_{k,i}}+\varepsilon_1}{2 \eta_k}\frac{\left\|\mathbf{r}_k^{(i)}\right\|+\varepsilon_2\phi\left(\left\|\boldsymbol{\theta}_k^{(i)}\right\|\right)}{\phi\left(\left\|\boldsymbol{\theta}_k^{(i)}\right\|\right)}\left(\left(\boldsymbol{\theta}_{k, i}-\boldsymbol{\theta}_{, k}^*\right)^2-\left(\boldsymbol{\theta}_{k+1, i}-\boldsymbol{\theta}_{, i}^*\right)^2\right) \\
&+\lambda_{k}\left(\sqrt{\mathbf{v}_{k,i}}+\varepsilon_1\right)\left(\boldsymbol{\theta}_{, i}^*-\boldsymbol{\theta}_{k, i}\right)\boldsymbol{\theta}_{k, i}\\
&+\frac{\eta_k}{2}\left(\mathbf{m}_{k,i}+\lambda_k\tilde{\boldsymbol{\theta}}_{k,i}\right)\frac{\mathbf{r}_{k,i}\phi\left(\left\|\boldsymbol{\theta}_k^{(i)}\right\|\right)}{\left\|\mathbf{r}_k^{(i)}\right\|+\varepsilon_2\phi\left(\left\|\boldsymbol{\theta}_k^{(i)}\right\|\right)}\\
\end{align*}
Obtain the regret upper bound via summing the above equation across $k \in [T]$ and all the dimensions for $i \in [d]$, and then using \cref{ratep}, we have the following
inequality:  
\begin{align*}
&\sum_{k=1}^T\left\langle\boldsymbol{\theta}^*-\boldsymbol{\theta}_{k}, \mathbf{m}_{k}\right\rangle\leq  \sum_{i=1}^d \frac{G_{\infty}}{2 \eta}\frac{\left\|r_1^{(i)}\right\|+\varepsilon_2\phi\left(\left\|\boldsymbol{\theta}_1^{(i)}\right\|\right)}{\phi\left(\left\|\boldsymbol{\theta}_1^{(i)}\right\|\right)}\left(\boldsymbol{\theta}_{1, i}-\boldsymbol{\theta}_{, i}^*\right)^2 \\
&+\sum_{i=1}^d \sum_{k=2}^T \frac{\left(\boldsymbol{\theta}_{k, i}-\boldsymbol{\theta}_{, i}^*\right)^2}{2}\left(\frac{\left(\sqrt{\mathbf{v}_{k,i}}+\varepsilon_1\right)\left(\left\|\mathbf{r}_k^{(i)}\right\|+\varepsilon_2\phi\left(\left\|\boldsymbol{\theta}_k^{(i)}\right\|\right)\right)}{\eta_k\phi\left(\left\|\boldsymbol{\theta}_k^{(i)}\right\|\right)}-\frac{\left(\sqrt{\widehat{v}_{k-1, i}}+\varepsilon_1\right)\left(\left\|\mathbf{r}_{k-1}^{(i)}\right\|+\varepsilon_2\phi\left(\left\|\boldsymbol{\theta}_{k-1}^{(i)}\right\|\right)\right)}{\eta_{k-1}\phi\left(\left\|\boldsymbol{\theta}_{k-1}^{(i)}\right\|\right)}\right) \\
&+\sum_{k=1}^T\lambda_{k}\left\langle\boldsymbol{\theta}^*-\boldsymbol{\theta}_{k}, \boldsymbol{\theta}_{k}\left(\sqrt{\mathbf{v}_{k}}+\varepsilon_1\right)\right\rangle+\sum_{k=1}^T\left(\frac{\eta_kG_{\infty}\sqrt{hd}D_{\infty}}{2}+\frac{\lambda_k\eta_kD\sqrt{h}G_{\infty}D_{\infty}}{2}\right)
\end{align*}

Using telescoping sum, Cauchy-Schwarz inequality, with some negative end terms omitted, considering \cref{r/the}, we have:
\begin{align*}
\sum_{k=1}^T\left\langle\boldsymbol{\theta}^*-\boldsymbol{\theta}_{k}, \mathbf{m}_{k}\right\rangle \leq & \frac{\left(1-\beta_{2}\right)D^2G_{\infty}}{ \eta\varepsilon_3}\sqrt{\frac{d\beta_3}{\left(1-\beta_3\right)\left(\beta_3-\beta_2^2\right)}}+\frac{\left(1-\beta_{2}\right)dD_{\infty}^2\sqrt{T}G_{\infty}}{ \eta\varepsilon_3}\sqrt{\frac{d\beta_3}{\left(1-\beta_3\right)\left(\beta_3-\beta_2^2\right)}}\\
&+\lambda\sqrt{T}D^2G_{\infty}+\eta\sqrt{T}\sqrt{hd}D_{\infty}G_{\infty}+\lambda\eta\left(1+\log{T}\right)D\sqrt{h}D_{\infty}G_{\infty}\\
\sim& \frac{\left(1-\beta_{2}\right)dD_{\infty}^2\sqrt{T}G_{\infty}}{ \eta\varepsilon_3}\sqrt{\frac{d\beta_3}{\left(1-\beta_3\right)\left(\beta_3-\beta_2^2\right)}}\\
&+\lambda\sqrt{T}D^2G_{\infty}+\eta\sqrt{Thd}D_{\infty}G_{\infty}+\lambda\eta\left(1+\log{T}\right)D\sqrt{h}D_{\infty}G_{\infty}
\end{align*}
\end{proof}
\begin{lemma}[Bound for $\sum_{k=1}^T\left\langle\boldsymbol{\theta}_{k-1}-\boldsymbol{\theta}_{k}, \mathbf{m}_{k-1}\right\rangle$]\label{msthe}
\begin{equation*}
\sum_{k=1}^T\left\langle\boldsymbol{\theta}_{k-1}-\boldsymbol{\theta}_{k}, \mathbf{m}_{k-1}\right\rangle\leq\eta \sqrt{Thd}D_{\infty}G_{\infty}
\end{equation*}
\end{lemma}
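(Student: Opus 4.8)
The plan is to bound each summand $\langle\boldsymbol{\theta}_{k-1}-\boldsymbol{\theta}_k,\mathbf{m}_{k-1}\rangle$ by Cauchy--Schwarz and then control the two resulting factors using the lemmas already established. First I would identify the displacement $\boldsymbol{\theta}_{k-1}-\boldsymbol{\theta}_k$ with an actual optimizer step. Reading the per-layer update $\boldsymbol{\theta}^{(i)}_k-\boldsymbol{\theta}^{(i)}_{k+1}=\eta_k\mathbf{r}^{(i)}_k\phi(\|\boldsymbol{\theta}^{(i)}_k\|)/(\|\mathbf{r}^{(i)}_k\|+\varepsilon_2\phi(\|\boldsymbol{\theta}^{(i)}_k\|))$ at index $k-1$ shows that $\boldsymbol{\theta}_{k-1}-\boldsymbol{\theta}_k$ is exactly $\eta_{k-1}$ times the normalized, layerwise-regularized direction whose norm is bounded in \cref{ratep}.

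Second, applying \cref{ratep} gives $\|\boldsymbol{\theta}_{k-1}-\boldsymbol{\theta}_k\|\leq\eta_{k-1}\sqrt{h}\,D_{\infty}$. For the momentum factor, \cref{norm} gives $\|\mathbf{m}_{k-1}\|_{\infty}\leq G_{\infty}$, and converting this $\ell_\infty$ bound to an $\ell_2$ bound over the $d$ coordinates yields $\|\mathbf{m}_{k-1}\|\leq\sqrt{d}\,G_{\infty}$. Cauchy--Schwarz then produces, for each $k$, the per-step estimate $\langle\boldsymbol{\theta}_{k-1}-\boldsymbol{\theta}_k,\mathbf{m}_{k-1}\rangle\leq\eta_{k-1}\sqrt{hd}\,D_{\infty}G_{\infty}$.

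Finally I would sum over $k=1,\dots,T$ and invoke the schedule $\eta_k=\eta/\sqrt{k}$. The $k=1$ term vanishes because $\mathbf{m}_0=0$ by initialization, so the sum effectively starts at $k=2$ and the learning-rate factor is $\sum_{k=2}^T\eta_{k-1}=\eta\sum_{j=1}^{T-1}j^{-1/2}\leq 2\eta\sqrt{T}$; absorbing the constant (as is done for the $\sim$ estimates elsewhere in this appendix) gives the claimed $\eta\sqrt{Thd}\,D_{\infty}G_{\infty}$.

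The argument is essentially routine, since the analytic content has already been isolated into \cref{ratep} and \cref{norm}; the only points requiring care are the $\ell_\infty\!\to\!\ell_2$ conversion that honestly introduces the $\sqrt{d}$ factor, and the boundary book-keeping at $k=1$, where $\boldsymbol{\theta}_0$ and $\mathbf{m}_0$ (both zero at initialization) must be handled so that the $\eta_{k-1}$ sum is well defined. There is no genuine obstacle beyond this dimensional and index bookkeeping.
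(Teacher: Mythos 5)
Your proposal is correct and follows essentially the same route as the paper's proof: identify $\boldsymbol{\theta}_{k-1}-\boldsymbol{\theta}_k$ with the update step, apply Cauchy--Schwarz, bound the step direction via \cref{ratep} and $\|\mathbf{m}_{k-1}\|\leq\sqrt{d}\,G_{\infty}$ via \cref{norm}, then sum the learning rates. If anything, you are more careful than the paper on the two loose points --- the $k=1$ term (handled via $\mathbf{m}_0=\mathbf{0}$) and the factor of $2$ from $\sum_k k^{-1/2}\leq 2\sqrt{T}$, which the paper silently absorbs.
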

\begin{proof}
Using \cref{ratep}, we have
\begin{align*}
\sum_{k=1}^T\left\langle\boldsymbol{\theta}_{k-1}-\boldsymbol{\theta}_{k}, \mathbf{m}_{k-1}\right\rangle&\leq\sum_{k=1}^T\left\langle  \frac{\eta_{k-1} \mathbf{r}_{k-1}\phi\left(\left\|\boldsymbol{\theta}_{k-1}^{(\cdot)}\right\|\right)}{\left\|\mathbf{r}_{k-1}^{(\cdot)}\right\|+\varepsilon_2\phi\left(\left\|\boldsymbol{\theta}_{k-1}^{(\cdot)}\right\|\right)}, \mathbf{m}_{k-1}\right\rangle\\
&\leq\sum_{k=1}^T\eta_{k-1} \left\|\frac{ \mathbf{r}_{k-1}\phi\left(\left\|\boldsymbol{\theta}_{k-1}^{(\cdot)}\right\|\right)}{\left\|\mathbf{r}_{k-1}^{(\cdot)}\right\|+\varepsilon_2\phi\left(\left\|\boldsymbol{\theta}_{k-1}^{(\cdot)}\right\|\right)}\right\| \left\|\mathbf{m}_{k-1}\right\|\\
&\leq\eta \sqrt{Thd}D_{\infty}G_{\infty}\\
\end{align*}  
\end{proof}

\begin{theorem}\label{cv}
Suppose \cref{bg,bp,mono,convex} hold. If $\frac{\beta_2^2}{\beta_3}<1$, $\eta_k=\frac{\eta}{\sqrt{k}}$, $\alpha_k\leq\frac{\alpha}{\sqrt{k}}$, $\lambda_k\leq\frac{\lambda}{\sqrt{k}}$, \cref{algcv} achieves the following guarantee, for all $T \geq 1$.
\begin{align*}
R(T) &\leq \frac{dD_{\infty}^2\sqrt{T}G_{\infty}}{ \eta\varepsilon_3}\sqrt{\frac{d\beta_3}{\left(1-\beta_3\right)\left(\beta_3-\beta_2^2\right)}}+\alpha\sqrt{T} DG_{\infty}\sqrt{d}+\frac{\lambda\sqrt{T}D^2G_{\infty}}{\left(1-\beta_{2}\right)}\\
&+\frac{\eta\sqrt{Tdh}G_{\infty}D_{\infty}}{\left(1-\beta_{2}\right)}+\frac{\lambda\eta\left(1+\log{T}\right)D\sqrt{h}D_{\infty}G_{\infty}}{\left(1-\beta_{2}\right)}
\end{align*}
\end{theorem}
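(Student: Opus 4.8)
The plan is to reduce the regret to a momentum-weighted sum and then invoke the two workhorse lemmas. First, since each $\ell_k$ is convex (\cref{convex}) and $\mathbf{g}_k$ is a (sub)gradient of $\ell_k$ at $\boldsymbol{\theta}_k$, I would bound each summand by its linearization, $\ell_k(\boldsymbol{\theta}_k)-\ell_k(\boldsymbol{\theta}^*)\le\langle\mathbf{g}_k,\boldsymbol{\theta}_k-\boldsymbol{\theta}^*\rangle$, so that $R(T)\le\sum_{k=1}^T\langle\mathbf{g}_k,\boldsymbol{\theta}_k-\boldsymbol{\theta}^*\rangle$. The difficulty is that \cref{algcv} never steps along $\mathbf{g}_k$ directly but along the EMA $\mathbf{m}_k$, so the next move is to invert the momentum recursion: from $\mathbf{m}_k=\beta_2\mathbf{m}_{k-1}+(1-\beta_2)(\mathbf{g}_k+\alpha_k\mathbf{a}_k)$ one gets $\mathbf{g}_k=\frac{1}{1-\beta_2}(\mathbf{m}_k-\beta_2\mathbf{m}_{k-1})-\alpha_k\mathbf{a}_k$, which splits the regret into a momentum part and an acceleration part.

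For the momentum part I would run the standard index-shift telescoping: inserting $\boldsymbol{\theta}_k-\boldsymbol{\theta}^*=(\boldsymbol{\theta}_{k-1}-\boldsymbol{\theta}^*)+(\boldsymbol{\theta}_k-\boldsymbol{\theta}_{k-1})$ into the $\beta_2\mathbf{m}_{k-1}$ sum and using $\mathbf{m}_0=\mathbf{0}$, the quantity $\frac{1}{1-\beta_2}\sum_k\langle\mathbf{m}_k-\beta_2\mathbf{m}_{k-1},\boldsymbol{\theta}_k-\boldsymbol{\theta}^*\rangle$ collapses, up to boundary terms that are $O(1)$ in $T$, into $\sum_k\langle\mathbf{m}_k,\boldsymbol{\theta}_k-\boldsymbol{\theta}^*\rangle$ minus the parameter-drift cross-term $\frac{\beta_2}{1-\beta_2}\sum_k\langle\mathbf{m}_{k-1},\boldsymbol{\theta}_k-\boldsymbol{\theta}_{k-1}\rangle$. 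The first sum is precisely what \cref{mothe} controls and the cross-term is precisely \cref{msthe}; carrying the overall $\frac{1}{1-\beta_2}$ through their bounds reproduces the first, third, fourth, and fifth terms of the claimed estimate. For the acceleration part I would bound $\left|\sum_k\alpha_k\langle\mathbf{a}_k,\boldsymbol{\theta}_k-\boldsymbol{\theta}^*\rangle\right|\le\sum_k|\alpha_k|\,\|\mathbf{a}_k\|\,\|\boldsymbol{\theta}_k-\boldsymbol{\theta}^*\|$, then use \cref{bg} to get $\|\mathbf{a}_k\|\le\sqrt{d}\,G_\infty$ (the $\ell_\infty$ estimate analogous to \cref{norm}), \cref{bp} to get $\|\boldsymbol{\theta}_k-\boldsymbol{\theta}^*\|\le 2D$, and $\alpha_k\le\alpha/\sqrt{k}$ with $\sum_{k\le T}k^{-1/2}=O(\sqrt{T})$ to recover the $\alpha\sqrt{T}DG_\infty\sqrt{d}$ term.

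The real work lives inside \cref{mothe}, and that is where I expect the main obstacle, so I would make sure that lemma is airtight before assembling. After subtracting $\boldsymbol{\theta}^*$ and squaring the per-coordinate update, one isolates $\mathbf{m}_{k,i}(\boldsymbol{\theta}_{k,i}-\boldsymbol{\theta}^*_{,i})$ and sums over $k$ and $i$. The telescoping of the time-varying coefficients $\frac{(\sqrt{\mathbf{v}_{k,i}}+\varepsilon_1)(\|\mathbf{r}_k^{(i)}\|+\varepsilon_2\phi(\|\boldsymbol{\theta}_k^{(i)}\|))}{\eta_k\phi(\|\boldsymbol{\theta}_k^{(i)}\|)}$ only collapses to a single $O(\sqrt{T})$ boundary term because \cref{mono} forces these coefficients to be nondecreasing in $k$; without monotonicity the coefficient differences could change sign and the sum would be uncontrollable. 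The remaining ingredients are routine: \cref{ratep} bounds the normalized step $\left\|\mathbf{r}_{k,i}\phi(\|\boldsymbol{\theta}_k^{(i)}\|)/(\|\mathbf{r}_k^{(i)}\|+\varepsilon_2\phi(\|\boldsymbol{\theta}_k^{(i)}\|))\right\|\le\sqrt{h}D_\infty$, \cref{r/the} (valid since $\beta_2^2/\beta_3<1$) bounds the ratio $\frac{\|\mathbf{r}_k^{(i)}\|+\varepsilon_2\phi}{\phi}$ by $\frac{1-\beta_2}{\varepsilon_3}\sqrt{\frac{d\beta_3}{(1-\beta_3)(\beta_3-\beta_2^2)}}$, and the schedules $\eta_k=\eta/\sqrt{k}$, $\lambda_k\le\lambda/\sqrt{k}$ generate the $\sqrt{T}$ and $\lambda\eta(1+\log T)$ factors through $\sum_{k\le T}k^{-1/2}=O(\sqrt{T})$ and $\sum_k\eta_k\lambda_k=O(\log T)$. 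Finally I would collect the five contributions, verify that every sub-leading boundary term is $o(\sqrt{T})$ and hence absorbed, and land on the stated $R(T)\le\mathcal{O}(\sqrt{T})$ guarantee.
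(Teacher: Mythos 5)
Your proposal is correct and is essentially the paper's own argument, lightly reorganized. The paper reaches the same decomposition not by inverting the momentum recursion, but by expanding the per-coordinate update (which contains $\beta_2\mathbf{m}_{k-1,i}+\left(1-\beta_2\right)\left(\mathbf{g}_{k,i}+\alpha_k\mathbf{a}_{k,i}\right)+\lambda_k\boldsymbol{\theta}_{k,i}$), subtracting $\boldsymbol{\theta}_{,i}^*$, squaring, and solving for $\mathbf{g}_{k,i}\left(\boldsymbol{\theta}_{k,i}-\boldsymbol{\theta}_{,i}^*\right)$; it then splits the resulting $\mathbf{m}_{k-1}$ cross-term with exactly the index shift you use and invokes \cref{mothe} and \cref{msthe}, with the acceleration and weight-decay terms handled by Cauchy--Schwarz as in your plan. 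The only real difference is that your route runs the squared-distance telescoping once (inside \cref{mothe}) instead of twice (once in the theorem for the $\mathbf{g}_k$ part, once inside \cref{mothe}), which is a cleaner bookkeeping and gives marginally smaller constants; both routes land on the same five terms.

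One point you should make explicit when writing this up: your reduction needs an upper bound on $\sum_{k}\left\langle\mathbf{m}_k,\boldsymbol{\theta}_k-\boldsymbol{\theta}^*\right\rangle$, whereas \cref{mothe} as stated bounds $\sum_{k}\left\langle\boldsymbol{\theta}^*-\boldsymbol{\theta}_{k},\mathbf{m}_{k}\right\rangle$, the negative of your quantity, and an upper bound on one is not an upper bound on the other. This turns out to be harmless: the paper's own proof of \cref{mothe} actually derives the identity for $\mathbf{m}_{k,i}\left(\boldsymbol{\theta}_{k,i}-\boldsymbol{\theta}_{,i}^*\right)$, i.e., in your orientation, and flips the sign only when stating the conclusion; moreover the bound holds in both orientations, since the weight-decay and quadratic-step contributions are bounded in absolute value, and the monotone-coefficient telescoping (\cref{mono} together with \cref{r/the} and \cref{bp}) yields an $\mathcal{O}\left(\sqrt{T}D_\infty^2\right)$ bound whether one retains the initial boundary term plus nonnegative increments or only the terminal boundary term. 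So restate \cref{mothe} in the orientation you actually use, note that its proof delivers precisely that, and your assembly goes through.
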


\begin{proof}
Using \cref{convex}, we have,
\begin{equation*}
\ell_k\left(\boldsymbol{\theta}_k\right)-\ell_k\left(\theta^*\right) \leq \mathbf{g}_k^T\left(\boldsymbol{\theta}_k-\theta^*\right)=\sum_{i=1}^d \mathbf{g}_{k, i}\left(\boldsymbol{\theta}_{k, i}-\boldsymbol{\theta}_{, i}^*\right)
\end{equation*}

We focus on the $i^{\text {th }}$ dimension of the parameter vector $\boldsymbol{\theta}_k \in \mathbb{R}^d$. Based on the update rules in \cref{algcv}, we have

\begin{align*}
\boldsymbol{\theta}_{k+1, i} & =\boldsymbol{\theta}_{k, i}-\eta_k \mathbf{r}_{k,i} \frac{\phi\left(\left\|\boldsymbol{\theta}_k^{(i)}\right\|\right)}{\left\|\mathbf{r}_k^{(i)}\right\|+\varepsilon_2\phi\left(\left\|\boldsymbol{\theta}_k^{(i)}\right\|\right)} \\
& =\boldsymbol{\theta}_{k,i}-\eta_k\left(\frac{\beta_{2}}{\sqrt{\mathbf{v}_{k,i}}+\varepsilon_1} \mathbf{m}_{k-1,i}+\frac{\left(1-\beta_{2}\right)}{\sqrt{\mathbf{v}_{k,i}}+\varepsilon_1}\left(\mathbf{g}_{k,i}+\alpha_k\mathbf{a}_{k,i}\right)+\lambda_k\boldsymbol{\theta}_{k,i}\right)\frac{\phi\left(\left\|\boldsymbol{\theta}_k^{(i)}\right\|\right)}{\left\|\mathbf{r}_k^{(i)}\right\|+\varepsilon_2\phi\left(\left\|\boldsymbol{\theta}_k^{(i)}\right\|\right)}.
\end{align*}

Subtract the scalar $\boldsymbol{\theta}_{, i}^*$ and square both sides of the above update rule, we have,

\begin{align*}
\left(\boldsymbol{\theta}_{k+1, i}-\boldsymbol{\theta}_{, i}^*\right)^2&=\left(\boldsymbol{\theta}_{k, i}-\boldsymbol{\theta}_{, i}^*\right)^2+\eta_k^2\left(\frac{\mathbf{r}_{k,i}\phi\left(\left\|\boldsymbol{\theta}_k^{(i)}\right\|\right)}{\left\|\mathbf{r}_k^{(i)}\right\|+\varepsilon_2\phi\left(\left\|\boldsymbol{\theta}_k^{(i)}\right\|\right)}\right)^2\\
&-2\left(\boldsymbol{\theta}_{k, i}-\boldsymbol{\theta}_{, i}^*\right)\eta_k\left(\frac{\beta_{2}}{\sqrt{\mathbf{v}_{k,i}}+\varepsilon_1} \mathbf{m}_{k-1,i}+\frac{\left(1-\beta_{2}\right)}{\sqrt{\mathbf{v}_{k,i}}+\varepsilon_1}\left(\mathbf{g}_{k,i}+\alpha_k\mathbf{a}_{k,i}\right)+\lambda_k\boldsymbol{\theta}_{k,i}\right)\frac{\phi\left(\left\|\boldsymbol{\theta}_k^{(i)}\right\|\right)}{\left\|\mathbf{r}_k^{(i)}\right\|+\varepsilon_2\phi\left(\left\|\boldsymbol{\theta}_k^{(i)}\right\|\right)} \\
\end{align*}

Rearrange the above equation and use Young's inequality, $a b \leq a^2 / 2+b^2 / 2$. Then

\begin{align*}
\mathbf{g}_{k, i}\left(\boldsymbol{\theta}_{k, i}-\boldsymbol{\theta}_{, i}^*\right)= & \frac{\sqrt{\mathbf{v}_{k,i}}+\varepsilon
_1}{2 \eta_k\left(1-\beta_{2}\right)}\frac{\left\|\mathbf{r}_k^{(i)}\right\|+\varepsilon_2\phi\left(\left\|\boldsymbol{\theta}_k^{(i)}\right\|\right)}{\phi\left(\left\|\boldsymbol{\theta}_k^{(i)}\right\|\right)}\left(\left(\boldsymbol{\theta}_{k, i}-\boldsymbol{\theta}_{, k}^*\right)^2-\left(\boldsymbol{\theta}_{k+1, i}-\boldsymbol{\theta}_{, i}^*\right)^2\right) \\
& +\frac{\beta_{2}}{\left(1-\beta_{2}\right)} \left(\boldsymbol{\theta}_{, i}^*-\boldsymbol{\theta}_{k, i}\right)\mathbf{m}_{k-1, i}+\left(\boldsymbol{\theta}_{, i}^*-\boldsymbol{\theta}_{k, i}\right)\alpha_k\mathbf{a}_{k, i}+\lambda_{k}\frac{\sqrt{\mathbf{v}_{k,i}}+\varepsilon_1}{\left(1-\beta_{2}\right)} \left(\boldsymbol{\theta}_{, i}^*-\boldsymbol{\theta}_{k, i}\right)\boldsymbol{\theta}_{k, i}\\
&+\eta_k\frac{\sqrt{\mathbf{v}_{k,i}}+\varepsilon_1}{2\left(1-\beta_{2}\right)}\mathbf{r}_{k,i}\frac{\mathbf{r}_{k,i}\phi\left(\left\|\boldsymbol{\theta}_k^{(i)}\right\|\right)}{\left\|\mathbf{r}_k^{(i)}\right\|+\varepsilon_2\phi\left(\left\|\boldsymbol{\theta}_k^{(i)}\right\|\right)}\\
= & \frac{\sqrt{\mathbf{v}_{k,i}}+\varepsilon_1}{2 \eta_k\left(1-\beta_{2}\right)}\frac{\left\|\mathbf{r}_k^{(i)}\right\|+\varepsilon_2\phi\left(\left\|\boldsymbol{\theta}_k^{(i)}\right\|\right)}{\phi\left(\left\|\boldsymbol{\theta}_k^{(i)}\right\|\right)}\left(\left(\boldsymbol{\theta}_{k, i}-\boldsymbol{\theta}_{, k}^*\right)^2-\left(\boldsymbol{\theta}_{k+1, i}-\boldsymbol{\theta}_{, i}^*\right)^2\right) \\
& +\frac{\beta_{2}}{\left(1-\beta_{2}\right)} \left(\boldsymbol{\theta}_{, i}^*-\boldsymbol{\theta}_{k, i}\right)\mathbf{m}_{k-1, i}+\left(\boldsymbol{\theta}_{, i}^*-\boldsymbol{\theta}_{k, i}\right)\alpha_k\mathbf{a}_{k, i}+\lambda_{k}\frac{\sqrt{\mathbf{v}_{k,i}}+\varepsilon_1}{\left(1-\beta_{2}\right)} \left(\boldsymbol{\theta}_{, i}^*-\boldsymbol{\theta}_{k, i}\right)\boldsymbol{\theta}_{k, i}\\
&+\frac{\eta_k}{2\left(1-\beta_{2}\right)}\left( \mathbf{m}_{k,i}+\lambda_k\tilde{\mathbf{\theta}}_{k,i}\right)\frac{\mathbf{r}_{k,i}\phi\left(\left\|\boldsymbol{\theta}_k^{(i)}\right\|\right)}{\left\|\mathbf{r}_k^{(i)}\right\|+\varepsilon_2\phi\left(\left\|\boldsymbol{\theta}_k^{(i)}\right\|\right)}\\
\end{align*}

Obtaining the regret upper bound via summing the above equation across $k \in 1 \ldots T$ and all the dimensions for $i \in 1, \ldots, d$ and considering \cref{ratep}, we have the following
inequality:  

\begin{align*}
R(T)&\leq  \sum_{i=1}^d \frac{G_{\infty}}{2 \eta\left(1-\beta_{2}\right)}\frac{\left\|\mathbf{r}_1^{(i)}\right\|+\varepsilon_2\phi\left(\left\|\boldsymbol{\theta}_1^{(i)}\right\|\right)}{\phi\left(\left\|\boldsymbol{\theta}_1^{(i)}\right\|\right)}\left(\boldsymbol{\theta}_{1, i}-\boldsymbol{\theta}_{, i}^*\right)^2 \\
&+\sum_{i=1}^d \sum_{k=2}^T \frac{1}{2\left(1-\beta_2\right)}\left(\boldsymbol{\theta}_{t, i}-\boldsymbol{\theta}_{, i}^*\right)^2\\
&\left(\frac{\left(\sqrt{\mathbf{v}_{k,i}}+\varepsilon_1\right)\left(\left\|\mathbf{r}_k^{(i)}\right\|+\varepsilon_2\phi\left(\left\|\boldsymbol{\theta}_k^{(i)}\right\|\right)\right)}{\eta_k\phi\left(\left\|\boldsymbol{\theta}_k^{(i)}\right\|\right)}-\frac{\left(\sqrt{\mathbf{v}_{k-1, i}}+\varepsilon_1\right)\left(\left\|\mathbf{r}_{k-1}^{(i)}\right\|+\varepsilon_2\phi\left(\left\|\boldsymbol{\theta}_{k-1}^{(i)}\right\|\right)\right)}{\eta_{k-1}\phi\left(\left\|\boldsymbol{\theta}_{k-1}^{(i)}\right\|\right)}\right) \\
&+\sum_{k=1}^T \left(\frac{\beta_{2}}{\left(1-\beta_{2}\right)} \left\langle\boldsymbol{\theta}^*-\boldsymbol{\theta}_{k}, \mathbf{m}_{k-1}\right\rangle+\alpha_k\left\langle\boldsymbol{\theta}^*-\boldsymbol{\theta}_{k}, a_{k}\right\rangle+\frac{\lambda_{k}}{\left(1-\beta_{2}\right)} \left\langle\boldsymbol{\theta}^*-\boldsymbol{\theta}_{k}, \boldsymbol{\theta}_{k}\left(\sqrt{\mathbf{v}_{k}}+\varepsilon_1\right)\right\rangle\right)\\
& +\sum_{k=1}^T \left(\frac{\eta_k G_{\infty}\sqrt{hd}D_{\infty}}{2 \left(1-\beta_{2}\right)}+\frac{\lambda_k\eta_kD\sqrt{h}G_{\infty}D_{\infty}}{2\left(1-\beta_{2}\right)}\right)
\end{align*}

Using telescoping sum, Cauchy-Schwarz inequality, with some negative end terms omitted and considering \cref{r/the,mothe,msthe} we have:

\begin{align*}
R(T) \leq & \frac{D^2G_{\infty}}{ \eta\varepsilon_3}\sqrt{\frac{d\beta_3}{\left(1-\beta_3\right)\left(\beta_3-\beta_2^2\right)}}+\frac{dD_{\infty}^2\sqrt{T}G_{\infty}}{ \eta\varepsilon_3}\sqrt{\frac{d\beta_3}{\left(1-\beta_3\right)\left(\beta_3-\beta_2^2\right)}}\\
&+\frac{\beta_{2}}{\left(1-\beta_{2}\right)} \left(\sum_{k=1}^T\left\langle\boldsymbol{\theta}^*-\boldsymbol{\theta}_{k-1}, \mathbf{m}_{k-1}\right\rangle+\sum_{k=1}^T\left\langle\boldsymbol{\theta}_{k-1}-\boldsymbol{\theta}_{k}, \mathbf{m}_{k-1}\right\rangle\right)+\alpha\sqrt{T} DG_{\infty}\sqrt{d}+\frac{\lambda\sqrt{T}D^2G_{\infty}}{\left(1-\beta_{2}\right)} \\
&+\frac{\eta\sqrt{Tdh}G_{\infty}D_{\infty}}{\left(1-\beta_{2}\right)}+\frac{\lambda\eta\left(1+\log{T}\right)D\sqrt{h}D_{\infty}G_{\infty}}{\left(1-\beta_{2}\right)}\\
\sim&\frac{dD_{\infty}^2\sqrt{T}G_{\infty}}{ \eta\varepsilon_3}\sqrt{\frac{d\beta_3}{\left(1-\beta_3\right)\left(\beta_3-\beta_2^2\right)}}+\alpha\sqrt{T} DG_{\infty}\sqrt{d}+\frac{\lambda\sqrt{T}D^2G_{\infty}}{\left(1-\beta_{2}\right)}\\
&+\frac{\eta\sqrt{Tdh}G_{\infty}D_{\infty}}{\left(1-\beta_{2}\right)}+\frac{\lambda\eta\left(1+\log{T}\right)D\sqrt{h}D_{\infty}G_{\infty}}{\left(1-\beta_{2}\right)}
\end{align*}
\end{proof}

\newpage
\section{Experimental Details}\label{expdtl}
\subsection{Supplementary Experiments}\label{supply_exp}

\begin{table}[H]
\centering
\caption{Test top-1 Acc. (\%) on ResNet20 with CIFAR10 and CIFAR100 and on ResNet34 with ImageNet (for hyperparameters and architecture details, see \cref{cv_hyper})}.\label{sbt1acc}
\resizebox{0.7\textwidth}{!}{
\begin{tabular}{lccc}
\hline
Dataset & \multicolumn{1}{c}{CIFAR10} & \multicolumn{1}{c}{CIFAR100}  & \multicolumn{1}{c}{ImageNet} \\
batch size & 128 & 128 & 256 \\
\hline
SGD   &  \textcolor{red}{\textbf{91.85}}  & 64.93 &  \textcolor{red}{\textbf{70.64}} \\
ESGD  &  91.83  & \textbf{65.01} &  70.59 \\
\hline
Adam   & 89.88   & 64.35 &  65.06 \\
EAdam & \textbf{90.12}  & \textbf{65.24}  &  \textbf{65.31} \\
\hline
Lamb   &  90.89  & 61.29 & 69.17  \\
ALTO &  \textbf{91.24}  & \textcolor{red}{\textbf{65.74}} & \textbf{69.95}  \\
\hline
\end{tabular}
}
\end{table}

\subsubsection{Different Optimization Test Functions}\label{Optimization_test_function}

\begin{figure}[H]
\centering
\includegraphics[width=0.9\linewidth]{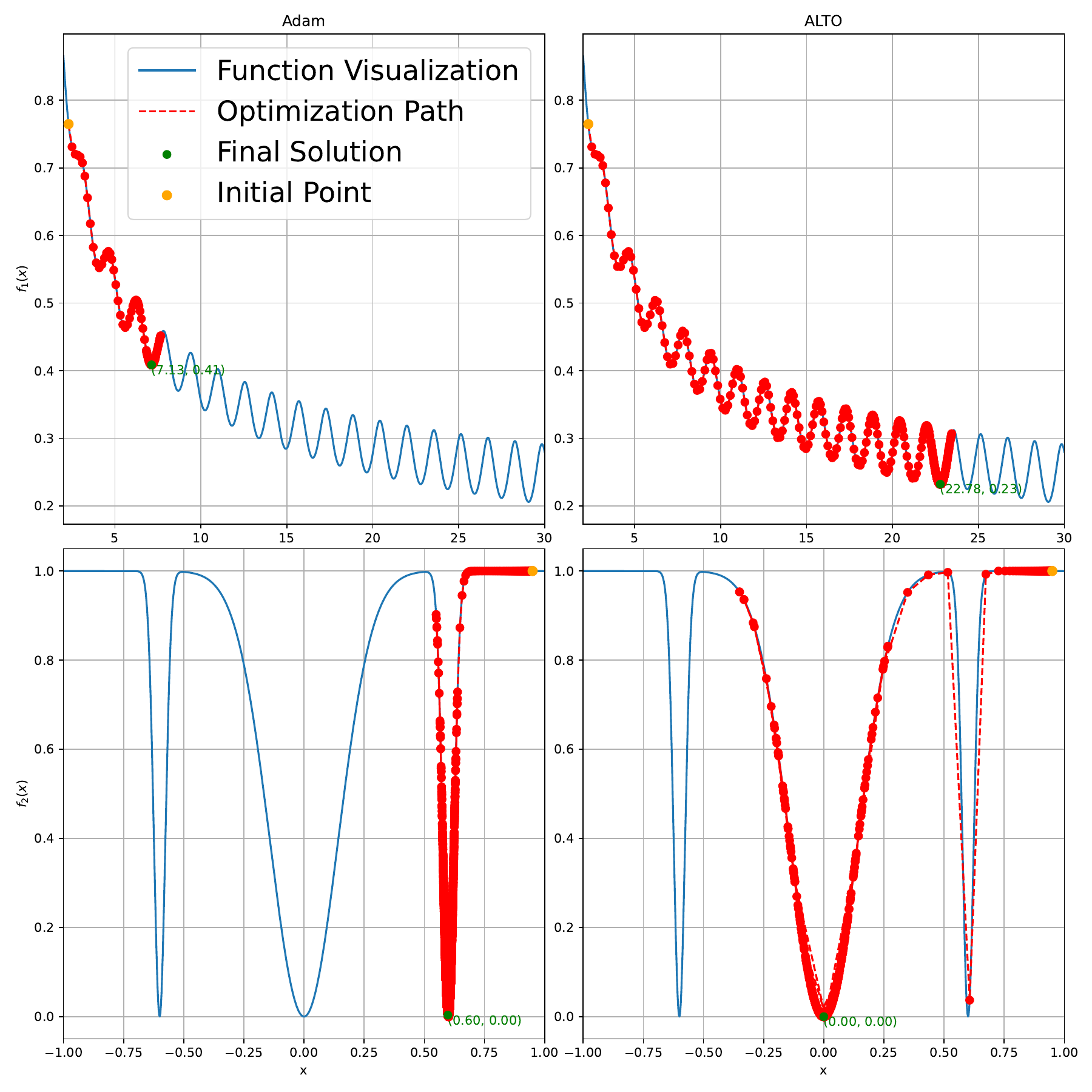}
\caption{Optimization process of Adam ($\beta_1=0.95$) and ALTO ($\beta_1=0.99, \beta_2=0.95,\alpha=-80$) on functions  $f_1(x)=1/(0.05xsin^2(2x)+ln(x+1)), x_0=2.3$ and Adam ($\beta_1=0.9999$) and ALTO ($\beta_1=0.99, \beta_2=0.9,\alpha=-90$) on $f_2(x)=1-e^{-900(x-0.6)^2}-e^{-900(x+0.6)^2}-e^{-25x^2}, x_0=0.95$.}
\label{12d}
\end{figure}  

\newpage

\subsubsection{Performances of Different Optimizers on BERT$_\text{base}$}\label{performance_bert} 
Because most of the parameters of the model have been pre-trained to a relative accurate extent, the advantage of ALTO would be limited but still stable. Even so, it still demonstrates broad adaptability, at least on the datasets CONLL 2003, IMDB, and MRPC . The smaller the dataset is (IMDB (25K), CoNLL2003 (\~14K), and MRPC (3668)) or  the larger the batch size is (32, 1024, 4096), the bigger the advantage of ALTO over other optimizers is. This is maily because ALTO is designed for large-batch training. 
\begin{table}[H]
\centering
\caption{Test result (\%) of fine-tuning BERT$_\text{base}$ model on the CONLL2003, IMDB and MRPC tasks.}\label{nlp_d}
\resizebox{\textwidth}{!}{
\begin{tabular}{c|l|ccc|cccc|cccc|c}
\hline
BtSz&Optimizer & \multicolumn{3}{c|}{CoNLL2003 ($\sim$14K)} & \multicolumn{4}{c|}{IMDB (25K)} & \multicolumn{4}{c|}{MRPC (3668)} &Avg \\
\hline
& & F1 & Prec. & Recall & Acc. & F1 & Prec. & Recall & Acc.  & F1 & Prec. & Recall& F1 and Acc \\
\hline
\multirow{6}{*}{32}&Adam & 86.97 & 85.70 & 87.15 & 91.77 & 91.52 & \textbf{94.43} & 88.78 & 84.46 & 88.68 & 85.99 & 91.54 & 88.68 \\
&AdamW & 89.32 & 88.41 & 90.25 & 92.42 & 92.43 & 92.24 & 92.62 & 84.63 & 88.70 & \textbf{86.75}  & 90.75 &89.50 \\
&AdaBelief & 89.98 & 89.01 & 90.98 & 92.93 & 92.85 & 93.01 & 91.86 & 83.36 & 87.87 & 85.24 & 90.67 &89.39\\
&Lamb & 89.86 & 88.82 & 90.93 & 93.13 & 93.09 & 93.56 & \textbf{92.63} & 84.40 & 88.59 & 86.22  & 91.10 &89.81\\
&ALTO & \textbf{90.29} & \textbf{89.53} & \textbf{91.06} & \textbf{93.24} & \textbf{93.19} & 93.82 & 92.57 & \textbf{84.86} & \textbf{89.01} & 86.07 & \textbf{92.15}  & \textbf{90.11}\\
\hline
\multirow{6}{*}{1024}&Adam & 89.50 & 88.09 & 90.96 & 92.97 & 92.91 & 93.79 & 92.04 & 79.76 & 85.05 & 83.58 & 86.57 & 88.03 \\
&AdamW & 89.46 & 88.17 & 90.78 & 92.98 & 92.91 & 93.92 & 91.92 & 80.86 & 86.32 & 82.24 & 90.84 & 88.50 \\
&AdaBelief & 90.38 & 89.66 & 91.12 & 92.82 & 92.69 & \textbf{94.46} & 90.98 & 79.53 & 84.61 & \textbf{84.58} & 84.65 & 88.00\\
&Lamb & 90.05 & 89.37 & 90.75 & 92.07 & 92.30 & 89.74 & \textbf{95.01} & 80.98 & 86.04 & 84.03 & 88.14 & 88.28\\
&ALTO & \textbf{90.59} & \textbf{89.95} & \textbf{91.24} & \textbf{93.10} & \textbf{93.17} & 92.19 & 94.17 & \textbf{81.39} & \textbf{86.77} & 82.26 & \textbf{91.80} & \textbf{89.00} \\
    
\hline

\end{tabular}
}
\label{tab:my_label}
\end{table}

\subsubsection{Performance Comparison with Related Works.}\label{related_works}
\begin{table}[H]
\centering
\caption{Test top-1 Acc. (\%) on ResNet20 with CIFAR10 and CIFAR100 and on ResNet34 with ImageNet (for hyperparameters and architecture details, see \cref{cv_hyper})}.\label{adan}
\resizebox{\textwidth}{!}{
\begin{tabular}{lcccccc}
\hline
Dataset & \multicolumn{2}{c}{CIFAR10} & \multicolumn{2}{c}{CIFAR100}  & \multicolumn{2}{c}{ImageNet} \\
batch size & 128 & 16384 & 128 & 16384 & 256 & 4086 \\
\hline
Adan   & 90.62   & 76.42  &  62.27 & 47.96 & 69.36 & 67.66\\
ALTO & \textbf{91.24} & \textbf{88.83} & \textbf{65.74} & \textbf{57.78} & \textbf{69.95} & \textbf{70.83}\\
\hline
\end{tabular}
}
\end{table} 

\begin{table}[H]
\centering
\caption{Comparative performance of LION and ALTO in training the GPT-2 (345M parameters) Model with 4096 batch size on Megatron-LM Framework~\cite{mega} by the OpenAI's open-sourced GPT-2 Output Dataset (1M).}
\begin{tabular}{c|c|c}
\hline
Optimizer & Train Loss & Test PPL \\ \hline
LION & 4.66 & 110.54 \\ 
\hline
ALTO & \textbf{4.33} & \textbf{78.37}\\
\hline
\end{tabular}
\label{lion} 
\end{table}

\begin{figure}[!h]
    \centering
    \includegraphics[width=\linewidth]{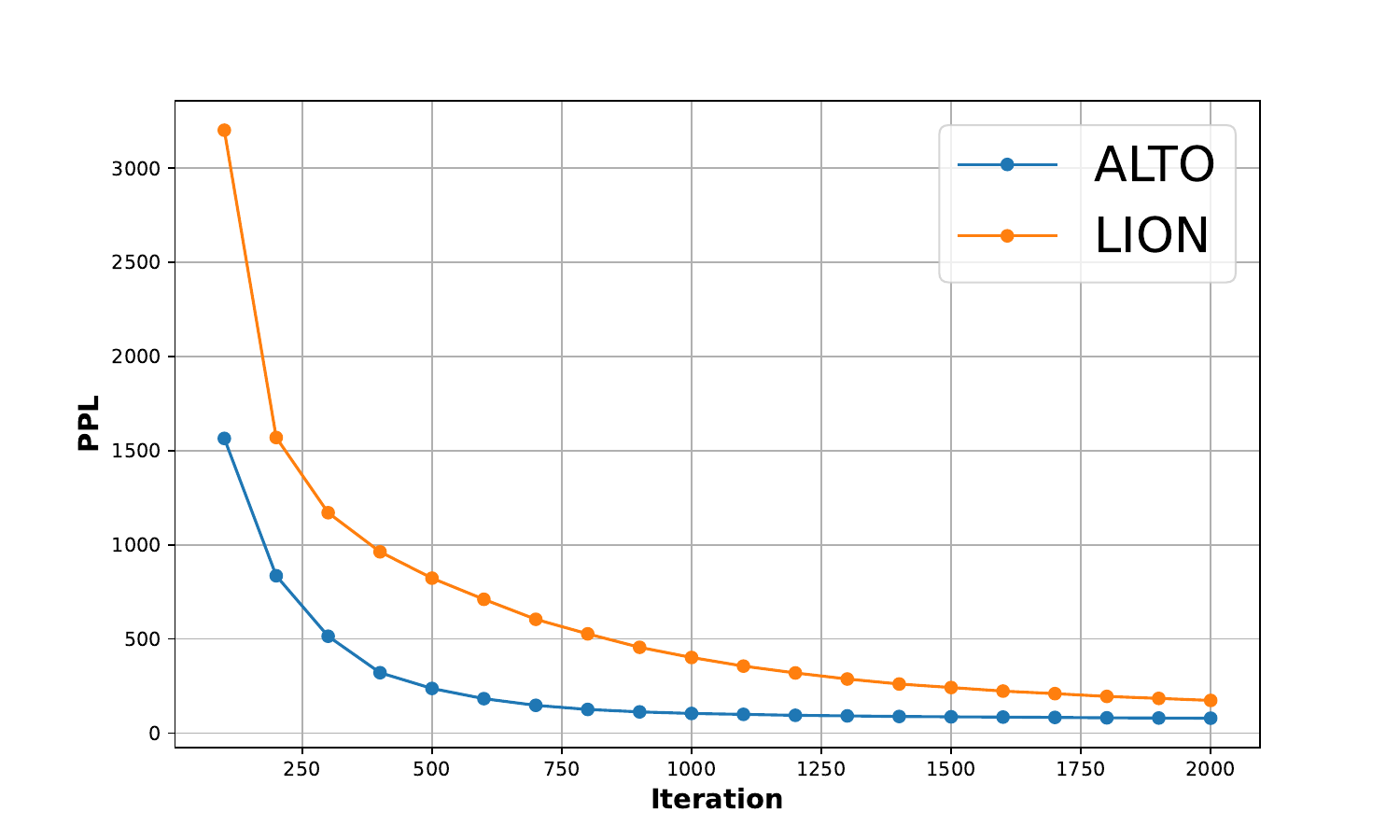}
    \caption{PPL Variation over First 2000 Iterations on 345M GPT with GPT-2-Output-Dataset. In this experiment, the number of iterations required for train\_PPL to converge to 200 was 66\% fewer for ALTO compared to LION.}
    \label{iteraton_ratio}
\end{figure}

\begin{table}[H]
\centering
\caption{Comparison of average elapsed time per iteration for ALTO and LION. LION computes 31.2\% faster than ALTO at each iteration. When the global batch size divided by the micro batch size equals data parallelism, it indicates that each GPU performs only one forward pass and one backward pass per iteration.}
\begin{tabular}{c|c}
\hline
Optimizer & Time (ms) \\ \hline
LION & \textbf{253}  \\ 
\hline
ALTO & 332 \\
\hline
\end{tabular}
\label{lion_elapsed_time} 
\end{table}

\subsubsection{VGG and DenseNet for CIFARs}\label{vgg_dense_cifar_chapter}
In addition to ResNet, we expanded our experiments to include other network architectures such as VGG-16 and DenseNet, particularly focusing on training with large batch sizes on CIFAR10 and CIFAR100 datasets. The network structures for VGG-16 and DenseNet are outlined in \cref{tab:vgg16_structure,tab:densenet169_structure}:

VGG-16 comprises approximately 33.73 million parameters

DenseNet-169 comprises approximately 7.27 million parameters.

\begin{table}[!h]
\centering
\caption{Architecture of VGG-16}
\begin{tabular}{@{}ll@{}}
\toprule
Layer Type & Configuration \\
\midrule
Convolutional & 2x[96], 2x[128], 3x[256], 3x[512], 3x[512] \\
Pooling & 5 MaxPool (2x2, stride 2) \\
Fully Connected & 3 FC (4096, 4096, num\_classes) \\
Batch Normalization & After each Convolution \\
Activation & ReLU \\
Dropout & 0.3 after first two FC layers \\
\bottomrule
\end{tabular}
\label{tab:vgg16_structure}
\end{table}

\begin{table}[!h]
\centering
\caption{Architecture of DenseNet-169}
\begin{tabular}{@{}ll@{}}
\toprule
Layer Type & Configuration \\
\midrule
Basic Convolution & Initial Conv Layer \\
Dense Blocks & 6, 12, 32, 32 Bottlenecks per block \\
Transition Layers & Between Dense Blocks \\
Fully Connected & FC (num\_planes, 256, 128, num\_classes) \\
Growth Rate & 32 \\
Reduction & 0.5 after each Dense Block \\
Batch Normalization & After each Convolution in Bottleneck \\
Activation & ReLU \\
Dropout & 0.4 after FC layers \\
\bottomrule
\end{tabular}
\label{tab:densenet169_structure}
\end{table}

In our experiments, we employed various neural network architectures distinct from those described in the main text, utilizing prominent optimizers such as SGD, Adam, AdamW, AdaBelief, and Lamb for comparison against our optimization algorithm, ALTO2. These experiments were conducted on CIFAR10 and CIFAR100 datasets. The results obtained using VGG-16 and DenseNet-169 are presented in 
\cref{tab:vgg_dense_result}. 

\begin{table}[!ht]
\centering
\caption{Acc. (\%) of VGG and DenseNet on CIFAR10 and CIFAR100 for batch size 16384}
\begin{tabular}{lcccc}
\hline
Dataset & \multicolumn{2}{c}{CIFAR10} & \multicolumn{2}{c}{CIFAR100} \\
Architecture & VGG-16 & DenseNet-169 & VGG-16 & DenseNet-169 \\
\hline
SGD & 86.56 & 65.44  & 55.46 & 41.75 \\
Adam   & 85.01 & 81.89  & 40.44 & 48.41 \\
AdamW  & 83.33 & 81.55  & 37.62 & 48.58 \\
Lamb   & 90.63 & 85.77  & 60.44 & 53.75 \\
AdaBelief  & 87.87 & 76.88  & 59.44 & 43.41 \\
ALTO & \textbf{90.92} & \textbf{86.19}  & \textbf{63.15} & \textbf{54.24} \\
\hline
\end{tabular}
\label{tab:vgg_dense_result}
\end{table}

During the training process, the acc-epoch and loss-acc relationships are depicted in \cref{fig:vgg_dense training_acc with batch size 16384,fig:vgg_dense training_loss with batch size 16384}, respectively. Our optimization algorithm, in these extended experiments, was employed with a training parameter of batch size = 16,384. This approach was taken to explore its optimization effectiveness across various network architectures and to ensure that our optimization algorithm possesses robust model generalizability.

\begin{figure}[!h]
    \centering
    \includegraphics[width=1\linewidth]{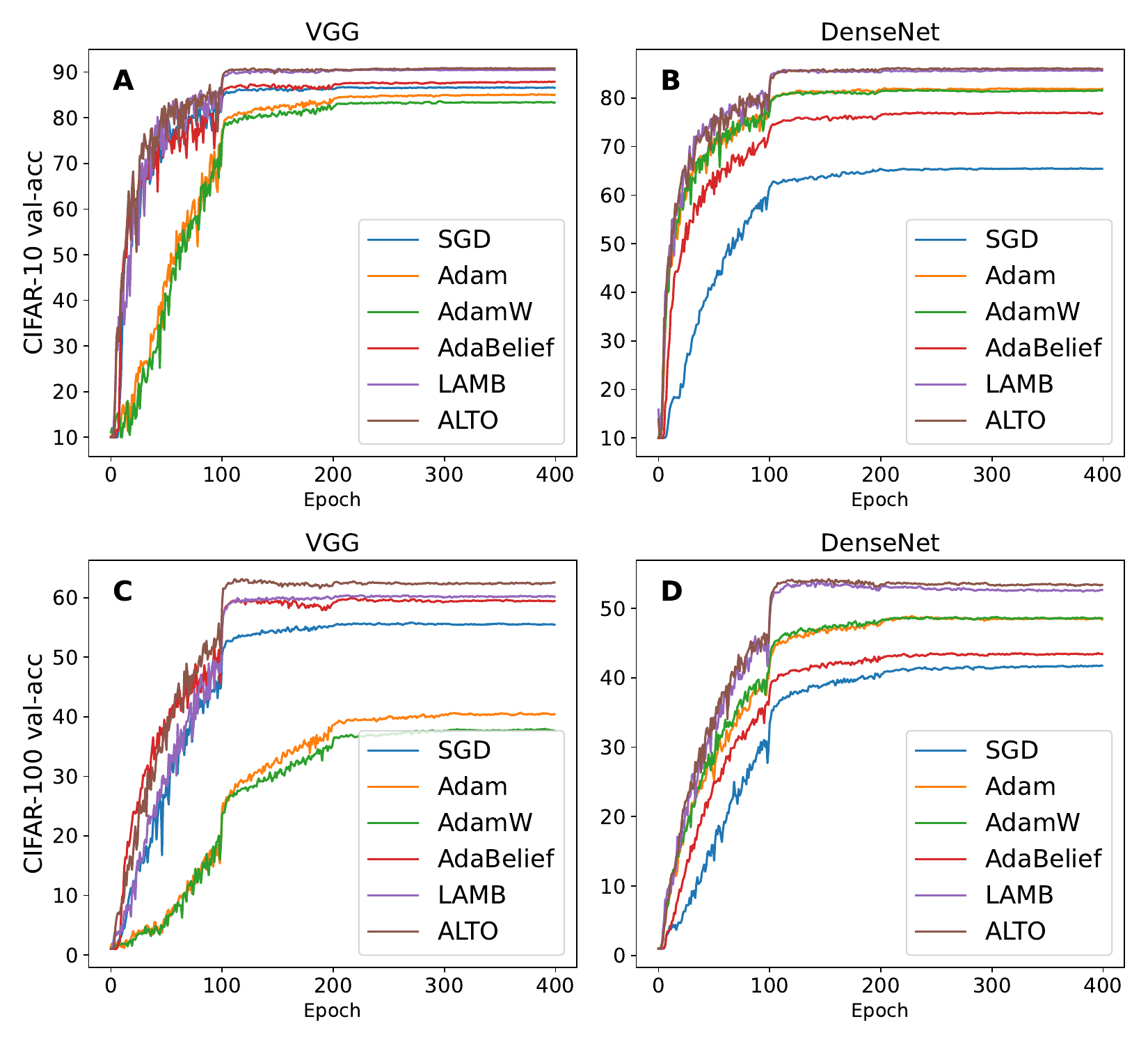}
    \caption{test accuracy with batch size 16384}
    \label{fig:vgg_dense training_acc with batch size 16384}
\end{figure}
\begin{figure}[!h]
    \centering
    \includegraphics[width=1\linewidth]{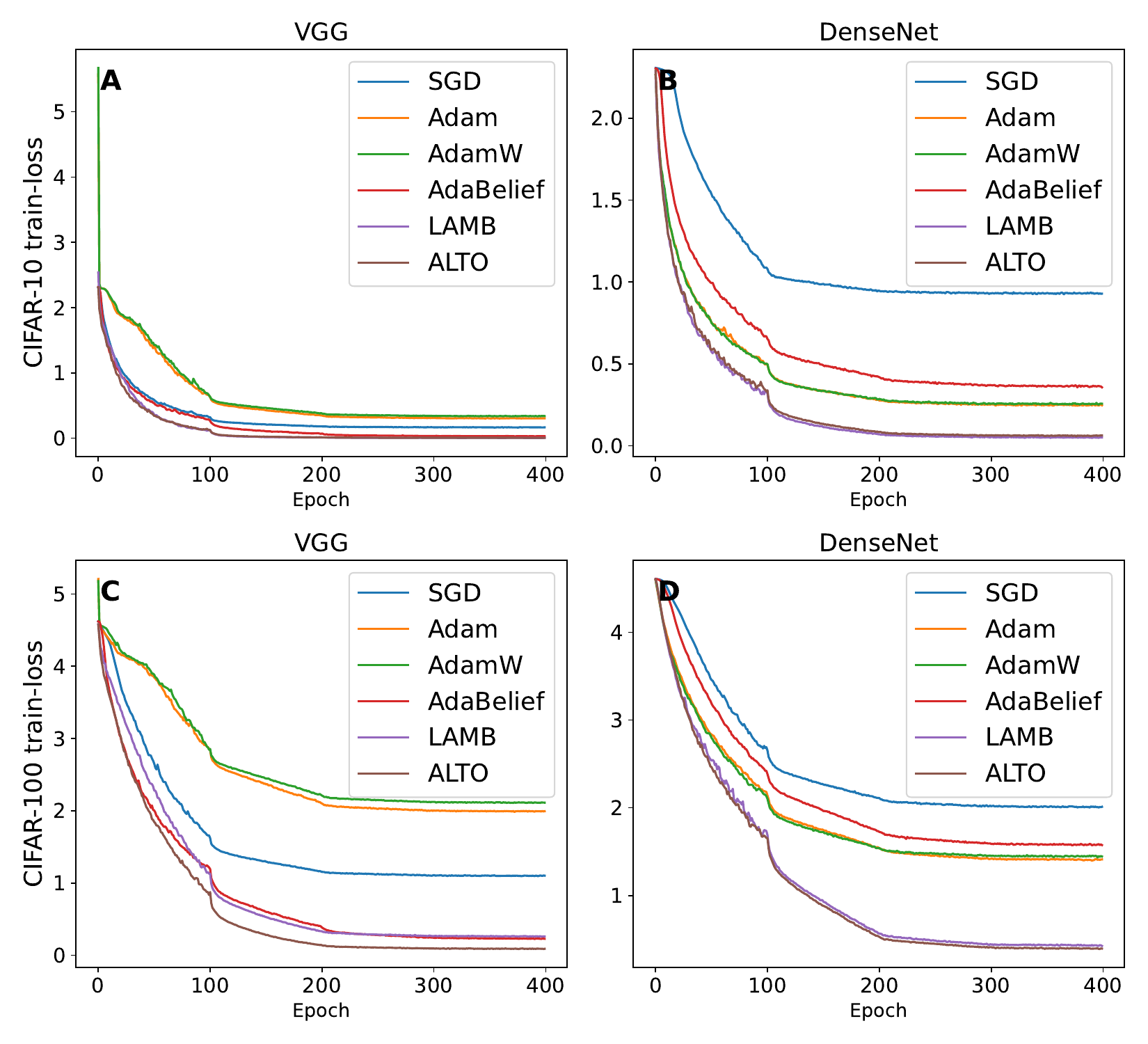}
    \caption{training loss with batch size 16384}
    \label{fig:vgg_dense training_loss with batch size 16384}
\end{figure}

\subsubsection{Comparative Illustration of Training with Batches from Small to Large}\label{epoch_and_accuracy}
We conducted a comparison of training the VGG-16 convolutional neural network using the ALTO and Lamb optimizers on the CIFAR-100 dataset. Our comparison involved batch sizes ranging from 200 to 50,000 (the total number of training samples in the CIFAR-100 dataset). The results of the experiment are presented in the following \cref{tab:scaling batch size alto and lamb,fig:training with batch size 200 to 50K of ALTO and Lamb loss,fig:training with batch size 200 to 50K of ALTO and Lamb acc}.

In this experiment, to more objectively demonstrate the effectiveness of the ALTO optimizer, both $\alpha$ and $\beta$ were held constant across all batch sizes during training, and all other hyperparameters were set to their default values. This experiment provides a clear illustration of the performance of the ALTO optimizer.
\begin{table}[h]
\centering
\caption{Comparison of ALTO and Lamb Optimizers Across Different Batches with Corresponding Learning Rates, Beta Values, Accuracy, and Loss}
\label{tab:scaling batch size alto and lamb}
\begin{tabular}{@{}ccccccccc@{}}
\toprule
\textbf{Batch Size} & \textbf{Learning Rate} & \textbf{$\beta_1$} & \multicolumn{2}{c}{\textbf{ALTO}} & \multicolumn{2}{c}{\textbf{Lamb}} \\ 
\cmidrule(r){4-5} \cmidrule(l){6-7}
 & & & \textbf{Accuracy} & \textbf{Loss} & \textbf{Accuracy} & \textbf{Loss} \\ 
\midrule
200 & 0.001 & 0.999 & 66.9 & 0.0034 & 67.3 & 0.0025 \\
500 & 0.001 & 0.999 & 63.35 & 0.0029 & 63.3 & 0.0034 \\
1K & 0.001 & 0.999 & 59.79 & 0.0046 & 59.82 & 0.0057 \\
2K & 0.01 & 0.999 & 70.43 & 0.0021 & 70.24 & 0.002 \\
5K & 0.01 & 0.999 & 67.55 & 0.0008 & 67.05 & 0.0009 \\
10K & 0.01 & 0.999 & 64.51 & 0.0023 & 62.92 & 0.0046 \\
15K & 0.01 & 0.999 & 64.38 & 0.0065 & 62.64 & 0.0132 \\
25K & 0.01 & 0.999 & 59.19 & 0.2003 & 57.79 & 0.4173 \\
30K & 0.01 & 0.999 & 58.69 & 0.2859 & 58.16 & 0.3718 \\
40K & 0.01 & 0.999 & 58.72 & 0.3535 & 57.11 & 0.5281 \\
50K & 0.01 & 0.999 & 49.69 & 1.4763 & 39.91 & 2.0295 \\
\bottomrule
\end{tabular}
\end{table}

\begin{figure}[!h]
    \centering
    \includegraphics[width=\linewidth]{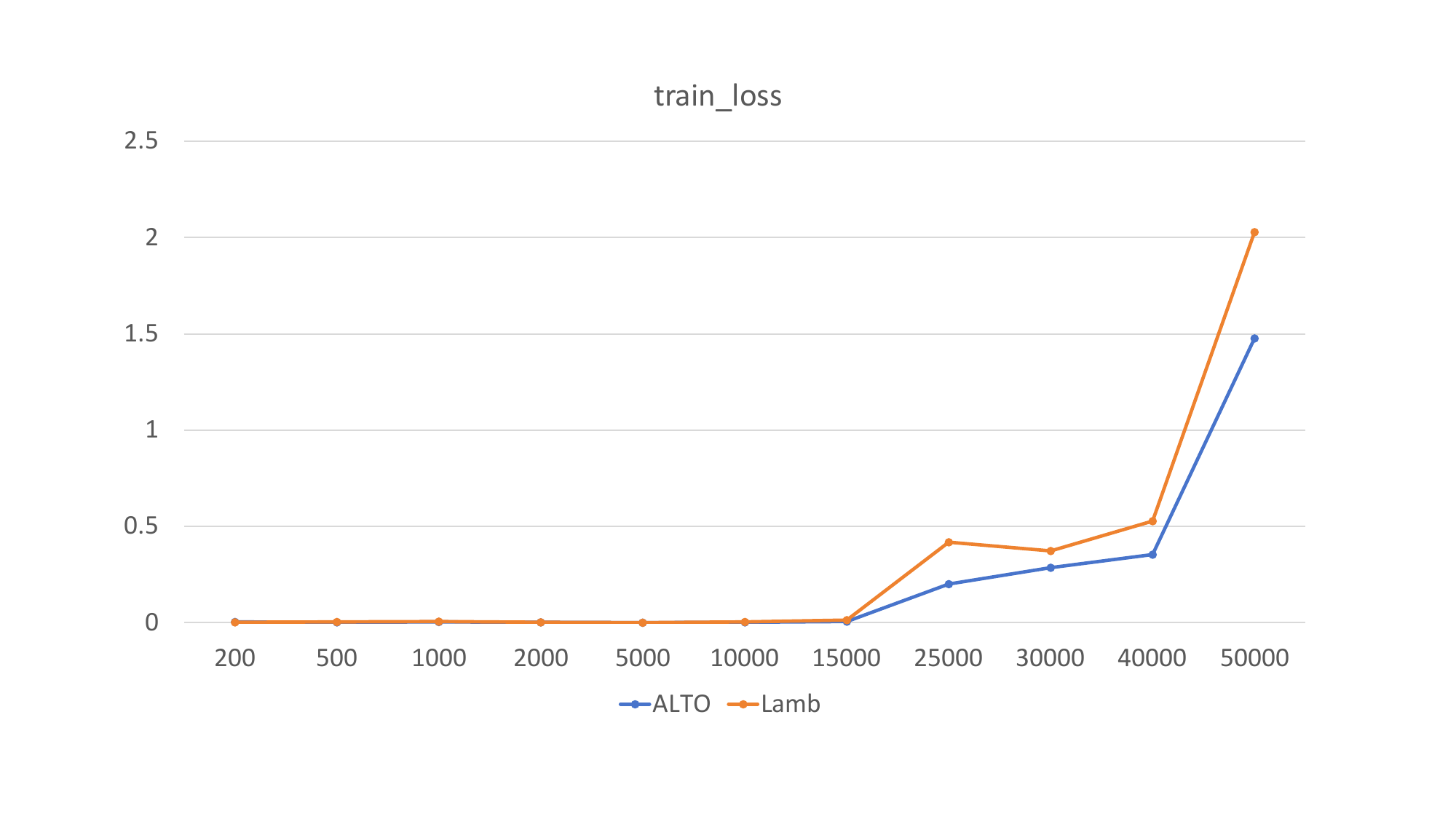}
    \caption{loss of training with batch size 200 to 50K of ALTO and Lamb}
    \label{fig:training with batch size 200 to 50K of ALTO and Lamb loss}
\end{figure}

\begin{figure}[!h]
    \centering
    \includegraphics[width=\linewidth]{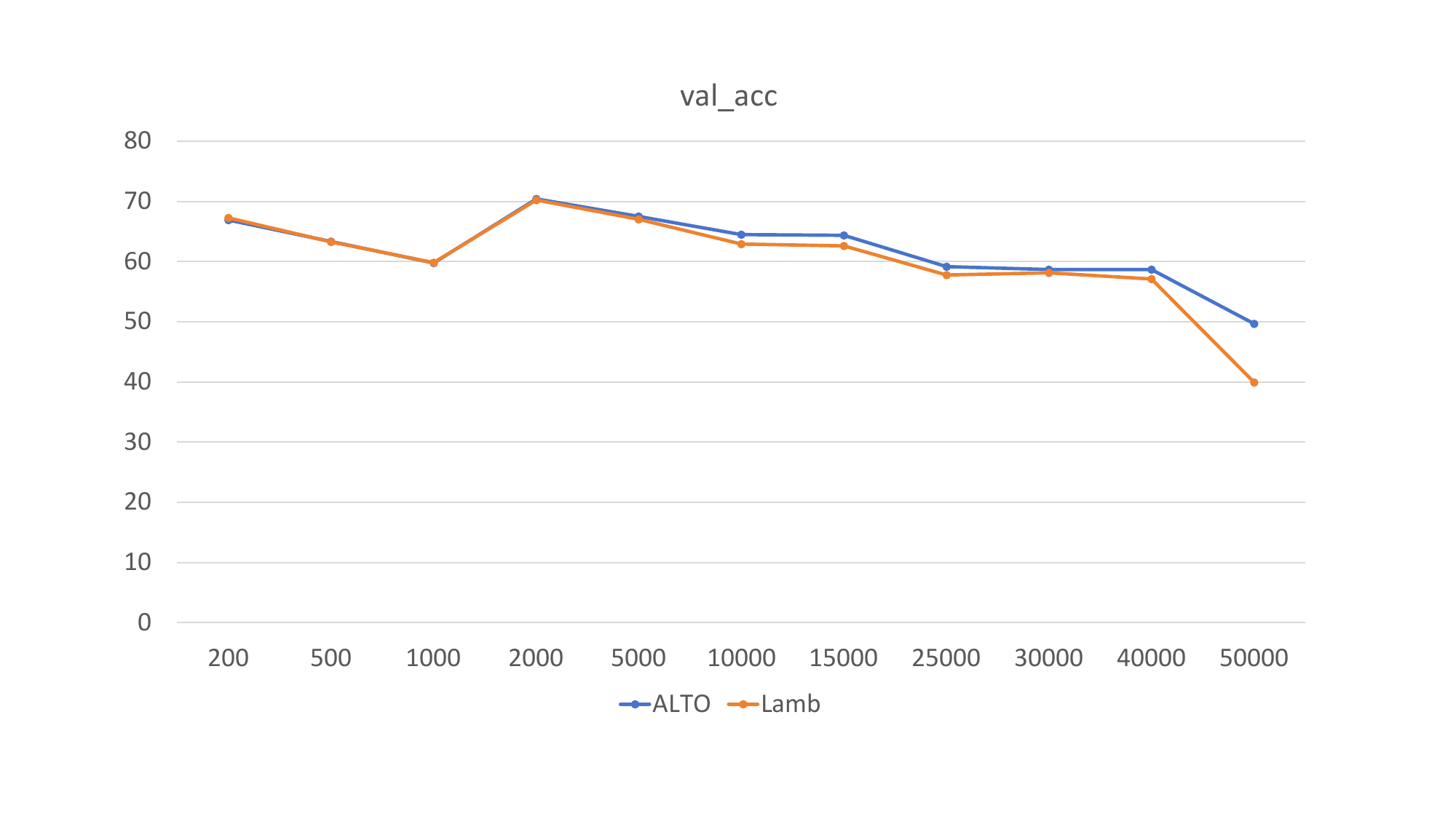}
    \caption{Acc(\%) of training with batch size 200 to 50K of ALTO and Lamb}
    \label{fig:training with batch size 200 to 50K of ALTO and Lamb acc}
\end{figure}

\subsubsection{The Relationship between Epoch and Training Speed, and Comparative Experiments}\label{epoch_and_speed}
To explore the relationship between batch size and training speed, we conducted experiments with the ALTO training setup for VGG-16 on CIFAR-100. The batch size ranged from 200 to the total number of training samples (50K). We calculated the time required for each epoch during stable training. For details, see \cref{tab:epoch training_speed}.

\begin{table}[!h]
\centering
\caption{The relationship between batch size and training speed, showing total time(s) and pure computation time for 1 epoch.}
\label{tab:epoch training_speed}
\begin{tabular}{rrr}
\toprule
 Batch Size &  Total Time &  Computation Time \\
\midrule
        200 &                    14.264 &                              12.327 \\
        500 &                    6.533 &                               5.330 \\
       1000 &                    4.147 &                               3.034 \\
       2000 &                    3.290 &                               1.917 \\
       5000 &                    3.488 &                               1.233 \\
      10000 &                    4.563 &                               1.026 \\
      25000 &                    7.542 &                               0.891 \\
      50000 &                    13.100 &                              0.847 \\
\bottomrule
\end{tabular}
\end{table}

'Computation Time' represents the total time spent on computational tasks during the training of a single epoch. It encompasses the duration of all the calculations performed by the algorithm. In contrast, 'Total Time' not only accounts for the 'Computation Time' but also includes the time overhead caused by the data transfer between the CPU and GPU, as well as the time required for memory allocation (malloc) operations. The latter is subject to the influence of the machine's capabilities and can vary significantly with different hardware configurations. With ongoing advancements in the field of high-performance computing, the time costs associated with these hardware-dependent processes are expected to diminish. Therefore, the 'Computation Time' is of greater interest from an algorithmic perspective, as it more accurately reflects the efficiency of the algorithm itself, independent of the underlying hardware performance

We conducted comparative experiments on VGG-16 using the CIFAR-100 dataset to compare the training time required for ALTO and Lamb to reach the same level of accuracy. In these experiments of batch size 16384, both algorithms maintained a consistent learning rate throughout the training process. For ALTO, the hyperparameters were set with a fixed $\beta_1$ of 0.999 and $\alpha$ of -5. For details, see \cref{tab:compare training_speed}.

\begin{table}[!h]
\centering
\caption{Comparison of the time it takes for ALTO and Lamb to achieve the same accuracy}
\label{tab:compare training_speed}
\resizebox{\linewidth}{!}{
\begin{tabular}{lcccccc}
\toprule
 ACC(\%) & ALTO Total Time (s) & ALTO Computation Time (s) & Lamb Total Time (s) & Lamb Computation Time (s) \\
\midrule
  20 &             137.103 &                     5.348 &             195.992 &                     7.341 \\
  30 &             202.674 &                     7.905 &             276.694 &                    10.364 \\
  40 &             333.817 &                    13.020 &             409.277 &                    15.330 \\
  50 &             482.842 &                    18.833 &             582.210 &                    21.807 \\
  60 &             608.023 &                    23.715 &             864.669 &                    32.387 \\
  70 &                   - &                         - &                   - &                         - \\
\bottomrule
\end{tabular}
}
\end{table}

\subsubsection{Reinforcement Learning}\label{rl}
\begin{figure}[h]
    \centering
    \begin{minipage}{0.3\columnwidth}
        \includegraphics[width=\linewidth]{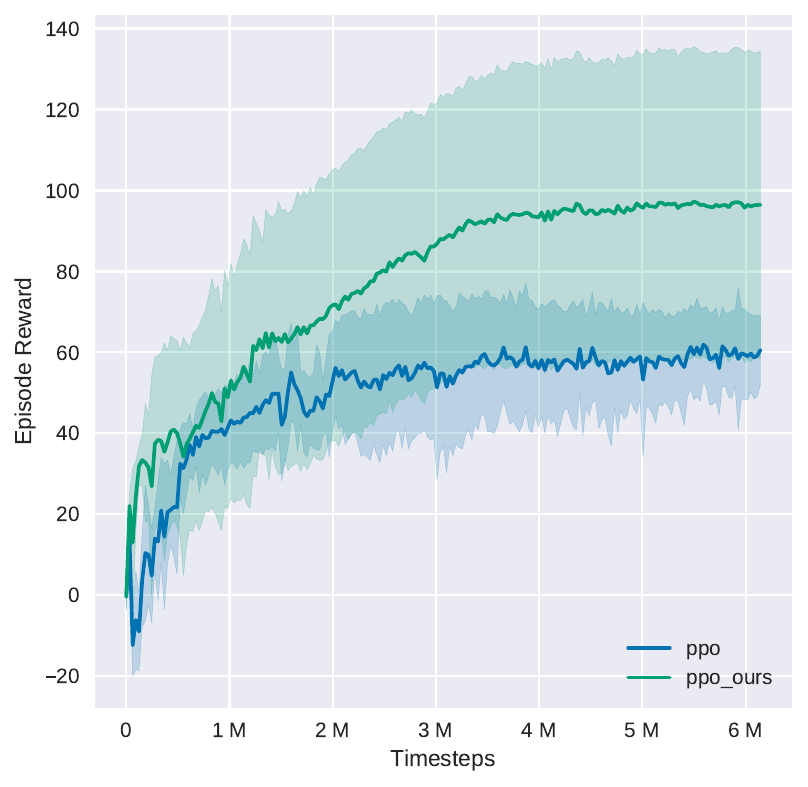}
        \caption{Swimmer}
        \label{fig:Swimmer}
    \end{minipage}
    \hfill
    \begin{minipage}{0.3\columnwidth}
        \includegraphics[width=\linewidth]{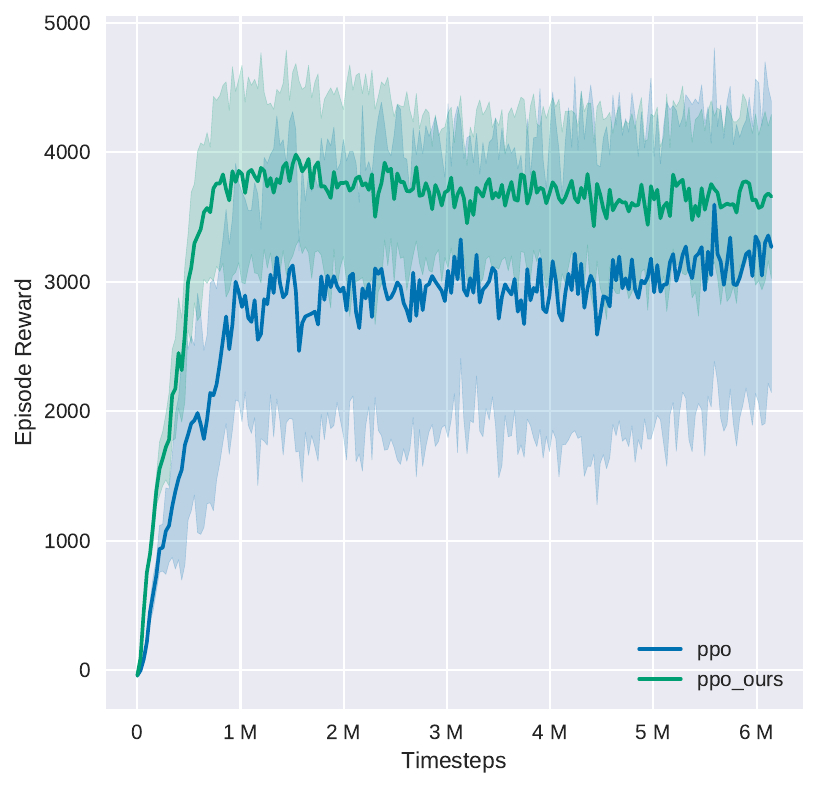}
        \caption{Ant}
        \label{fig:Ant}
    \end{minipage}
    \hfill
    \begin{minipage}{0.3\columnwidth}
        \includegraphics[width=\linewidth]{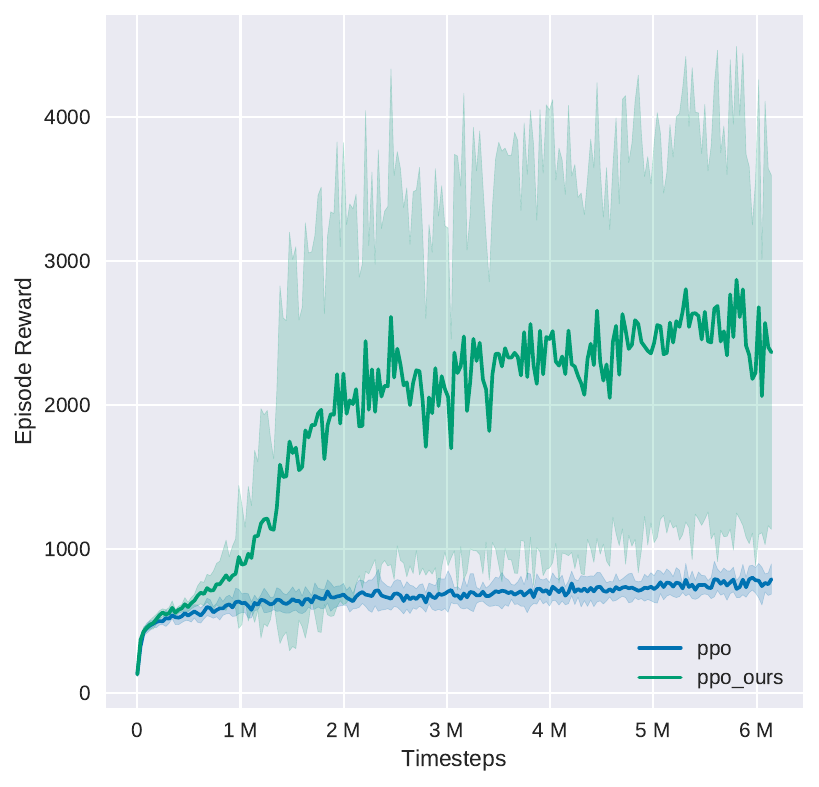}
        \caption{Humanoid}
        \label{fig:Humanoid}
    \end{minipage}
\end{figure}
\vspace{1pt}
Specifically, we tested three sets of reinforcement learning games in three environments simulated by the MuJoCo engine: Swimmer-v3, Ant-v3, and Humanoid-v3. We conducted our experiments using the Proximal Policy Optimization (PPO) algorithm~\cite{ppo}, which is widely used in the field of reinforcement learning. The experiments were based on the widely-used open-source codebase Tianshou~\cite{tianshou}. In our comparative experiments, we replaced the default optimizer of the framework with the ALTO optimizer, while keeping other hyperparameters as the default settings of the framework, such as lr=3e-4, epoch=200, and each set of experiments was conducted using ten random seeds.

In the Swimmer-v3 environment, there is a simple organism with two to three segments, moving in a two-dimensional space. Here, the challenge is to learn to coordinate its body segments to propel forward in water. In the Ant-v3 environment, the agent is a four-legged creature similar to an ant. The objective is to control this ant to move as quickly as possible in a three-dimensional space. The Humanoid-v3 environment is one of the most complex settings within the MuJoCo suite. It involves a bipedal humanoid robot that needs to learn walking, running, and maintaining balance. The challenge here lies in the high dimensionality of the action space and the requirement for intricate balance and coordination.
 
The training tasks in our three reinforcement learning experiments vary in complexity. The results show that our optimization algorithm is highly versatile.The results of three comparative experiments are shown in \cref{fig:Swimmer,fig:Ant,fig:Humanoid}.

\subsubsection{Full-Batch Training for LSTM}\label{lstm}
We also investigated the impact of large-scale training of ALTO on traditional network architectures. Specifically, for the Named Entity Recognition (NER) task on the CONLL2003 dataset, we conducted full batch training on a two-layer BiLSTM network (with a batch size equal to the size of the training dataset, which is 14,987 samples). A BiLSTM network is an extension of the traditional LSTM model, which processes data in both forward and backward directions, providing a richer context for each sequence element. This bi-directional approach allows the model to capture dependencies from both past and future states, enhancing its ability to recognize and classify entities in text sequences. In this task, we continued to identify the optimal learning rate setting for each optimization algorithm. Additionally, a uniform learning rate adjustment strategy was employed, where the learning rate was reduced to one-tenth of its original value after every 100 training epochs, with a total of 400 training epochs.Our BiLSTM network structure is shown in \cref{tab:bilstm_ner_structure}.

\begin{table}[!h]
\centering
\caption{Architecture of BiLSTM Network for NER}
\begin{tabular}{@{}ll@{}}
\toprule
Layer Type & Configuration \\
\midrule
Embedding Layer & Dimension: 100 \\
BiLSTM Layer & Hidden Dimension: 128, \\
 & Num Layers: 2, \\
 & Bidirectional: True, \\
 & Dropout: 0.5 \\
Dropout & Dropout: 0.5 \\
Fully Connected Layer & Output Dimension: 9 \\
\bottomrule
\end{tabular}
\label{tab:bilstm_ner_structure}
\end{table}

\begin{table}[!h]
\centering
\caption{Comparison of Different Optimizers for Full Batch Training on BiLSTM}
\begin{tabular}{@{}lccc@{}} 
\toprule
\textbf{Optimizer} & \textbf{Train Loss} & \textbf{Test Acc} & \textbf{Test F1} \\
\midrule
Adam      & 0.0663 & 15.00 & 9.66 \\
AdamW     & 0.0374 & 27.39 & 15.89 \\
AdaBelief & 0.0476 & 25.68 & 13.98 \\
Lamb      & 0.0164 & 61.19 & 52.20 \\
ALTO      & 0.0139 & \textbf{62.36} & \textbf{57.27} \\
\bottomrule
\end{tabular}
\label{tab:bilstm_compare}
\end{table}

The performance of all optimizers in this task is shown in \cref{tab:bilstm_compare}. All optimizers were tested across learning rates [0.005, 0.01, 0.05], with each optimization algorithm selecting the best performing learning rate. The F1 scores corresponding to each learning rate for all optimizers are shown in \cref{tab:bilstm_optimizer_f1_scores}.

\begin{table}[!h]
\centering
\caption{F1 Scores for Different Optimizers at Various Learning Rates}
\begin{tabular}{@{}lccc@{}} % 1 column for optimizer name, 3 columns for F1 scores
\toprule
\textbf{Optimizer} & \textbf{LR = 0.005} & \textbf{LR = 0.01} & \textbf{LR = 0.05} \\
\midrule
Adam      & 0 & \textbf{9.66} & 0.12 \\
AdamW     & 8.24 & \textbf{15.89} & 4.69 \\
AdaBelief & 0 & 0.24 & \textbf{13.98} \\
Lamb      & 10.35 & 20.27 & \textbf{52.20} \\
ALTO      & - & 24.43 & \textbf{57.27} \\
\bottomrule
\end{tabular}
\label{tab:bilstm_optimizer_f1_scores}
\end{table}

In this full-batch experiment targeting the CONLL2003 dataset, the hyperparameter $\beta_1$ for ALTO is set to 0.7, with all other parameters remaining at their default values.

\subsection{Achitectures and Hyperparameters}

\subsubsection{CV Projects}\label{cv_hyper}
In our CIFAR training tasks, we employed the ResNet20 architecture as part of our experimental setup. This section provides a detailed overview of the network structure and the number of parameters involved in the model.

The ResNet20 model used in our experiments is a variant of the ResNet architecture, specifically designed for the CIFAR10 and CIFAR100 dataset. The model structure is as follows:

\begin{table}[!h]
\centering
\caption{Architecture of ResNet20}
\begin{tabular}{@{}lll@{}}
\toprule
Layer Type         & Output Size & Details \\ 
\midrule
Convolution + BN   & 32x32       & 3x3 conv, 16, stride 1, padding 1 \\
BasicBlock x3      & 32x32       & [3x3 conv, 16] x 2 each \\
BasicBlock x3      & 16x16       & [3x3 conv, 32, stride 2] + [3x3 conv, 32] x 2 each \\
BasicBlock x3      & 8x8         & [3x3 conv, 64, stride 2] + [3x3 conv, 64] x 2 each \\
Global Avg Pooling & 1x1         & Avg pool \\
Fully Connected    & number of classes          & number of classes-way softmax \\
\bottomrule
\end{tabular}
\label{tab:resnet20_structure}
\end{table}

The ResNet20 model for CIFAR10 and CIFAR100, as implemented in our experiments, comprises approximately 0.27 million parameters. This count includes parameters from all convolutional layers, batch normalization layers, and the final fully connected layer, which is shown in \cref{tab:resnet20_structure}.

In the experiments conducted on the CIFAR10 and CIFAR100 datasets, the default hyperparameters of the optimizer were used. When the batch size was set to 16,384, $\beta_1$ was fixed at 0.99, whereas for a batch size of 256, $\beta_1$ was set to 0.1. For all compared optimizers at a batch size of 256, the learning rate was adjusted within the set \{0.01, 0.05, 0.1\} to select the best model, with a total of 200 training epochs. Conversely, at a batch size of 16,384, the learning rate was fine-tuned between 0.01 and 0.1 to determine the optimal model, over a course of 400 training epochs. The optimal learning rates for each optimizer on CIFAR-10 and CIFAR-100 are shown in \cref{tab:cifar10_learning_rates,tab:cifar100_learning_rates}, respectively.In all experiments of CIFAR10 and CIFAR100, the learning rate was reduced by a factor of ten after every quarter epoch of training: $\eta=\eta*0.1$ when $T\%(\frac{epoch}{4})=0$.

\begin{table}[h]
\centering
\caption{Learning Rates for CIFAR10 training}
\begin{tabular}{lll}
\toprule
Batch Size & 128 & 16384 \\
\midrule
SGD       & 0.05  & 0.1  \\
others      & 0.01  & 0.05  \\
\bottomrule
\end{tabular}
\label{tab:cifar10_learning_rates}
\end{table}

\begin{table}[h]
\centering
\caption{Learning Rates for CIFAR100 training}
\begin{tabular}{lll}
\toprule
Batch Size & 128 & 16384 \\
\midrule
SGD       & 0.05  & 0.1  \\
others      & 0.01  & 0.01  \\
\bottomrule
\end{tabular}
\label{tab:cifar100_learning_rates}
\end{table}

In the ablation study section, which focuses on the CIFAR100 dataset, we experimented with adjusting optimizer parameters for different batch sizes using the ResNet18 network architecture. The details of this network architecture are presented in \cref{tab:modifiedresnet18_structure}.

\begin{table}[!h]
\centering
\caption{Architecture of ResNet18 for CIFAR100 in Ablation Study}
\begin{tabular}{@{}lll@{}}
\toprule
Layer Type         & Output Size & Details \\ 
\midrule
Convolution + BN   & 32x32       & 3x3 conv, 64, stride 1, padding 1 \\
BasicBlock x2      & 32x32       & [3x3 conv, 64] x 2 each \\
BasicBlock x2      & 16x16       & [3x3 conv, 128, stride 2] + [3x3 conv, 128] x 2 each \\
BasicBlock x2      & 8x8         & [3x3 conv, 256, stride 2] + [3x3 conv, 256] x 2 each \\
BasicBlock x2      & 4x4         & [3x3 conv, 512, stride 2] + [3x3 conv, 512] x 2 each \\
Global Avg Pooling & 1x1         & Avg pool \\
Fully Connected    & 100         & 100-way softmax \\
\bottomrule
\end{tabular}
\label{tab:modifiedresnet18_structure}
\end{table}

In our experiments targeting ImageNet, we employed the ResNet34 as the fitting model. The specific network structure is detailed in \cref{tab:resnet34_structure,tab:resnet50_structure}.

\begin{table}[!h]
\centering
\caption{Architecture of ResNet34 for ImageNet}
\begin{tabular}{@{}lll@{}}
\toprule
Layer Type             & Output Size       & Details \\ 
\midrule
Convolution + BN       & 112x112           & 7x7 conv, 64, stride 2, padding 3 \\
Max Pooling            & 56x56             & 3x3 max pool, stride 2, padding 1 \\
BasicBlock x3          & 56x56             & [3x3 conv, 64] x 2 each \\
BasicBlock x4          & 28x28             & [3x3 conv, 128, stride 2] + [3x3 conv, 128] x 2 each \\
BasicBlock x6          & 14x14             & [3x3 conv, 256, stride 2] + [3x3 conv, 256] x 2 each \\
BasicBlock x3          & 7x7               & [3x3 conv, 512, stride 2] + [3x3 conv, 512] x 2 each \\
Global Avg Pooling     & 1x1               & Avg pool \\
Fully Connected        & number of classes & Linear layer with 1000 outputs (for ImageNet) \\
\bottomrule
\end{tabular}
\label{tab:resnet34_structure}
\end{table}

\begin{table}[!h]
\centering
\caption{Architecture of ResNet-50 for ImageNet}
\begin{tabular}{@{}lll@{}}
\toprule
Layer Type             & Output Size       & Details \\ 
\midrule
Convolution + BN       & 112x112           & 7x7 conv, 64, stride 2, padding 3 \\
Max Pooling            & 56x56             & 3x3 max pool, stride 2, padding 1 \\
Bottleneck x3          & 56x56             & [1x1 conv, 64] + [3x3 conv, 64] + [1x1 conv, 256] x 3 \\
Bottleneck x4          & 28x28             & [1x1 conv, 128] + [3x3 conv, 128] + [1x1 conv, 512] x 4 \\
Bottleneck x6          & 14x14             & [1x1 conv, 256] + [3x3 conv, 256] + [1x1 conv, 1024] x 6 \\
Bottleneck x3          & 7x7               & [1x1 conv, 512] + [3x3 conv, 512] + [1x1 conv, 2048] x 3 \\
Global Avg Pooling     & 1x1               & Avg pool \\
Fully Connected        & number of classes & Linear layer with 1000 outputs (for ImageNet) \\
\bottomrule
\end{tabular}
\label{tab:resnet50_structure}
\end{table}

In these experiments, we trained models using two different batch sizes, with corresponding adjustments in learning rate and optimizer settings. In all our training sessions for ImageNet, the range of learning rates was set to \{1e-3, 5e-3, 1e-2, 1e-1\}. 

For a batch size of 256, we selected the best model based on performance over different learning rates. The learning rate scheduler is T=90 and $\eta=\eta*0.1$ when epoch\%30=0. For the ALTO optimizer specifically, we set $\beta_1$ to 0.01, keeping other hyperparameters at their default values.

For a larger batch size of 4096, we also selected the optimal model over different learning rates. The number of training epochs and the learning rate reduction schedule remained the same as for the batch size 256 training works. In the case of the ALTO optimizer for this larger batch size, $\beta_1$ was set to 0.99, while other parameters were maintained at their default settings. For each optimizer, the learning rate and accuracy corresponding to the best-performing model we ultimately selected are shown in \cref{tab:imagenet_learning_rates_accuracy}. This approach ensured consistency in training duration and adjustment of the learning rate across different scales of batch sizes while tailoring the optimizer settings to specific batch size requirements

\begin{table}[!ht]
\centering
\caption{Learning Rates (LR) and Top-1 and Top-5 Accuracy (\%) of ResNet34 on ImageNet-1k}
\begin{tabular}{lcccccc}
\toprule
Batch Size & \multicolumn{2}{c}{256} & \multicolumn{2}{c}{4096} & LR (256) & LR (4096) \\
\cmidrule{2-7}
          & Top-1 Acc. & Top-5 Acc. & Top-1 Acc. & Top-5 Acc. & & \\
\midrule
SGD       & \textbf{70.64} & \textbf{89.68} & 49.35 & 74.41 & 1e-2 & 1e-2 \\
Adam      & 65.06 & 86.47 & 54.96 & 79.07 & 1e-3 & 1e-2 \\
AdamW     & 69.64 & 88.90 & 68.40 & 88.07 & 1e-3 & 1e-2 \\
AdaBelief & 70.12 & 89.24 & 70.18 & 89.26 & 1e-3 & 1e-2 \\
Lamb      & 69.17 & 88.81 & 70.34 & 89.55 & 5e-3 & 1e-2 \\
ALTO      & 69.95 & 88.94 & \textbf{70.83} & \textbf{89.64} & 1e-3 & 1e-2 \\
\bottomrule
\end{tabular}
\label{tab:imagenet_learning_rates_accuracy}
\end{table}

In the experiment with ResNet-50 on ImageNet-100, where batch sizes ranged from 1K to 32K, the learning rate scheduling method incorporated a combination of warmup and polynomial decay to optimize the training process. The random seed was set to 42, ensuring reproducibility across different runs. The specific learning rates and warmup epochs for each batch size are detailed in \cref{tab:imagenet50_learning_rates_warm_batch}. This approach aids in stabilizing the training in its early phases by gradually increasing the learning rate from a lower initial value during the warmup period, followed by a polynomial decay to finely tune the model as it converges to optimal solutions.

\begin{table}[h]
\centering
\caption{Learning Rate and Warmup Epochs for Different Batch Sizes}
\label{tab:imagenet50_learning_rates_warm_batch}
\begin{tabular}{|c|c|c|c|c|c|c|}
\hline
Batch Size & 1K &2K & 4K & 8K & 16K &32K \\ \hline
Learning Rate & $\frac{4}{2^{2.5}\times100}$       & $\frac{4}{2^{2.0}\times100}$       & $\frac{4}{2^{1.5}\times100}$       & $\frac{4}{2^{1.0}\times100}$       & $\frac{4}{2^{0.5}\times100}$       & $\frac{4}{2^{0.0}\times100}$       \\ \hline
Warmup Epochs& 0.625                     & 1.25                      & 2.5                       & 5                         & 10                        & 20                        \\ \hline
\end{tabular}
\end{table}

\subsubsection{NLP Projects}\label{nlp_hyper}
In the experiments focusing on natural language processing, we utilized the BERT-base model for fine-tuning on downstream tasks. This approach allowed us to evaluate the performance of various optimizers under different batch size settings. The BERT-base model, known for its efficacy in a range of NLP tasks, comprises a specific network architecture. The detailed structure of BERT-base, including its layers and configurations, is presented in \cref{tab:bert_base_structure}. At the same time, the network structure for the pre-training of the GPT architecture is shown in \cref{tab:gpt_model_structure}.

\begin{table}[!h]
\centering
\caption{BERT-base Network Structure}
\begin{tabular}{ll}
\toprule
Layer Type & Description \\
\midrule
Embedding Layers & \begin{tabular}[c]{@{}l@{}}Token, Segment, Positional Embeddings\\ Size: 768 \end{tabular} \\
\midrule
Transformer Blocks & \begin{tabular}[c]{@{}l@{}}12 Layers\\ Hidden size: 768\\ Feed-forward/filter size: 3072\\ Attention heads: 12\end{tabular} \\
\midrule
Output Layer & Linear Layer with Softmax Function \\
\bottomrule
\end{tabular}
\label{tab:bert_base_structure}
\end{table}

\begin{table}[!h]
\centering
\caption{GPT Model Architecture}
\begin{tabular}{ll}
\toprule
Component & Specification \\
\midrule
Layers & 24 Transformer blocks \\
Hidden Size & 1024 \\
Attention Heads & 16 \\
Sequence Length & 1024 \\
Max Position Embeddings & 1024 \\
\bottomrule
\end{tabular}
\label{tab:gpt_model_structure}
\end{table}

In our experiments on the CONLL2003 Named Entity Recognition (NER) task, each set of experiments was fine-tuned over 5 epochs. For training with batch size of 1024, the learning rate of AdaBelief, ALTO and Lamb were set to 1e-3; for Adam and AdamW, the learning rate was set to 5e-5 because for Adam, AdamW, and AdaBelief, increasing the learning rate similarly results in poorer convergence performance in this task. We meticulously recorded the F1-score on the validation set after each epoch. The model yielding the highest F1-score on the validation set was subsequently used for calculating metrics on the test set. \cref{tab:conll_compare_results} shows the performance of each optimizer with different learning rate in experiments with a batch size of 1024.

\begin{table}[!h]
\centering
\caption{F1-score of Experiments Using Various Optimizers with different learning rate for CONLL2003}
\begin{tabular}{lccc}
\toprule
Optimizer & 1e-3 & 5e-4 & 5e-5 \\

Adam     & 86.62 & 89.45 & 89.50 \\

AdamW    & 83.51 & 89.29 & 89.46 \\

AdaBelief & 90.38 & 90.14 & 59.20 \\

Lamb     & 90.05 & 88.74 & - \\

ALTO     & 90.59 & 88.77 & - \\

\bottomrule
\end{tabular}
\label{tab:conll_compare_results}
\end{table}

Accuracy was not utilized as a metric in this task due to its limited effectiveness in scenarios with imbalanced data. Accuracy may yield misleading results by overemphasizing the majority class while neglecting the model's performance on the minority class. Hence, more informative metrics like F1-score, precision, and recall were employed to provide a balanced evaluation of the model's performance across different classes.

For the IMDB task with batch size of 1024, the learning rate was selected in \{5e-4, 5e-5, 1e-5\}. Each set of experiments was fine-tuned over 3 epochs. The accuracy with different learning rate was shown in \cref{tab:imdb_compare_results}. In these experiments, we also shown the training loss and training accuracy at the end of each epoch are presented in the table below.

\begin{table}[!h]
\centering
\caption{Accuracy of Experiments Using Various Optimizers with different learning rate for IMDB}
\begin{tabular}{lcccc}
\toprule
Optimizer & 5e-4 & 5e-5 & 1e-5 \\

Adam     & 50.00 & 92.97 & 92.43 \\

AdamW    & 50.00 & 92.98 & 91.79 \\

AdaBelief & 91.75 & 92.82 & 91.75 \\

Lamb     & 92.19 & 92.07 & - \\

ALTO     & 93.10 & 92.19 & - \\

\bottomrule
\end{tabular}
\label{tab:imdb_compare_results}
\end{table}

In the natural language processing experiments, the batch size had the most significant impact on the MRPC task, primarily because of its small training dataset size, consisting of only 3,668 samples. For this task, the hyperparameters were set as follows: a batch size of 1024 and a $\beta_1$ value of 0.999. The learning rate was selected in \{5e-4, 5e-5, 1e-5\}. The performance effects of different learning rates corresponding to various optimizers are presented in \cref{tab:mrpc_compare_results}. Each experiment was fine-tuned over the course of 10 epochs.

\begin{table}[!h]
\centering
\caption{Accuracy of Experiments Using Various Optimizers with different learning rate for MRPC}
\begin{tabular}{lcccc}
\toprule
Optimizer & 5e-4 & 5e-5 &1e-5 \\
Adam     & 68.75 & 79.76 & 72.69 \\
AdamW    & 66.49 & 80.86 & 66.49 \\
AdaBelief & 79.53 & 70.08 & 66.49 \\
Lamb     & 80.98 & 70.60 & - \\
ALTO     & 81.39 & 69.56 & - \\
\bottomrule
\end{tabular}
\label{tab:mrpc_compare_results}
\end{table}

In our pre-training setup for the GPT model, we employed a deep network architecture consisting of 24 Transformer layers, with each layer configured to have a hidden size of 1024 and 16 attention heads. This configuration supports the processing of sequences up to 1024 tokens in length, allowing the model to capture long-range dependencies within the data. The training utilized a micro-batch size of 4, with an effective global batch size of 4096. This large-scale training was facilitated by MPI and NCCL to optimize multi-GPU communication. Furthermore, gradient clipping was applied at a threshold of 1.0 to prevent exploding gradients, a common issue in training such deep networks. Additionally, mixed precision training was leveraged to enhance training speed and reduce memory consumption without compromising model accuracy.Our pre-training experiments utilized the dataset from OpenAI's open-source dataset: the gpt-2-output-dataset, which includes a total of 1 million data entries.

All the optimizers compared in our study were experimented with different learning rates to determine the optimal results. The training used a random seed of 1234. The best learning rates for each optimization algorithm are shown in \cref{tab:gpt-learning_rates}. See \cref{fig:ppl_iter} for the PPL-iter graph of the first 1.5k iterations in our 5k iterations of training.

\begin{figure}[!h]
\centering
\includegraphics[width=1\linewidth]{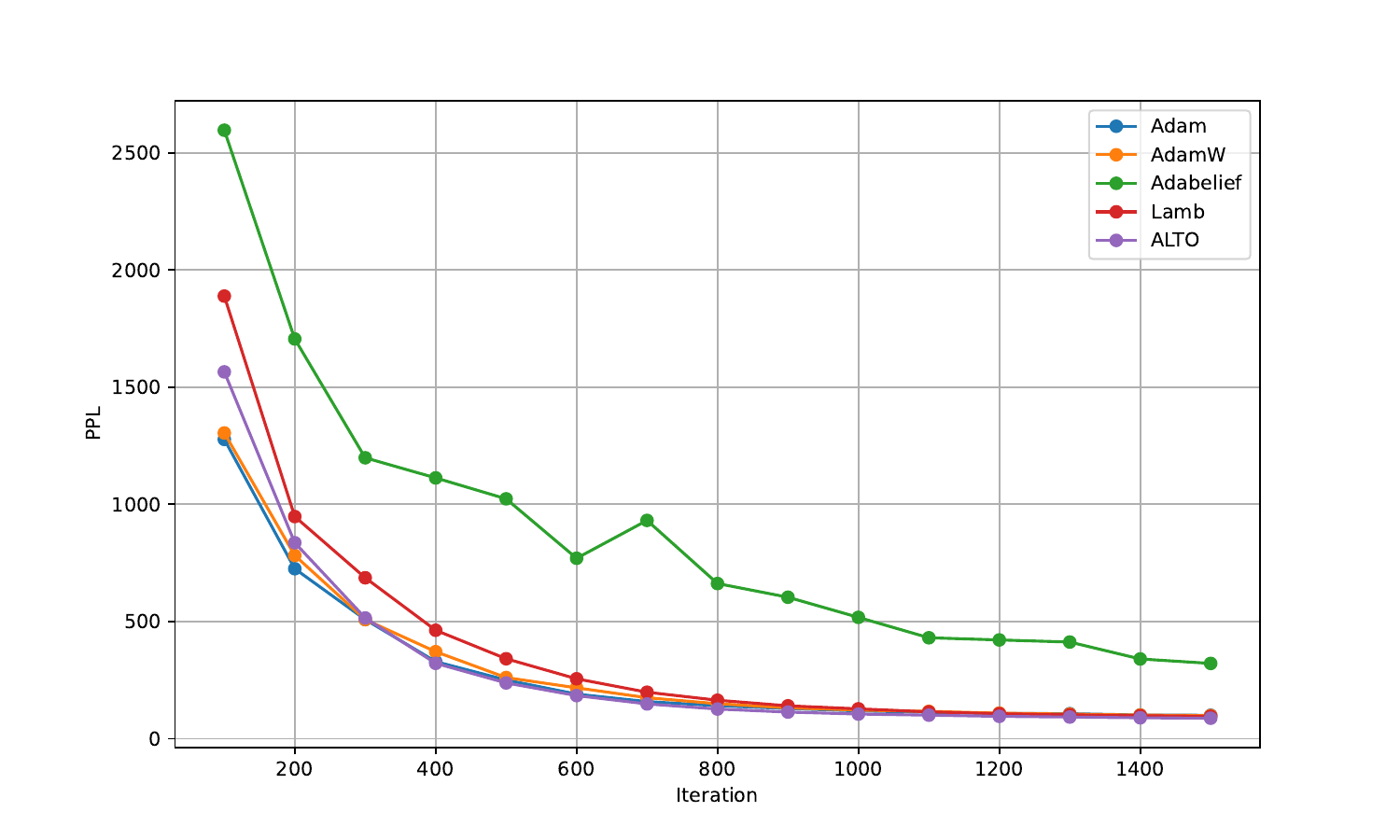}
\caption{PPL Variation over First 1500 Iterations on 345M GPT with GPT-2-Output-Dataset}
\label{fig:ppl_iter}
\end{figure}

\begin{table}[ht]
\centering
\caption{Best Learning Rates for Various Optimizers on }
\begin{tabular}{|c|c|}
\hline
\textbf{Optimizer} & \textbf{Best Learning Rate} \\
\hline
Adam      & $6e-4$ \\
\hline
AdamW     & $8e-4$ \\
\hline
AdaBelief & $5e-2$\\
\hline
LAMB      & $1e-2$ \\
\hline
ALTO      & $1e-2$ \\
\hline
\end{tabular}
\label{tab:gpt-learning_rates}
\end{table}

\subsection{Hyperparameter Tuning}

In our extended experiments, we present box plots of the training results for ResNet-18 on CIFAR-100 under various hyperparameters, with the training graphs displayed here. In the experiments focusing on $\beta_1$, we observe that the value of $\beta_1$ is generally positively correlated with the size of the batch size. In the comparative experiments for different $\alpha$, due to the closeness of the curves, we extract the graphs of the last 20 epochs. The specific display diagrams are as shown in \cref{fig:runing_beta_acc,fig:runing_alpha_acc,fig:runing_alpha_loss}.

\begin{figure}[H]
\centering
\includegraphics[width=1\linewidth]{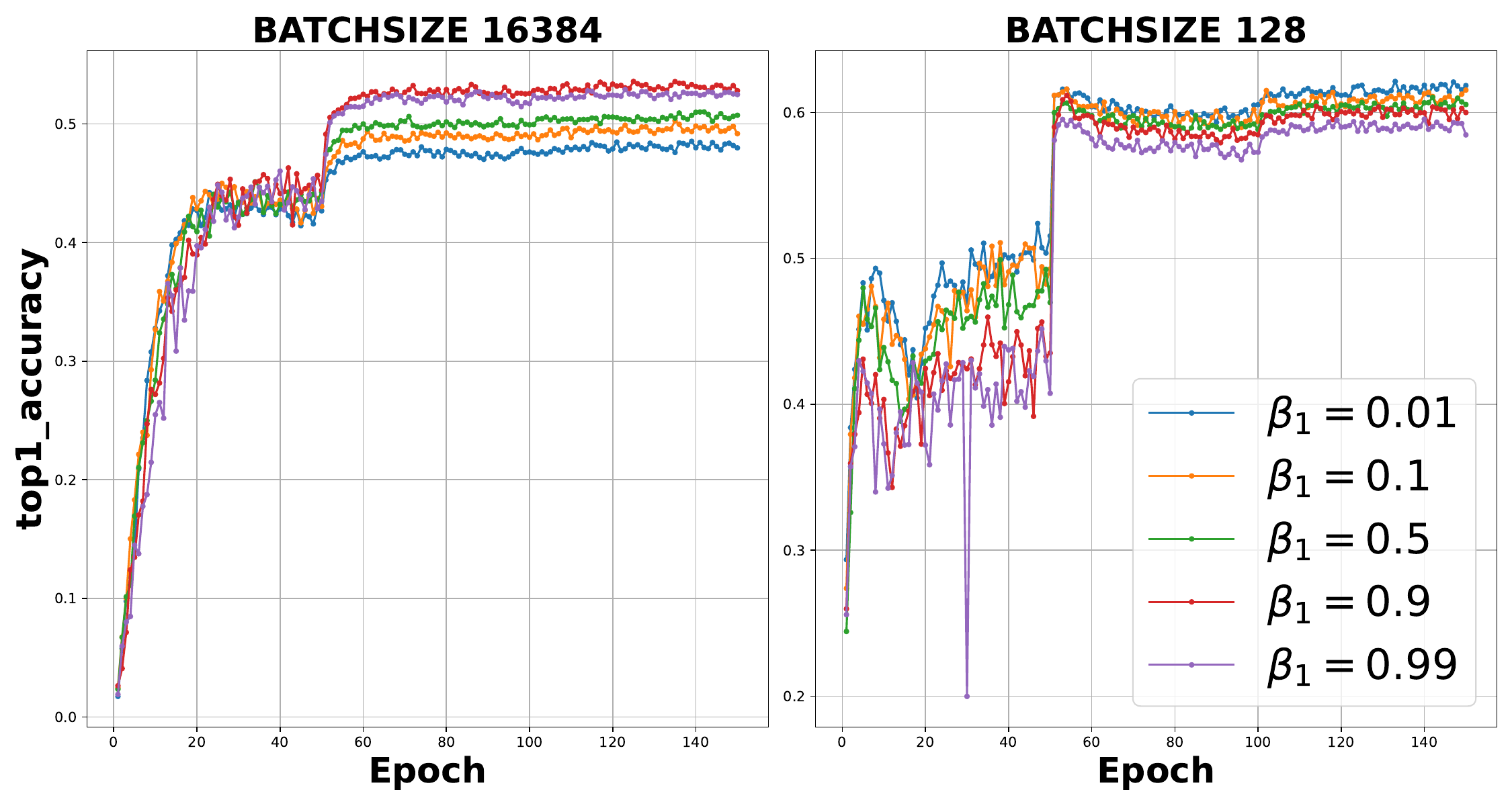}
\caption{acc-epoch for different $\beta$ on batch size 16384 and 128.}
\label{fig:runing_beta_acc}
\end{figure}

\begin{figure}[H]
\centering
\includegraphics[width=1\linewidth]{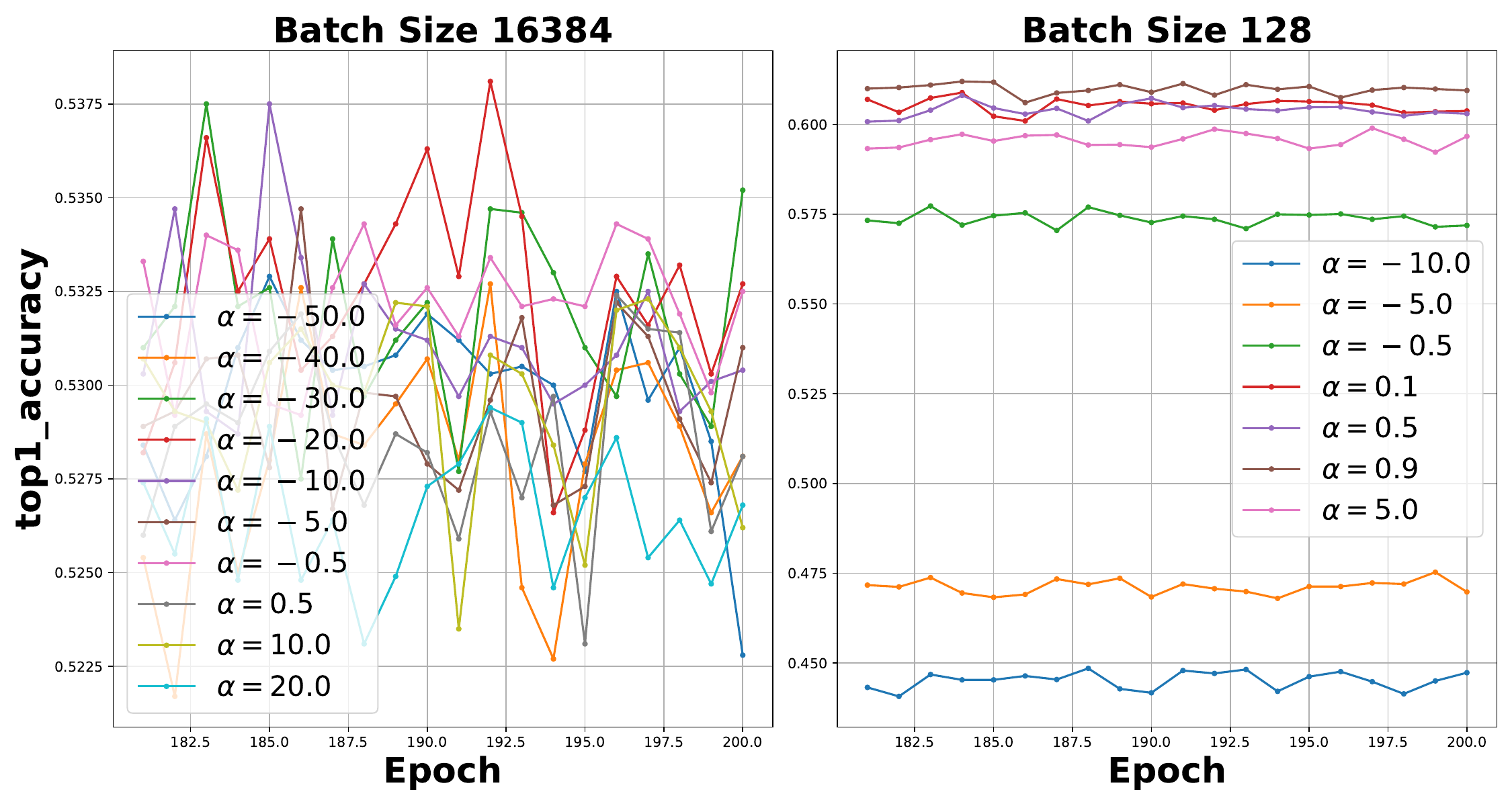}
\caption{acc-epoch for different $\alpha$ on batch size 16384 and 128.}
\label{fig:runing_alpha_acc}
\end{figure}

\begin{figure}[H]
\centering
\includegraphics[width=1\linewidth]{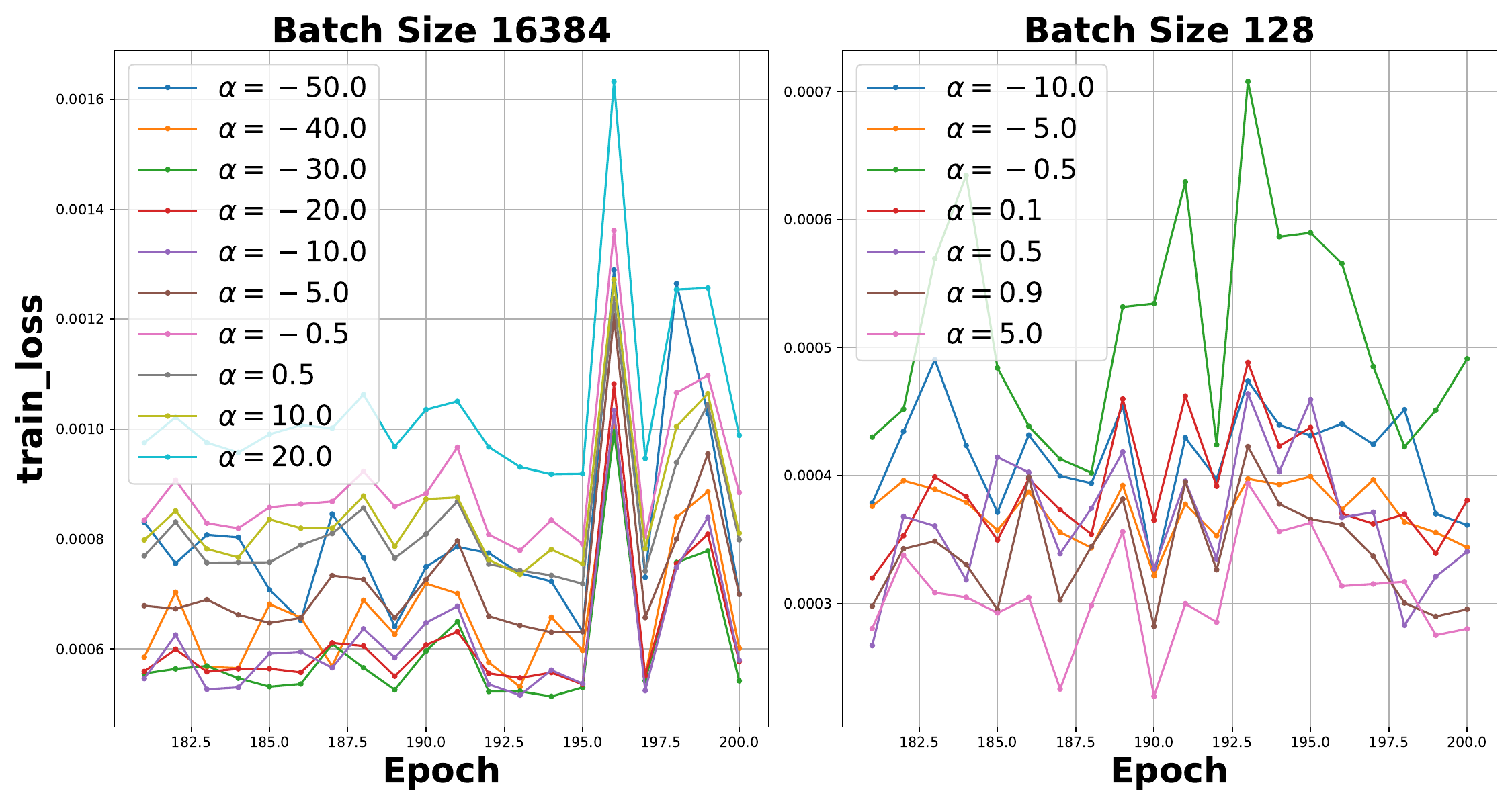}
\caption{training loss-epoch for different $\alpha$ on batch size 16384 and 128.}
\label{fig:runing_alpha_loss}
\end{figure}

\end{document}